\newtheorem{theorem}{Theorem}[section]
\newcommand{\CC}{\cellcolor{gray!25}}
\DeclareMathOperator{\DDIM}{DDIM}
\DeclareMathOperator{\DDIMInv}{DDIM-Inversion}
\definecolor{iccvblue}{rgb}{0.21,0.49,0.74}
\title{Golden Noise for Diffusion Models: A Learning Framework}
\author{%
  Zikai Zhou$^{1}$,\space\space\space
  Shitong Shao$^{1}$,\space\space\space
  Lichen Bai$^{1}$,\space\space\space
  Shufei Zhang$^{2}$, \space\space\space
  Zhiqiang Xu$^{3}$,\space\space\space
  Bo Han$^{4}$,\space\space\space
  Zeke Xie$^{1}$\thanks{Corresponding author.} \\
  % $^1$The Hong Kong University of Science and Technology~(GuangZhou)\\
  $^1$ HKUST-GZ \space\space\space $^2$ Shanghai AI Lab \space\space\space $^3$ MBZUAI \space\space\space  $^4$ HKBU \\
  % $^2$Mohamed bin Zayed University of Artificial Intelligence\\
  % $^3$ Shanghai Artificial Intelligence Laboratory \space\space\space $^4$The Hong Kong Baptist University\\
  \scriptsize{\texttt{\{zikaizhou, sshao213, lichenbai, zekexie\}@hkust-gz.edu.cn
  }}\\
\scriptsize{\texttt{ zhangshufei@pjlab.org.cn\space\space\space zhiqiang.xu@mbzuai.ac.ae \space\space\space bhanml@comp.hkbu.edu.hk} }\\
{\texttt{\small Github: \url{https://github.com/xie-lab-ml/Golden-Noise-for-Diffusion-Models}}}
}
\begin{document}
\twocolumn[{%
\renewcommand\twocolumn[1][]{#1}%
\maketitle
\vspace{-0.3in}
\begin{center}\centering\includegraphics[width=1.0\linewidth]{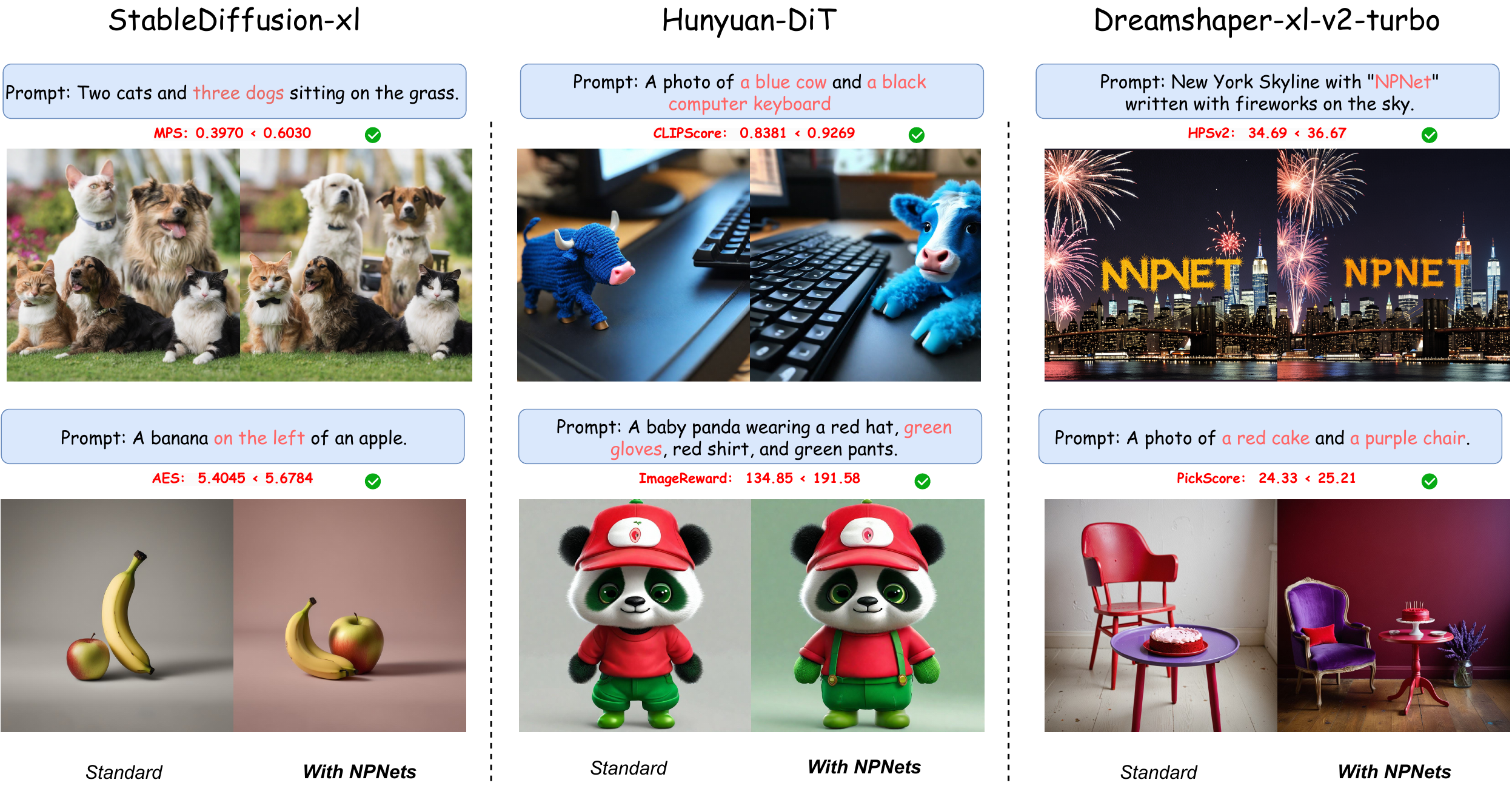}
\captionof{figure}{We visualize images synthesized by 3 different diffusion models and evaluate them using 6 human preference metrics. Images for each prompt are synthesized using the same random seed. These images with NPNet demonstrate a noticeable improvement in overall quality, aesthetic style, and semantic faithfulness, along with numerical improvements across all six metrics. More importantly, our NPNet is applicable to various diffusion models, showcasing strong generalization performance with broad application potential. More visualization results are in Appendix~\ref{figure:opening_new}.}
\label{figure:opening}
\end{center}%
}]

\maketitle

% ------------------ Abstract ----------------
\begin{abstract}
Text-to-image diffusion model is a popular paradigm that synthesizes personalized images by providing a text prompt and a random Gaussian noise. While people observe that some noises are ``golden noises'' that can achieve better text-image alignment and higher human preference than others, we still lack a machine learning framework to obtain those golden noises. To learn golden noises for diffusion sampling, we mainly make three contributions in this paper. First, we identify a new concept termed the \textit{noise prompt}, which aims at turning a random Gaussian noise into a golden noise by adding a small desirable perturbation derived from the text prompt. Following the concept, we first formulate the \textit{noise prompt learning} framework that systematically learns ``prompted'' golden noise associated with a text prompt for diffusion models. Second, we design a noise prompt data collection pipeline and collect a large-scale \textit{noise prompt dataset}~(NPD) that contains 100k pairs of random noises and golden noises with the associated text prompts. With the prepared NPD as the training dataset, we trained a small \textit{noise prompt network}~(NPNet) that can directly learn to transform a random noise into a golden noise. The learned golden noise perturbation can be considered as a kind of prompt for noise, as it is rich in semantic information and tailored to the given text prompt. Third, our extensive experiments demonstrate the impressive effectiveness and generalization of NPNet on improving the quality of synthesized images across various diffusion models, including SDXL, DreamShaper-xl-v2-turbo, and Hunyuan-DiT. Moreover, NPNet is a small and efficient controller that acts as a plug-and-play module with very limited additional inference and computational costs, as it just provides a golden noise instead of a random noise without accessing the original pipeline. 
\end{abstract}
% ------------------ Introduction ----------------
% ---------------
% \vspace{-0.6in}

% ----------------
\vspace{-0.1in}
\section{Introduction}
Image synthesis that are precisely aligned with given text prompts remains a significant challenge for text-to-image (T2I) diffusion models~\citep{BetkerImprovingIG, chen2023pixartalphafasttrainingdiffusion, cvpr22_sd, peebles2023scalablediffusionmodelstransformers, pernias2023wuerstchenefficientarchitecturelargescale}. 
%The main sources of variation in images produced by pretrained text-to-image diffusion models are the text prompt and the initial noise controlled by the random seed.
Previous studies~\citep{yu2024uncoveringtextembeddingtexttoimage, wu2022uncoveringdisentanglementcapabilitytexttoimage, toker2024diffusionlensinterpretingtext,kolors, liu2024alignmentdiffusionmodelsfundamentals} have investigated the influence of text embeddings on the synthesized images and leveraged these embeddings for training-free image synthesis. It is well known that text prompts significantly matter to the quality and fidelity of the synthesized images. However, image synthesis is induced by both the text prompts and the noise. Variations in the noise can lead to substantial changes in the synthesized images, as even minor alterations in the noise input can dramatically influence the output~\citep{xu2024goodseedmakesgood,qi2024noisescreatedequallydiffusionnoise}. This sensitivity underscores the critical role that noise plays in shaping the final visual representation, affecting both the overall aesthetics and the semantic faithfulness between the synthesized images and the provided text prompt.

Recent studies~\citep{lugmayr2022repaintinpaintingusingdenoising, guo2024initnoboostingtexttoimagediffusion, chen2024tinoedittimestepnoiseoptimization, qi2024noisescreatedequallydiffusionnoise, chefer2023attendandexciteattentionbasedsemanticguidance, ahn2024noiseworthdiffusionguidance} observe that some selected or optimized noises are golden noises that can help the T2I diffusion models to produce images of better semantic faithfulness with text prompts, and can also improve the overall quality of the synthesized images. These methods~\citep{chefer2023attendandexciteattentionbasedsemanticguidance, guo2024initnoboostingtexttoimagediffusion} incorporate extra modules like attention to reduce the truncate errors during the sampling process, showing promising results on the compositional generalization task. However, they are often not widely adopted in practice for several reasons. First, they often struggle to generally transfer to various benchmark datasets or diffusion models but only work for some specific tasks. Second, these methods often introduce significant time delays in order to optimize the noises during the reverse process. Third, they require in-depth modifications to the original pipelines when applied to different T2I diffusion models with varying architectures. Fourth, they need specific subject tokens for each prompt to calculate the loss of certain areas, which are unrealistic requirements for real users. These not only significantly complicate the original inference pipeline but also raise concerns regarding the generalization ability across various T2I diffusion models and datasets. 

In light of the aforementioned research, we pose several critical questions: \textit{1) Can we formulate obtaining the golden noises as a machine learning problem so that we can predict them efficiently with only one model forward inference? 2) Can such a machine learning framework generalize well to various noises, prompts, and even diffusion models?} Fortunately, the answers are affirmative. In this paper, we mainly make three contributions:

First, we identify a new concept termed \textit{noise prompt}, which aims at turning a random noise into a golden noise by adding a small desirable perturbation derived from the text prompt. The golden noise perturbation can be considered as a kind of prompt for noise, as it is rich in semantic information and tailored to the given text prompt. Building upon this concept, we formulate a \textit{noise prompt learning} framework that learns ``prompted'' golden noises associated with text prompts for diffusion models.

Second, to implement the formulated \textit{noise prompt learning} framework, we propose the training dataset, namely the \textit{noise prompt dataset}~(NPD), and the learning model, namely the \textit{noise prompt network}~(NPNet). Specifically, we design a noise prompt data collection pipeline via \textit{re-denoise sampling}, a way to produce noise pairs. We also incorporate AI-driven feedback mechanisms to ensure that the noise pairs are highly valuable. This pipeline enables us to collect a large-scale training dataset for noise prompt learning, so the trained NPNet can directly transform a random Gaussian noise into a golden noise to boost the performance of the T2I diffusion model. %NPNet helps the T2I diffusion model improve both the visual fidelity and semantic relevance of generated results.

\begin{figure*}[!t]
\centering
\includegraphics[width=0.95\textwidth,trim={0cm 0cm 0cm 0cm},clip]{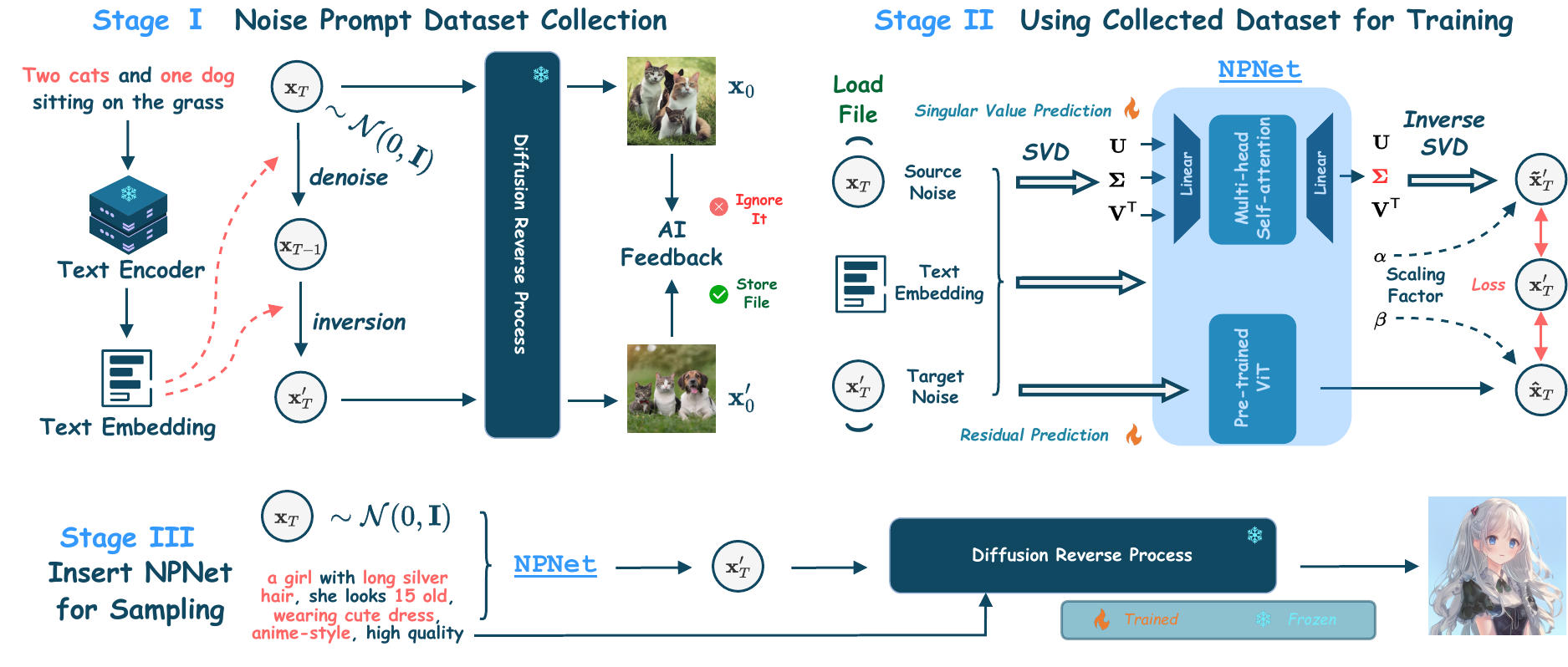}
\vspace{-0.15in}
\caption{Our workflow diagram consists of three main stages. \textit{Stage I:} We begin by denoising the original random Gaussian noise $\mathbf{x}_T$ to obtain $\mathbf{x}_{T-1}$, and then use $\DDIMInv(\cdot)$ to obtain inverse $\mathbf{x}^\prime_{T}$ with more semantic information. Both synthesized images $\mathbf{x}_0$ and $\mathbf{x}^\prime_0$ are filtered by the human preference model, such as HPSv2, to ensure the dataset is both diverse and representative. \textit{Stage II:} After collecting NPD, we input the original noise~(source noise) $\mathbf{x}_T$, inverse noise~(target noise) $\mathbf{x}^\prime_T$ and text prompt $\mathbf{c}$ into the NPNet, where the noises are processed by the \textit{singular value predictor} and the \textit{residual predictor}, and text prompt $\mathbf{c}$ is encoded by the text encoder $\mathcal{E}$($\cdot$) of the T2I diffusion model, resulting in the golden noise. \textit{Stage III:} Once trained, our NPNet can directly convert the random Gaussian noise into a golden noise before inputting T2I diffusion models, boosting the performance of these models.}
\label{figure:pipeline}
\end{figure*}

% do not find the citation for dreamshaper
Third, we conduct extensive experiments across various mainstream diffusion models, including StableDiffusion-xl~(SDXL)~\citep{SDXL}, DreamShaper-xl-v2-turbo and Hunyuan-DiT~\citep{li2024hunyuanditpowerfulmultiresolutiondiffusion}, with 7 different samplers on 4 different datasets. We evaluate our model by utilizing 6 human preference metrics including Human Preference Score v2~(HPSv2)~\citep{wu2023humanpreferencescorev2}, PickScore~\citep{kirstain2023pickapicopendatasetuser}, Aesthetic Score~(AES)~\citep{AES}, ImageReward~\citep{xu2023imagerewardlearningevaluatinghuman}, CLIPScore~\citep{hessel2022clipscorereferencefreeevaluationmetric} and Multi-dimensional Preference Score~(MPS)~\citep{MPS}. As illustrated in Fig.~\ref{figure:opening}, by leveraging the learned golden noises, not only is the overall quality and aesthetic style of the synthesized images visually enhanced, but all metrics also show significant improvements, demonstrating the effectiveness and generalization ability of our NPNet.  Furthermore, the NPNet is a compact and efficient neural network that functions as a plug-and-play module, introducing only a \textbf{3\%} extra inference time per image compared to the standard pipeline, while requiring approximately 3\% of the memory required by the standard pipeline. This underscores the practical applicability of NPNet in real-world scenarios.

% For instance, on GenEval, our NPNet let SDXL improve the classical evaluation metric HPSv2 by 18\%~\textbf{(24.04$\rightarrow$28.41)}, which even surpasses a recent much stronger DiT-based diffusion model Hunyuan-DiT~\textbf{(27.78)}.
% ------------------ Background ----------------
\section{Preliminaries}
\label{sec:preliminaries}
We first present preliminaries about DDIM and DDIM Inversion and the classifier-free guidance. Due to the space constraints, we introduce the related work in Appendix~\ref{appendix:related_work}.

Given the Gaussian noise $\epsilon_t \sim \mathcal{N}(0, \mathbf{I})$, we denote the forward process of diffusion models as $\mathbf{x}_t = \alpha_t \mathbf{x}_0 + \sigma_t \epsilon_t$, where the $t \in \{0, 1, \cdots, T\}$. Here, $\alpha_t$ and $\sigma_t$ are predefined noise schedules, and $x_0$ is the original image.

\vspace{-0.1in}
\paragraph{DDIM and DDIM Inversion.} Denoising diffusion implicit model~(DDIM)~\citep{ddim} is an advanced deterministic sampling technique, deriving an implicit non-Markov sampling process of the diffusion model. It allows for faster synthesis while maintaining the quality of synthesized samples. Its reverse process can be formulated as:

\vspace{-0.2in}
\begin{equation}
% \fontsize{7pt}{10pt}
    \begin{split}
        \mathbf{x}_{t-1} &= \DDIM(\mathbf{x}_t)\\ 
        &= \alpha_{t-1} \left(\frac{\mathbf{x}_t - \sigma_t\epsilon_\theta(\mathbf{x}_t, t)}{\alpha_t} \right) + \sigma_{t-1} \epsilon_\theta(\mathbf{x}_t, t)
    \end{split}
    \label{equ:ddim_reverse}
\end{equation}
Using DDIM to add noise is called DDIM-Inversion:

\vspace{-0.2in}
\begin{equation}
% \fontsize{7pt}{10pt}
    \begin{split}
        \mathbf{x}_t &= \DDIMInv(\mathbf{x}_{t-1}) \\
        &= \frac{\alpha_t}{\alpha_{t-1}}\mathbf{x}_{t-1} + \left(\sigma_t - \frac{\alpha_t}{\alpha_{t-1}}\sigma_{t-1}\right)\epsilon_\theta(\mathbf{x}_t, t)
    \end{split}
    \label{equ:ddim_inversion}
\end{equation}

\paragraph{Classifier-free Guidance~(CFG).} Classifier-free guidance~\citep{nips2021_classifier_free_guidance} allows for better control over the synthesis process by guiding the diffusion model towards desired conditions, such as text prompt, to enhance the quality and diversity of synthesized samples. The predicted noise $\epsilon_{pred}$ with CFG at timestep $t$ can be formulated as:

\vspace{-0.2in}
\begin{equation}
% \fontsize{7pt}{10pt}
    \begin{split}
        \epsilon_{pred} = (\omega + 1)\epsilon_\theta(\mathbf{x}_t, t|\mathbf{c}) - \omega\epsilon_\theta(\mathbf{x}_t, t|\varnothing),
    \end{split}
    \label{equ:classifer-free-guidance}
\end{equation}
where we denote the $\mathbf{c}$ as the text prompt, $\omega$ as the CFG scale. Based on this, the denoised image $\mathbf{x_{t-1}}$ by using $\DDIM$($\cdot$) can be written as:
% \vspace{-0.2in}
\begin{equation}
\fontsize{8pt}{10pt}
    \begin{split}
        \mathbf{x}_{t-1} &= \alpha_{t-1} \left(\frac{\mathbf{x}_t - \sigma_t\left[(\omega + 1)\epsilon_\theta(\mathbf{x}_t, t|\mathbf{c}) - \omega\epsilon_\theta(\mathbf{x}_t, t|\varnothing)\right]}{\alpha_t} \right) \\
        &+ \sigma_{t-1} \left[(\omega + 1)\epsilon_\theta(\mathbf{x}_t, t|\mathbf{c}) - \omega\epsilon_\theta(\mathbf{x}_t, t|\varnothing)\right]\\
    \end{split}
    \label{equ:cfg_inversion}
\end{equation}

% ---------------- Method ----------------
\section{Noise Prompt Learning}
\label{sec:method}
%In this section, we first introduce the concept of ``noise prompt learning'', and in the following subsections we introduce the noise prompt dataset collection~(Sec.~\ref{sec:collection}), the noise prompt~(Sec.~\ref{sec:training}) network training, and the inference with the noise prompt network~(Sec.~\ref{sec:inference}). We provide the pseudo codes for each stage in Appendix Alg.~\ref{algo:collection}, Alg.~\ref{algo:training} and Alg.~\ref{algo:inference}, respectively.

In this section, we present the methodology of noise prompt learning, including NPD collection, NPNet design and training, as well as sampling with NPNet.

\textit{Noise prompt} can be considered as a kind of special prompt, which aims at turning a random noise into a golden noise by adding a small desirable perturbation derived from the text prompt. Analogous to text prompts, appropriate noise prompts can enable diffusion models to synthesize higher-quality images that are rich in semantic information. As illustrated on the left in Appendix Fig.~\ref{figure:noise_prompt}, \textit{text prompt learning} in large language models~\citep{liu2021pretrainpromptpredictsystematic} focuses on learning how to transform a text prompt into a more desirable version. Similarly, \textit{noise prompt learning} in our work seeks to learn how to convert the random Gaussian noise into the golden noise by adding a small, desirable perturbation derived from the text prompt. Using the golden noise, the diffusion model can synthesize images with higher quality and semantic faithfulness. Defining it as a machine learning problem, we are the first to formulate the \textit{noise prompt learning} framework, as illustrated on the right in Appendix Fig.~\ref{figure:noise_prompt}. Given the training set $\mathcal{D} := \{\mathbf{x}_{T_{i}}, \mathbf{x}^\prime_{T_{i}}, \mathbf{c}_i\}_{i=1}^{\lvert \mathcal{D} \rvert}$ consisting of source noises $\mathcal{X}$, target noises $\mathcal{X}^\prime$ and text prompts $\mathcal{C}$, loss function $\ell$ and the neural network $\phi$, the general formula for the \textit{noise prompt learning} task is:
\begin{equation}
\begin{aligned}
   \phi^* = \operatorname{arg\,min}_{\phi}\mathbb{E}_{(\mathbf{x}_{T_{i}}, \mathbf{x}^\prime_{T_{i}},\mathbf{c}_i)\sim \mathcal{D}}[\ell (\phi(\mathbf{x}_{T_i}, \mathbf{c}_i),\mathbf{x}^\prime_{T_i})].
\end{aligned}
\label{method:machine_learning}
\end{equation}
In summary, our goal is to learn the optimal neural network model $\phi^*$ trained on the training set $\mathcal{D}$. We present the data-training-inference workflow diagram with three stages in Appendix Fig.~\ref{figure:noise_prompt} and provide the pseudocodes for each stage in Appendix Alg.~\ref{algo:collection}, Alg.~\ref{algo:training} and Alg.~\ref{algo:inference}, respectively.

\subsection{Noise Prompt Dataset Collection}
\label{sec:collection}
In this subsection, we outline the training data collection pipeline, which consists of collecting noise pairs and AI-feedback-based data selection, as shown in Fig.~\ref{figure:pipeline} \textit{Stage I}.

\vspace{-0.1in}
\paragraph{Re-denoise Sampling Produces Noise Pairs.}\label{sec:collect_noise} \textit{How to collect noises with desirable semantic information?} \citet{meng2023distillationguideddiffusionmodels} reported that adding the random noise at each timestep during the sampling process and then re-denoising, leads to a substantial improvement in the semantic faithfulness of the synthesized images. This motivated us to propose a simple and direct approach called \textsc{re-denoise sampling}. Instead of directly adding noise to each timestep during the reverse process, we propose to utilize $\DDIMInv(\cdot)$ to obtain the noise from the previous step. Specifically, the joint action of DDIM-Inversion and CFG can induce the initial noise to attach semantic information. We denote the CFG scale within $\DDIM(\cdot)$ and $\DDIMInv(\cdot)$ as $\omega_l$ and $\omega_w$, respectively. It is sufficient to ensure that the initial noise can be purified stably and efficiently by $\mathbf{x}^\prime_t$=$\DDIMInv(\DDIM(\mathbf{x}_t))$ with $\omega_l> \omega_w$. Utilizing this method, the synthesized image from $\mathbf{x}^\prime_t$ is more semantic information contained with higher fidelity, compared with the synthesized image from $\mathbf{x}_t$. We call the inverse noise $\mathbf{x}^\prime_t$ target noise, and the noise $\mathbf{x}_t$ source noise. The visualization results are shown in Appendix Fig.~\ref{figure:inversion_vis} and ~\ref{figure:inv_cluster}. The mechanism behind this method is that $\DDIMInv(\cdot)$ injects semantic information by leveraging the CFG scale inconsistency. We present \textbf{theoretical understanding} of this mechanism in Theorem~\ref{theroem:foward_inversion}.

\paragraph{Data Selection with the Human Preference Model.}\label{sec:filter_data} While employing \textit{re-denoise sampling} can help us collect noises with enhanced semantic information, it also carries the risk of introducing extra noises, which may lead to synthesizing images that do not achieve the quality of the originals. To mitigate this issue, we utilize a human preference model for data selection. This model assesses the synthesized images based on human preferences, allowing us to retain those noise samples that meet our quality standards. The reservation probability for data selection can be formulated as $\mathbb{I}[s_0 + m < s^\prime_0]$, where $m$ is the filtering threshold, $\mathbb{I}[\cdot]$, $s_0$ and $s^\prime_0$ are the indicator function, human preference scores of denoised images from $\mathbf{x}_0$ and $\mathbf{x}^\prime_0$, respectively. If the noise samples meet this criterion, we consider them to be valuable noise pairs and proceed to collect them.

By implementing this filtering process, we aim to balance the benefits of \textit{re-denoise sampling} with the integrity of the synthesized outputs. For the selection strategies, we introduce them in Appendix Sec.~\ref{appendix:selection_strategies}.

% Please add the following required packages to your document preamble:
% \usepackage{multirow}
% \usepackage[table,xcdraw]{xcolor}
% Beamer presentation requires \usepackage{colortbl} instead of \usepackage[table,xcdraw]{xcolor}
\begin{table*}[!t]
\begin{center}

\caption{Summary of the experiments in this paper.}

\renewcommand\arraystretch{1.2}
\setlength{\tabcolsep}{10pt}
\resizebox{.8\linewidth}{!}{%
\begin{tabular}{ccc}
\hline
\multicolumn{2}{c}{Experiments} & Results \\ \hline
\multicolumn{2}{c}{Main Experiments} & \begin{tabular}[c]{@{}c@{}}Table~\ref{table:sdxl_main_exp}, ~\ref{table:ds_dit_main_exp}, Appendix Table~\ref{tab:t2i} and ~\ref{table:geneval}\\ Fig.~\ref{figure:sdxl_winning_rate}, Appendix Fig.~\ref{figure:fid}, ~\ref{figure:winning_rate_step_50} and ~\ref{figure:winning_rate}\end{tabular} \\ \hline
\multirow{2}{*}{Generalization} & Various Datasets and Models & \begin{tabular}[c]{@{}c@{}}Table~\ref{table:sdxl_main_exp}, ~\ref{table:ds_dit_main_exp}, Appendix Table~\ref{table:finetune_generalization}, ~\ref{table:seed_generalization} and ~\ref{table:few_step_model} \\ Fig.~\ref{figure:sdxl_winning_rate}\end{tabular} \\ \cline{2-3} 
 & Stochastic/Determinstic Samplers & \begin{tabular}[c]{@{}c@{}}Appendix Table~\ref{table:time_cost}\\ Fig.~\ref{figure:sampler_inference} and Appendix Fig.~\ref{figure:sampler_inference_aes_ir}\end{tabular} \\ \hline
\multicolumn{2}{c}{Orthogonal Experiment} & Table~\ref{table:dpo_ays}, Appendix Table~\ref{table:optim_baseline} and Fig.~\ref{figure:baseline_winningrate} \\ \hline
\multicolumn{2}{c}{Robustness to Hyper-parameters} & \begin{tabular}[c]{@{}c@{}}Appendix Table~\ref{table:ablation_params} and ~\ref{table:scaling_exp}\\ Fig.~\ref{figure:sdxl_metric_plot}, Appendix Fig.~\ref{figure:winning_rate_step_50}, ~\ref{figure:ds_dit_inference_step} and ~\ref{figure:winning_rate}\end{tabular} \\ \hline
\multicolumn{2}{c}{Ablation Studies} & Table~\ref{table:ablation_backbone}, Appendix Table~\ref{table:filter_metrics} and ~\ref{table:filter_gap} \\ \hline
\multicolumn{2}{c}{Efficiency} & \begin{tabular}[c]{@{}c@{}}Appendix Table~\ref{table:time_cost}\\ Fig.~\ref{figure:sampler_inference} and Appendix Fig.~\ref{figure:sampler_inference_aes_ir}\end{tabular} \\ \hline
\multicolumn{2}{c}{Downstream Task} & Appendix Fig.~\ref{figure:controlnet} \\ \hline
\multicolumn{2}{c}{Visualization} & Fig.~\ref{figure:opening}, Appendix Fig.~\ref{figure:controlnet}, ~\ref{figure:opening_new}, ~\ref{figure:lcm_vis}, ~\ref{figure:pcm_vis}, ~\ref{figure:sdxl-lighting_vis}, ~\ref{figure:re-sampling}, ~\ref{figure:pag}, ~\ref{figure:cfgpp}, ~\ref{figure:apg}, ~\ref{figure:freeu}, ~\ref{figure:sag} and ~\ref{figure:seg}. \\
\bottomrule
\end{tabular}
}
\vspace{-0.2in}
\label{table:summary}
\end{center}
\end{table*}

\subsection{Noise Prompt Network}
\label{sec:training}
Here, we introduce the architecture, training, inference of NPNet, as shown in Fig.~\ref{figure:pipeline} \textit{Stage II} and \textit{Stage III}.

\vspace{-0.1in}
\paragraph{Architecture Design.} The architecture of NPNet consists of two key components, including \textit{singular value prediction} and \textit{residual prediction}, as shown in Fig.~\ref{figure:pipeline} \textit{Stage II}.

%\paragraph{singular value Prediction.}
The first key model component is \textit{singular value prediction}. We obtain noise pairs through \textit{re-denoise sampling}, a process that can be approximated as adding a small perturbation to the source noises. We observe that through the \textit{singular value decomposition}~(SVD), the singular vectors of $\mathbf{x}_T$ and $\mathbf{x}^\prime_T$ exhibit remarkable similarity, albeit possibly in opposite directions, shown in Appendix Fig.~\ref{figure:eigenvector}, which may be partly explained by Davis-Kahan Theorem~\citep{stewart1990matrix,xie2023onsg}. Building upon this observation, we design an architecture to predict the singular values of the target noise, illustrated in Fig.~\ref{figure:pipeline} Stage II. We denote $\phi$($\cdot$, $\cdot$, $\cdot$) as a ternary function that represents the sum of three inputs, $f$($\cdot$) as the linear layer function, and $g$($\cdot$) as the multi-head self-attention layer. The paradigm can be formulated as:
\begin{equation}
% \fontsize{7pt}{10pt}
\begin{split}
    \mathbf{x}_T  = U \times \Sigma \times V^\mathsf{T}, \quad \tilde{\mathbf{x}}_T = \phi(U,\Sigma, V^\mathsf{T}), \\
    \tilde{\Sigma} = f(g(\tilde{\mathbf{x}}_T)), \quad
    \tilde{\mathbf{x}}^\prime_T = U \times \tilde{\Sigma} \times V^\mathsf{T},\\
\end{split}
\label{equ:singular value_prediction}
\end{equation}
where we denote $\tilde{\mathbf{x}}^\prime_T$ as the predicted target noise. This model utilizes SVD inverse transformation to effectively reconstruct the target noise. By leveraging the similarities in the singular vectors, our model enhances the precision of the target noise restoration process. 

%\paragraph{Residual Prediction.} 
The second key model component is residual prediction. In addition to \textit{singular value prediction}, we also design an architecture to predict the residual between the source noise and the target noise, as illustrated in Fig.~\ref{figure:pipeline} Stage II. We denote $\varphi$($\cdot$) as the UpSample-DownConv operation, $\varphi^\prime$($\cdot$) as the DownSample-UpConv operation, and the $\psi$($\cdot$) as the ViT model. The target noise incorporates semantic information from the text prompt $\mathbf{c}$ introduced through CFG. To facilitate the learning process, we inject this semantic information using the frozen text encoder $\mathcal{E}$($\cdot$) of the T2I diffusion model. This approach allows the model to effectively leverage the semantic information provided by the text prompt, ultimately improving the accuracy of the residual prediction. The procedure can be described as follows:
\begin{equation}
% \fontsize{7pt}{10pt}
\begin{split}
    \mathbf{e} = \mathrm{\sigma}(\mathbf{x}_T, \mathcal{E}(\mathbf{c})) \quad \hat{\mathbf{x}}_T = \varphi^\prime(\psi(\varphi(\mathbf{x}_T + \mathbf{e})),
\end{split}
\label{equ:residual_prediction}
\end{equation}
where we denote $\mathrm{\sigma}$($\cdot$, $\cdot$) as AdaGroupNorm to stabilize the training process, and $\mathbf{e}$ as the normalized text embedding. 

\vspace{-0.1in}
\paragraph{Using Collected Dataset for Training.}
\label{sec:inference} The training procedure is also illustrated in Fig.~\ref{figure:pipeline} \textit{Stage II}. To yield optimal results, we formulate the training loss as
\begin{equation}
% \fontsize{7pt}{10pt}
\begin{split}
    \quad \mathcal{L}_\mathrm{MSE} = 
    \mathrm{MSE}(\mathbf{x}^\prime_T, \mathbf{x}^\prime_{T_{pred}}), \\
    \text{ where } \mathbf{x}^\prime_{T_{pred}} = \tilde{\mathbf{x}} ^\prime_T + \beta\hat{\mathbf{x}}_T,
\end{split}
\label{equ:training}
\end{equation}
$\beta$ is a trainable parameter used to adaptively adjust the weights of the predicted residuals, and $L$ as the $\textrm{MSE}$($\cdot$) loss function. For the nuanced adjustment in how much semantic information contributes to the model's predictions, $\mathbf{x}^\prime_{T_{pred}} = \tilde{\mathbf{x}} ^\prime_T + \beta\hat{\mathbf{x}}_T$ can be rewritten as:
\begin{equation}
% \fontsize{7pt}{10pt}
\begin{split}
    \mathbf{x}^\prime_{T_{pred}} = \alpha\mathbf{e} + \tilde{\mathbf{x}} ^\prime_T + \beta\hat{\mathbf{x}}_T,\\
\end{split}
\label{equ:training_new}
\end{equation}
where we denote $\alpha$ as a trainable parameter. The values of these two parameters are shown in Appendix Table~\ref{table:alpha_beta}. we notice that $\alpha$ is very small, but it still plays a role in adjusting the influence of injected semantic information~(see Appendix Table~\ref{table:ablation_text_embedding} for experimental results). Together, $\alpha$ and $\beta$ finely control the impact of semantic information on the model’s predictions, enhancing both the semantic relevance between the text prompt and synthesized images and the diversity of these images.

\vspace{-0.1in}
\paragraph{Insert NPNet for Sampling.} The inference procedure is illustrated in Fig.~\ref{figure:pipeline} \textit{Stage III}. Once trained, our NPNet can be directly applied to the T2I diffusion model by inputting the initial noise $\mathbf{x_T}$ and prompt embedding $\mathbf{c}$ encoded by the frozen text encoder of the diffusion model. Our NPNet can effectively transform the original initial noise into the golden noise. We also provide the example code on SDXL, shown in the Appendix Fig.~\ref{figure:pesudo_code}.

% Please add the following required packages to your document preamble:
% \usepackage{multirow}
% \usepackage[table,xcdraw]{xcolor}
% Beamer presentation requires \usepackage{colortbl} instead of \usepackage[table,xcdraw]{xcolor}
\begin{table*}[!th]
\begin{center}

%\caption{Main experiments on SDXL and DreamShaper-xl-v2-turbo over various datasets. It is noted that MPS calculates the preference scores between two images, where the image that aligns more closely with human preferences receives a higher score. The total score for the two images sums to 1, meaning that if one image scores higher, the other will score correspondingly lower. We calculate the MPS scores between ``Standard'' and ``Inversion'', and between ``Standard'' and 'NPNet~(ours)', respectively. So we mark the preference score of ``Standard'' as ``$-$''.}

\caption{Main experiments on SDXL and DreamShaper-xl-v2-turbo over various datasets. Note that MPS calculates the preference scores between two images. We choose the standard sampling as the baseline and the MPS socre is always $50\%$, marked as ``$-$''.}

\renewcommand\arraystretch{1.2}
\setlength{\tabcolsep}{10pt}
\resizebox{0.8\linewidth}{!}{%
\begin{tabular}{clclccccccc}
\hline
\multicolumn{2}{c}{Model} & \multicolumn{2}{c}{Dataset} & Method & PickScore~(↑) & HPSv2~(↑) & AES~(↑) & ImageReward~(↑) & CLIPScore~(↑) & MPS(\%)~(↑) \\ \hline
\multicolumn{2}{c}{} & \multicolumn{2}{c}{} & Standard & 21.69 & 28.48 & 6.0373 & 58.01 & 0.8204 & - \\
\multicolumn{2}{c}{} & \multicolumn{2}{c}{} & Inversion~\footref{footnote:inversion} & 21.71 & 28.57 & 6.0503 & 63.27 & 0.8250 & 51.41 \\
% \multicolumn{2}{c}{} & \multicolumn{2}{c}{} & RePaint~\cite{lugmayr2022repaintinpaintingusingdenoising} & 21.77 & 28.63 & 5.9875 & 64.94 & 0.8327 & 50.79 \\
\multicolumn{2}{c}{} & \multicolumn{2}{c}{\multirow{-3}{*}{Pick-a-Pic}} & NPNet~(ours) & \CC21.86 & \CC28.68 & \CC6.0540 & \CC65.01 & \CC0.8408 & \CC52.14 \\ \cline{3-11} 
\multicolumn{2}{c}{} & \multicolumn{2}{c}{} & Standard & 22.31 & 26.72 & 5.5952 & 62.21 & 0.8077 & - \\
\multicolumn{2}{c}{} & \multicolumn{2}{c}{} & Inversion & 22.37 & 26.91 & 5.6017 & 67.09 & 0.8081 & 51.98 \\
% \multicolumn{2}{c}{} & \multicolumn{2}{c}{} & RePaint & 22.30 & 26.96 & 5.5104 & 64.07 & 0.8106 & 52.04 \\
\multicolumn{2}{c}{} & \multicolumn{2}{c}{\multirow{-3}{*}{DrawBench}} & NPNet~(ours) & \CC22.38 & \CC27.14 & \CC5.6034 & \CC70.67 & \CC0.8153 & \CC53.70 \\ \cline{3-11} 
\multicolumn{2}{c}{} & \multicolumn{2}{c}{} & Standard & 22.88 & 29.71 & \CC5.9985 & 96.63 & 0.8734 & - \\
\multicolumn{2}{c}{} & \multicolumn{2}{c}{} & Inversion & 22.89 & 29.78 & 5.9948 & 97.39 & 0.8708 & 53.03 \\
% \multicolumn{2}{c}{} & \multicolumn{2}{c}{} & RePaint & 22.91 & 29.78 & 5.9948 & 97.39 & 0.8775 & 54.63 \\
\multicolumn{2}{c}{\multirow{-12}{*}{SDXL}} & \multicolumn{2}{c}{\multirow{-3}{*}{HPD}} & NPNet~(ours) & \CC22.94 & \CC29.88 & 5.9922 & \CC98.81 & \CC0.8813 & \CC56.02 \\ \hline
\multicolumn{2}{c}{} & \multicolumn{2}{c}{} & Standard & 22.41 & 32.12 & 6.0161 & 98.09 & 0.8267 & - \\
\multicolumn{2}{c}{} & \multicolumn{2}{c}{} & Inversion & 22.40 & 32.03 & 6.0236 & 100.97 & 0.8277 & 49.14 \\
\multicolumn{2}{c}{} & \multicolumn{2}{c}{\multirow{-3}{*}{Pick-a-Pic}} & NPNet~(ours) & \CC22.73 & \CC32.69 & \CC6.0646 & \CC106.74 & \CC0.8958 & \CC52.34 \\ \cline{3-11} 
\multicolumn{2}{c}{} & \multicolumn{2}{c}{} & Standard & 22.98 & 30.39 & 5.6735 & 98.84 & 0.8186 & - \\
\multicolumn{2}{c}{} & \multicolumn{2}{c}{} & Inversion & 22.94 & 30.10 & 5.6852 & 96.74 & 0.8189 & 46.62 \\
\multicolumn{2}{c}{} & \multicolumn{2}{c}{\multirow{-3}{*}{DrawBench}} & NPNet~(ours) & \CC23.11 & \CC30.78 & \CC5.7005 & \CC108.14 & \CC0.8224 & \CC53.53 \\ \cline{3-11} 
\multicolumn{2}{c}{} & \multicolumn{2}{c}{} & Standard & 23.68 & 30.96 & \CC6.1408 & 129.89 & 0.8868 & - \\
\multicolumn{2}{c}{} & \multicolumn{2}{c}{} & Inversion & 23.67 & 31.00 & 6.0811 & 131.80 & 0.8912 & 46.94 \\
\multicolumn{2}{c}{\multirow{-9}{*}{DreamShaper-xl-v2-turbo}} & \multicolumn{2}{c}{\multirow{-3}{*}{HPD}} & NPNet~(ours) & \CC23.70 & \CC34.08 & 6.1283 & \CC135.98 & \CC0.8942 & \CC52.49 \\
\bottomrule
\end{tabular}
}
\vspace{-0.2in}
\label{table:sdxl_main_exp}
\end{center}
\end{table*}

\begin{figure*}[!t]
\centering
\includegraphics[width=1.0\textwidth, trim={0cm 1cm 1cm 0cm},clip]{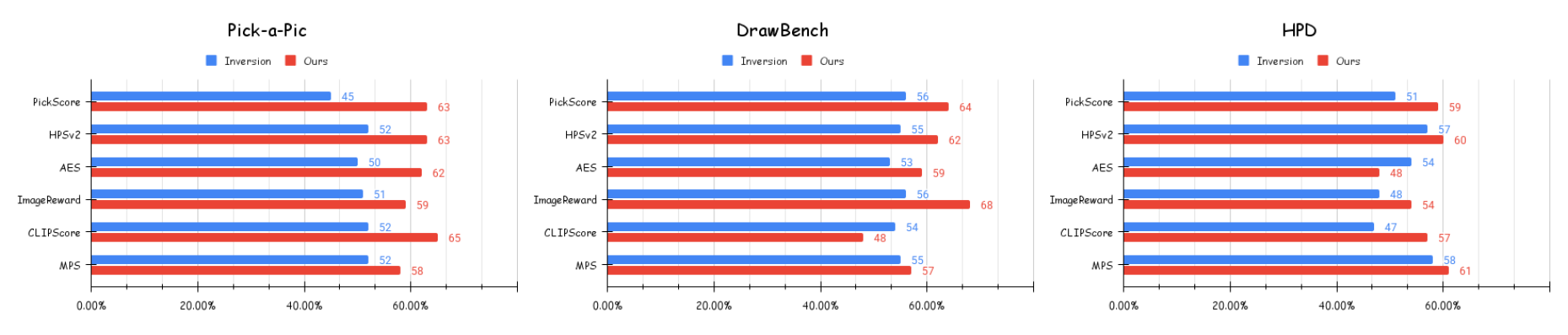}
\caption{The winning rate comparison on SDXL across 3 datasets, including Pick-a-Pic, DrawBench and HPD v2~(HPD). The results reveal that our NPNet is the only one that can consistently transform random Gaussian noise into golden noises, thereby enhancing the quality of the synthesized images, across nearly all evaluated datasets and metrics.}
\label{figure:sdxl_winning_rate}
\end{figure*}

% ---------------- Experiment ----------------
\vspace{-0.1in}
\section{Empirical Analysis}

In this section, we empirically study the effectiveness, the generalization, and the efficiency of our NPNet. We conduct a lot of experiments across various datasets on various T2I diffusion models, including SDXL, DreamShaper-xl-v2-turbo, and Hunyuan-DiT. %Additionally, we explore the robustness of our NPNet under various inference steps, various random seeds, and 7 different samplers. Moreover, we also quantify the time cost and memory usage of our NPNet. The experiments are evaluated by 6 metrics in the main experiments of Sec.~\ref{sec:main_exp} and 4 metrics in other experiments. 
Due to space constraints, we leave implementation details, related work and additional experiments in Appendix~\ref{appendix:sec:impletation_details}, ~\ref{appendix:discussion_initno} and ~\ref{appendix:additional_experiment_results}, respectively. We also leave the discussion and future direction in Appendix~\ref{appendix:discussion}. To assist readers in better understanding this paper, we have summarized the main tables and figures in Table.~\ref{table:summary}. 

\vspace{-0.15in}
\paragraph{Description of Training and Test Data.}
\label{sec:training_data}
We collect our NPD on Pick-a-pic dataset~\citep{kirstain2023pickapicopendatasetuser}, which contains 1M prompts in its training set. We randomly choose 100k prompts as our training set. For each prompt, we randomly assign a seed value in $[0, 1024]$. For testing, we use three datasets, including the first 100 prompts from the Pick-a-Pic web application, the first 100 prompts from HPD v2 test set~\citep{wu2023humanpreferencescorev2}, all 200 prompts from DrawBench~\citep{saharia2022photorealistictexttoimagediffusionmodels}, all prompts from GenEval~\citep{ghosh2023genevalobjectfocusedframeworkevaluating} and T2I-CompBench~\citep{huang2023t2icompbench}. For more details about the test data, please see Appendix Fig.~\ref{figure:dataset}. We construct three training datasets collected from Pick-a-Pic, with 100k noise pairs, 80k noise pairs, and 600 noise pairs for SDXL, DreamShaper-xl-v2-turbo, and Hunyuan-DiT.

\begin{figure*}[!t]
\centering
\includegraphics[width=.9\textwidth,trim={0cm 0cm 0cm 0cm},clip]{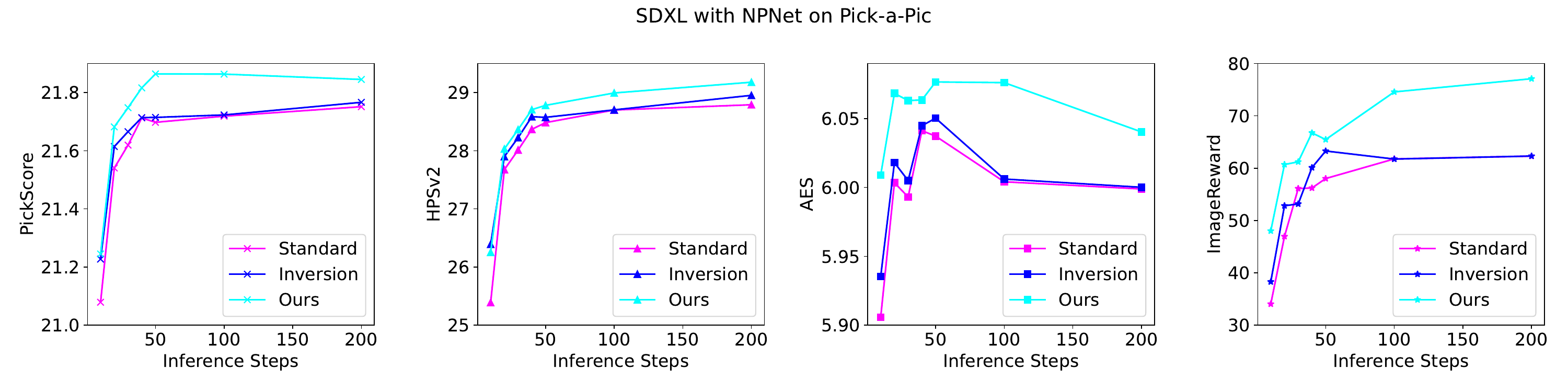}
\vspace{-0.1in}
\caption{Visualization of performance w.r.t. inference steps on SDXL on Pick-a-Pic dataset. With our NPNet, T2I diffusion models can have superior performance under various inference steps.}
\label{figure:sdxl_metric_plot}
\end{figure*}

\begin{table*}[!t]
\begin{center}
\caption{The fine-tuned NPNet showing strong cross-model generalization to enhancing Hunyuan-DiT, requiring only Hunyuan-DiT-produced 600 noise pairs for fine-tuning. }
\renewcommand\arraystretch{1.2}
\setlength{\tabcolsep}{10pt}

\resizebox{.8\linewidth}{!}{%
\begin{tabular}{clccccccc}
\hline
\multicolumn{2}{c}{Dataset} & Method & PickScore~(↑) & HPSv2~(↑) & AES~(↑) & ImageReward~(↑) & CLIPScore(\%)~(↑) & MPS(\%)~(↑) \\ \hline
\multicolumn{2}{c}{} & Standard & 21.82 & 29.82 & 6.2850 & 91.33 & 80.37 & - \\
\multicolumn{2}{c}{} & Inversion & 21.76 & 29.64 & 6.2756 & 88.77 & 80.21 & 49.08 \\
\multicolumn{2}{c}{\multirow{-3}{*}{Pick-a-Pic}} & NPNet~(ours) & \CC21.84 & \CC29.94 & \CC6.3470 & \CC100.82 & \CC81.01 & \CC51.60 \\ \cline{3-9} 
\multicolumn{2}{c}{} & Standard & 22.44 & 28.75 & 5.7152 & 91.30 & 79.40 & - \\
\multicolumn{2}{c}{} & Inversion & 22.44& 28.75 & 5.7522 & 92.86 & 79.55 & 49.25 \\
\multicolumn{2}{c}{\multirow{-3}{*}{DrawBench}} & NPNet~(ours) & \CC22.45 & \CC28.89 & \CC5.8234 & \CC96.20 & \CC80.75 & \CC51.93 \\ \cline{3-9} 
\multicolumn{2}{c}{} & Standard & 22.89 & 30.87 & 6.0793 & 99.22 & 85.68 & - \\
\multicolumn{2}{c}{} & Inversion & \CC22.90 & 30.56 & 6.0802 & 100.21 & 86.17 & 51.63 \\
\multicolumn{2}{c}{\multirow{-3}{*}{HPD}} & NPNet~(ours) & 22.89 & \CC31.20 & \CC6.1573 & \CC108.29 & \CC86.94 & \CC52.87\\

\bottomrule
\end{tabular}
}
\vspace{-1.8em}
\label{table:ds_dit_main_exp}
\end{center}
\end{table*}

\subsection{Main Results} 
\label{sec:main_exp}
We evaluate our NPNet in three different T2I diffusion models. The main results\footnote{Method ``Inversion'' means \textit{re-denoise sampling} \label{footnote:inversion}.} are shown in Table~\ref{table:sdxl_main_exp} and Table~\ref{table:ds_dit_main_exp}. In Table~\ref{table:sdxl_main_exp}, we first evaluate our NPNet with SDXL and DreamShaper-xl-v2-turbo. The results demonstrate the impressive performance of our NPNet, almost achieving the best results across all 6 metrics and 3 datasets. Due to the space limitations, we present the experiments on GenEval and T2I-CompBench in Appendix~\ref{appendix:geneval} and ~\ref{tab:t2i}. We also evaluate our fine-tuned NPNet with Hunyuan-DiT, as shown in Table~\ref{table:ds_dit_main_exp}.
For Hunyuan-DiT, we directly utilized the NPNet model trained on SDXL-produced NPD and fine-tune it on the Hunyuan-DiT-produced 600 noise pairs. Fine-tuned with only 600 samples, it can still achieve the highest results over the baselines on Hunyuan-DiT. This highlights the strong cross-model generalizability of our NPNet. 

We also show the winning rate~(the ratio of winning noise in the test set) on SDXL with our NPNet, shown in Fig.~\ref{figure:sdxl_winning_rate}. We present more winning rate experiments of DreamShaper-xl-v2-turbo and Hunyuan-DiT in Appendix Fig.~\ref{figure:winning_rate_step_50} and Appendix Fig.~\ref{figure:winning_rate}. These results again support that our NPNet is highly effective in transforming random Gaussian noise into golden noises. In order to validate the effectiveness of our NPNet on improving the conventional image quality metric, we also calculate the FID~\citep{heusel2018ganstrainedtimescaleupdate} of 5000 images on class conditional ImageNet with resolution $512 \times 512$~(please see more implementation details in Appendix~\ref{appendix:additional_experiment_results}), shown in Appendix Fig.~\ref{figure:fid}. We present more visualization results in Appendix Fig.~\ref{figure:baseline_comparsion}. %In summary, our NPNet can improve the overall quality, aesthetic style, and semantic faithfulness. 

We further study samples synthesized with and without NPNet to assess the distribution shift. Our empirical analysis reveals that NPNet effectively mitigates distributional discrepancies, resulting in synthesized images that exhibit statistically significant alignment with the true data manifold, shown in Appendix Fig.~\ref{figure:kde}.

\begin{table}[!h]
\begin{center}
\caption{Ablation studies of the proposed methods on SDXL.}
\vspace{-0.1in}
\renewcommand\arraystretch{1.2}
\setlength{\tabcolsep}{10pt}

\resizebox{1.\linewidth}{!}{%
\begin{tabular}{ccccc}
\hline
Method & PickScore (↑) & HPSv2 (↑)& AES (↑)& ImageReward (↑)\\ \hline
Standard & 21.69 & 28.48 & 6.0373 & 58.01 \\ \hline
%NPNet \textit{w/o Re-denoise Sampling} & \\ todo
NPNet \textit{w/o singular value prediction} & 21.49 & 27.76 & 6.0164 & 49.03 \\
NPNet \textit{w/o residual prediction} & 21.83 & 28.55 & 6.0315 & 63.05 \\
NPNet \textit{w/o data selection} &  21.73 & 28.46 & 6.0375	& 62.91 \\
NPNet & \CC21.86 & \CC28.68 & \CC6.0540 & \CC65.01 \\
\bottomrule
\end{tabular}
}
\label{table:ablation_backbone}
\end{center}
\end{table}

\vspace{-0.15in}
\subsection{Analysis of Generalization and Robustness}
\label{sec:generalization}

To validate the superior generalization ability of our NPNet, we conduct multiple experiments, covering experiments of cross-dataset, cross-model architecture, various inference steps, various random seed ranges, stochastic/deterministic samplers, and the integration of other methods.

\vspace{-0.1in}
\begin{figure*}[!t]
\centering
\includegraphics[width=.9\textwidth,trim={0cm 0cm 0cm 0cm},clip]{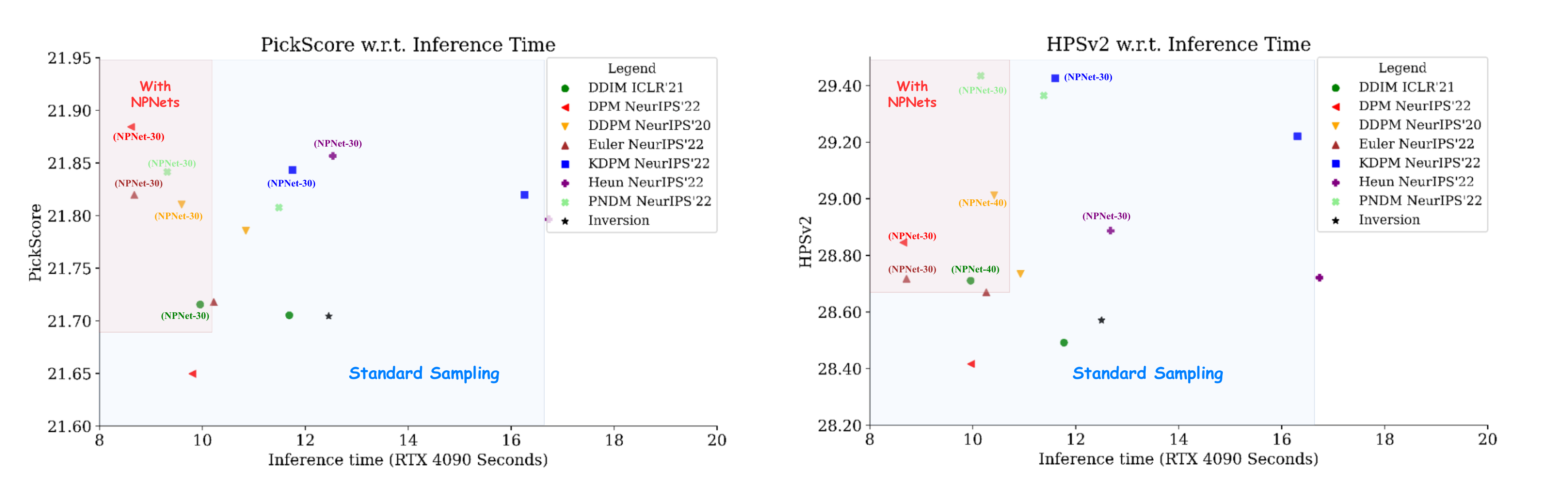}
\vspace{-0.1in}
\caption{We evaluate our NPNet with 7 samplers on SDXL in Pick-a-Pic dataset, including both the deterministic sampler and stochastic sampler~(with a default inference step 50). ``NPNet-30'' means the inference step is 30 with NPNet. The red area in the top left corner of the image represents the results of efficient high-performance methods, while the experimental results of NPNet are nearly in that same region. It highlights that NPNet is capable of synthesizing higher-quality images with fewer steps and consuming less time. Moreover, the results demonstrate the generalization ability of our NPNet across different samplers.}
\label{figure:sampler_inference}
\end{figure*}

\vspace{-0.1in}
\paragraph{Generalization to Various Datasets and Models. } Although we trained our NPNets exclusively with the NPDs collected from the Pick-a-Pic dataset, the experimental results presented in Table~\ref{table:sdxl_main_exp} and Table~\ref{table:ds_dit_main_exp}, and Fig.~\ref{figure:sdxl_winning_rate} demonstrate that our model exhibits strong cross-dataset generalization capabilities, achieving impressive results on other datasets as well.  In addition to the fine-tuning experiments detailed in Table~\ref{table:ds_dit_main_exp}, we also applied the NPNet trained on NPD collected from SDXL to DreamShaper-xl-v2-turbo, evaluating the performance with our NPNet without any fine-tuning. The experiment results are shown in Appendix Table~\ref{table:finetune_generalization}. Moreover, We use the same NPNet from SDXL and apply it directly on few steps T2I diffusion model, such as LCM~\cite{luo2023latent}, PCM~\cite{wang2024phased} and SDXL-Lightning~\cite{lin2024sdxllightning}, which also improves the quality of the synthesized images, shown in Appendix Table~\ref{table:few_step_model}. These results indicate promising performance with our NPNet, underscoring the model's capability for cross-model generalization.  We also conduct experiments on noise seed range in Appendix Table~\ref{table:seed_generalization}, the results demonstrate that our NPNet exhibits strong generalization capabilities across the out-of-distribution random seed ranges without harming the image diversity. 

\vspace{-0.1in}
\paragraph{Generalization to Stochastic/Deterministic Samplers.} When collecting NPD on SDXL, we use the deterministic DDIM sampler. However, whether the NPNet can effectively perform with stochastic samplers is crucial. To investigate our NPNet's performance across various sampling methods, we evaluated 7 different samplers, using NPNet trained on the NPD collected from SDXL, whose sampler is DDIM. The results shown in Fig.~\ref{figure:sampler_inference} and Appendix Table~\ref{table:time_cost} suggest that our NPNet is adaptable and capable of maintaining high performance even when subjected to various levels of randomness in the sampling process, further validating the generalization of our NPNet. 

\begin{table}[!th]
\begin{center}

\caption{We conducted combinatorial experiments with other mainstream methods that can improve the alignment between the text prompts and synthesized images. The results indicate that our approach is orthogonal to these methods, allowing for  joint usage to achieve improved performance.}

\renewcommand\arraystretch{1.2}
\setlength{\tabcolsep}{10pt}

\resizebox{1.\linewidth}{!}{%
\begin{tabular}{ccccc}
\hline
Methods & PickScore (↑) & HPSv2 (↑)& AES (↑)& ImageReward (↑)\\ \hline
Standard & 21.07 & 25.38 & 5.9058 & 34.04 \\
DPO & 21.83 & 28.42 & 5.9993 & 68.82 \\
DPO+NPNet~(ours) & \CC21.91 & \CC28.70 & \CC6.0277 & \CC75.06 \\ \hline
Standard & 21.07 & 25.38 & 5.9058 & 34.04 \\
AYS & 21.53 & 27.24 & 6.0310 & 50.74 \\
AYS+NPNet~(ours) & \CC21.76 & \CC28.14 & \CC6.1239 & \CC59.50\\
\bottomrule
\end{tabular}
}
\label{table:dpo_ays}
\end{center}
\end{table}

\vspace{-0.3in}
\paragraph{Orthogonal Experiment. }
To explore whether our model can be combined with other approaches, which aim at enhancing the semantic faithfulness of synthesized images, such DPO~\citep{rafailov2024directpreferenceoptimizationlanguage} and AYS~\citep{sabour2024alignstepsoptimizingsampling}, we conduct combination experiments, shown in Table~\ref{table:dpo_ays}. Note that AYS only releases the code under the inference step 10, so we conduct the combinatorial experiment with AYS and DPO~(for consistency) under the inference step 10. The results indicate that our method is orthogonal to these works, allowing for joint usage to further improve the quality of synthesized images.

More notably, we significantly improve the performance and winning rate of the previous noise optimization methods by using NPNet together with them, which further illustrates the effectiveness and scalability of our NPNet, shown in Appendix Table~\ref{table:optim_baseline} and Fig.~\ref{figure:baseline_winningrate}.

\vspace{-0.1in}
\paragraph{Robustness to the Hyper-parameters.} We study how the performance of NPNet is robust to the hyper-parameters. We first evaluate the performance of our NPNet under various inference steps, as illustrated in Fig.~\ref{figure:sdxl_metric_plot}, Appendix Fig.~\ref{figure:winning_rate_step_50}, Appendix Fig.~\ref{figure:ds_dit_inference_step}, and Appendix Fig.~\ref{figure:winning_rate}. These results highlight the generalization and versatility of our NPNet is robust to the common range of inference steps. inference steps. Such consistency suggests that the model is well-tuned to adapt to different conditions, making it effective for a wide range of applications. We also do exploration studies on the other hyper-parameters, such as batch size, the training epochs, and CFG values in Appendix Table~\ref{table:ablation_params}. The studied optimal settings are the batch size with 64, the training epochs with 30, and the CFG value with 5.5. Moreover, we explore the influences of different amounts of training samples, shown in Appendix Table~\ref{table:scaling_exp}.

\vspace{-0.1in}
\paragraph{Ablation Studies.}\label{sec:ablation}
We conduct ablation studies about the architecture designs of NPNet in Table~\ref{table:ablation_backbone}. The results show that both \textit{singular value prediction} and \textit{residual prediction} contribute to the final optimal results, while the \textit{singular value prediction} component plays a more important role. We also empirically verify the effectiveness of data selection strategies in Appendix Tables~\ref{table:filter_metrics} and \ref{table:filter_gap}.

\subsection{Efficiency Analysis and Downstream Tasks}

\paragraph{Efficiency Analysis. }\label{sec:cost} As a plug-and-play module, it is essential to discuss the memory consumption and inference latency of NPNet. Remarkably, NPNet achieves significant performance improvements even with fewer inference steps and reduced time costs in Fig.~\ref{figure:sampler_inference}. Even when operating at the same number of inference steps in Appendix Table~\ref{table:time_cost}, our model introduces only a 0.4-second delay while synthesizing high-quality images, demonstrating its efficiency. Additionally, Appendix Fig.~\ref{figure:memory} shows its memory consumption is mere 500 MB, highlighting its resource-friendly design. Our model not only delivers superior results but also exhibit significant application potential and practical value due to the impressive deployment efficiency.

\vspace{-0.1in}
\paragraph{Exploration of Downstream Task.}
\label{sec:downstream_task}
We explored the potential of integrating NPNet with downstream tasks, specifically by combining it with ControlNet~\citep{zhang2023addingconditionalcontroltexttoimage} for controlled image synthesis. As a plug-and-play module, our NPNet can be seamlessly incorporated into ControlNet. Visualization results in Appendix Fig.~\ref{figure:controlnet}~demonstrate that this integration leads to the synthesis of more detailed and higher-quality images with higher semantic faithfulness, highlighting the effectiveness of our approach.

% ---------------- Conclusion ----------------
% \vspace{-0.1in}
\section{Conclusion}
In this paper, we introduce a novel concept termed the \textit{noise prompt}, which aims to transform random Gaussian noise into a golden noise by incorporating a small perturbation derived from the text prompt. Building upon this, we firstly formulate a \textit{noise prompt learning} framework that systematically obtains ``prompted'' golden noise associated with a text prompt for diffusion models, by constructing a \textit{noise prompt dataset} collection pipeline that incorporates HPSv2 to filter our data and designing several backbones for our \textit{noise prompt models}. Our extensive experiments demonstrate the superiority of NPNet, which is plug-and-play, straightforward, and nearly time-efficient, while delivering significant performance improvements. We believe that the future application of NPNet will be broad and impactful, encompassing video, 3D content, and seamless deployment of AIGC products, thereby making a meaningful contribution to the community.

% This model possesses considerable application potential and practical significance. 

\section*{Acknowledgement}
This work was supported by the Science and Technology Bureau of Nansha District Under Key Field Science and Technology Plan Program No. 2024ZD002. This work was supported by Guangdong Provincial Key Lab of Integrated Communication, Sensing and Computation for Ubiquitous Internet of Things (No.2023B1212010007).

{
    \small
    \bibliographystyle{ieeenat_fullname}
    \bibliography{main}
}

% WARNING: do not forget to delete the supplementary pages from your submission 
% \input{sec/X_suppl}
\clearpage
\appendix
\clearpage
\appendix

\section{Implementation Details}
\label{appendix:sec:impletation_details}
In this section, we present the benchmarks, evaluation metrics and the relevant contents we used in the main paper to facilitate a comprehensive understanding of our model's performance. This overview will help contextualize our results and provide clarity on how we assessed the effectiveness of our approach.

\subsection{Benchmarks}
\label{appendix:sec:benchmark}

In our main paper, we conduct experiments across three popular text-to-image datasets.

\paragraph{Pick-a-Pic.} Pick-a-Pic~\citep{kirstain2023pickapicopendatasetuser} is an open dataset designed to collect user preferences for images synthesized from text prompts. The dataset is gathered through a user-friendly web application that allows users to synthesize images and select their preferences. Each data sample includes a text prompt, two synthesized images, and a label indicating which the user prefers or a tie if there is no clear preference. The Pick-a-Pic dataset contains over 500,000 examples covering 35,000 unique prompts. Its advantage lies in the fact that the data comes from real users, reflecting their genuine preferences rather than relying on paid crowd workers

\paragraph{DrawBench.} DrawBench is a newly introduced benchmark dataset designed for in-depth evaluation of text-to-image synthesis models. It contains 200 carefully crafted prompts categorized into 11 groups, testing the models' abilities across various semantic attributes, including compositionality, quantity, spatial relationships, and handling complex text prompts. The design of DrawBench allows for a multidimensional assessment of model performance, helping researchers identify strengths and weaknesses in image synthesis. By comparing with other models, DrawBench provides a comprehensive evaluation tool for the text-to-image synthesis field, facilitating a deeper understanding of synthesis quality and image-text alignment.

\paragraph{HPD v2.} The Human Preference Dataset v2~\citep{wu2023humanpreferencescorev2} is a large-scale, cleanly annotated dataset focused on user preferences for images synthesized from text prompts. It contains 798,090 binary preference choices involving 433,760 pairs of images, aiming to address the limitations of existing evaluation metrics that fail to accurately reflect human preferences. HPD v2 eliminates potential biases and provides a more comprehensive evaluation capability, with data sourced from multiple text-to-image synthesis models and real images.

\paragraph{GenEval.} GenEval, an object-focused T2I benchmark to evaluate compositional image properties such as object co-occurrence, position, count, and color, contains 553 prompts in total. This dataset plays a pivotal role in advancing research in text-to-image synthesis by providing a structured means to assess how well images align with the descriptive content of accompanying texts.

\paragraph{T2I-CompBench.} T2I-CompBench is a comprehensive benchmark designed for evaluating the compositional capabilities of text-to-image (T2I) models, focusing on aspects such as object arrangement, relationships, and attributes within synthesized images. This benchmark consists of a diverse set of prompts that encompass various scenarios and contexts, facilitating a thorough assessment of how well T2I systems can interpret and visualize complex textual descriptions. By providing a structured framework for evaluation, T2I-CompBench significantly contributes to the advancement of research in T2I synthesis, offering insights into the strengths and limitations of current models in generating coherent and contextually accurate images.

For testing, we use these four popular T2I datasets, including the first 100 prompts subset from the Pick-a-Pic web application, 100 prompts from HPD v2 test set, all 200 prompts from DrawBench, all 553 prompts from GenEval and all test prompts in T2I-CompBench. The detailed information of the test sets is shown in Fig.~\ref{figure:dataset}.

\begin{figure}[!t]
\centering
\vspace{-0.15in}
\includegraphics[height=.3\textwidth,trim={0.8cm 0cm 0cm 0cm},clip]{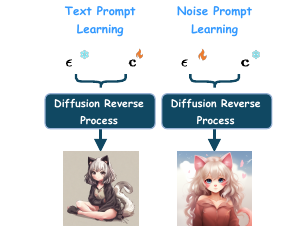}
\vspace{-0.1in}
\caption{Paradigms of \textit{text prompt learning}~(left) and \textit{noise prompt learning}~(right).}
\label{figure:noise_prompt}
\end{figure}

\begin{figure}[!t]
\centering
\vspace{-0.2in}
\includegraphics[height=0.42\textwidth,trim={0cm 0cm 0cm 0cm},clip]{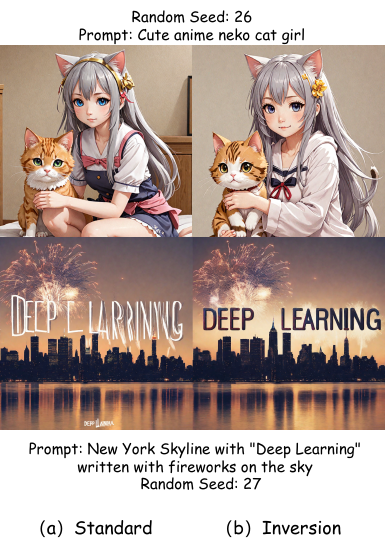}
\vspace{-0.1in}
\caption{Visualization results about \textit{re-denoise sampling}. \textit{Re-denoise sampling} can help to inject semantic information of the text prompt into the original Gaussian noise.}
\label{figure:inversion_vis}
\end{figure}

\subsection{Evaluation Metrics}
\label{appendix:sec:metrics}
In our main paper, we mainly include 6 evaluation metrics to validate the effectiveness of our NPNet.

\paragraph{PickScore.} PickScore is a CLIP-based scoring function trained from the Pick-a-Pic dataset, which collects user preferences for synthesized images. It achieves superhuman performance when predicting user preferences. PickScore aligns well with human judgments, and together with Pick-a-Pic's natural distribution prompts, enables much more relevant text-to-image model evaluation than evaluation standards, such as FID~\citep{heusel2018ganstrainedtimescaleupdate} over MS-COCO~\citep{lin2015microsoftcococommonobjects}.

\paragraph{HPSv2.}
Human Preference Score v2~(HPSv2) is an advanced preference prediction model by fine-tuning CLIP~\citep{radford2021learningtransferablevisualmodels} on Human Preference Dataset v2~(HPD v2). This model aims to align text-to-image synthesis with human preferences by predicting the likelihood of a synthesized image being preferred by users, making it a reliable tool for evaluating the performance of text-to-image models across diverse image distribution. 

\begin{figure}[!ht]
\centering
\vspace{-0.1in}
\includegraphics[width=0.9\linewidth]{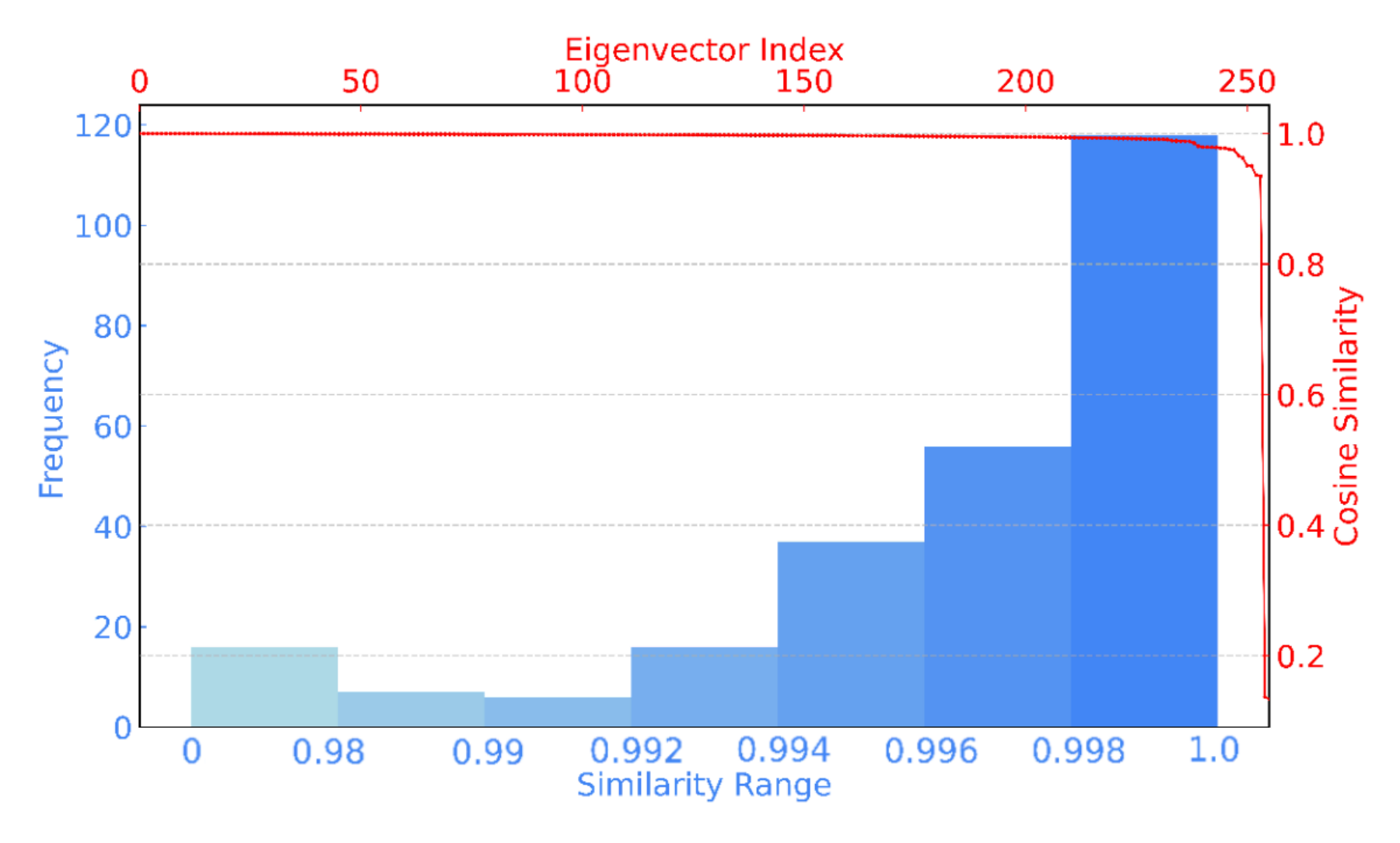}
\caption{Visualization about the similarities between the singular vectors of $\mathbf{x}_T$ and $\mathbf{x}^\prime_T$. Note that we take the absolute values of the cosine similarity scores, and sort them reversely~(the horizontal axis represents the indexes of the singular vectors).}
\label{figure:eigenvector}
\end{figure}

\paragraph{AES.} Aesthetic Score~(AES)~\cite{AES} are derived from a model trained on the top of CLIP embeddings with several extra multilayer perceptron ~(MLP) layers to reflect the visual appeal of images. This metric can be used to evaluate the aesthetic quality of synthesized images, providing insights into how well they align with human aesthetic preferences. 

\begin{figure}[!ht]
\centering
\vspace{-0.1in}
\includegraphics[width=0.9\linewidth]{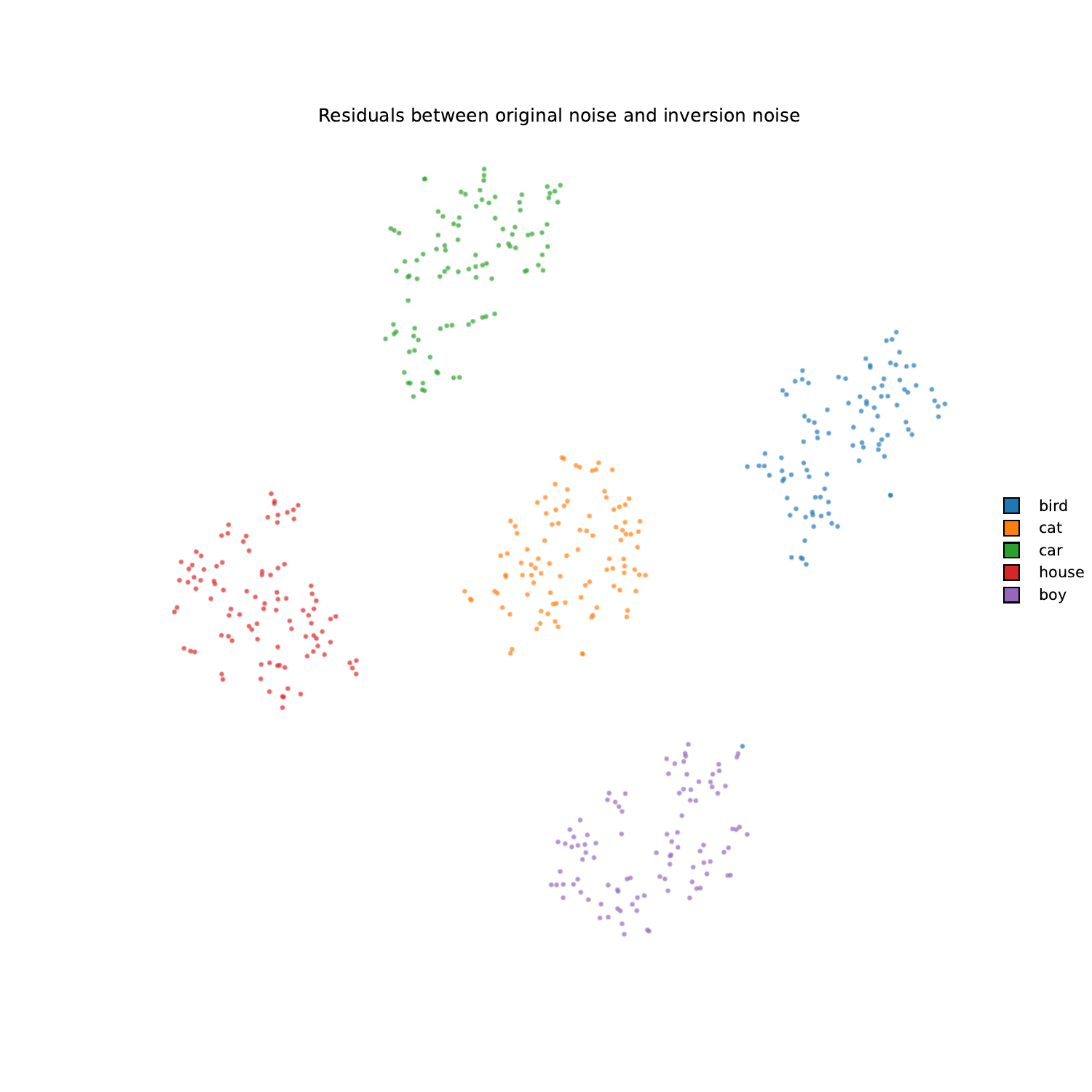}
\caption{Using re-denoising sampling, we actively enhance the semantic information of the given prompt within the initial noise to enhance the semantic alignment of the generated images. This process is validated by the tight clustering of the residuals between the original noises and the inversion noises, each prompt with 100 noises, demonstrating the semantic information within the inversion noises.}
\label{figure:inv_cluster}
\end{figure}

\paragraph{ImageReward.} ImageReward~\citep{xu2023imagerewardlearningevaluatinghuman} is a human preference reward model specifically designed for evaluating text-to-image synthesis. It is trained on a large dataset of human comparisons, allowing it to effectively encode human preferences. The model assesses synthesized images based on various criteria, including alignment with the text prompt and overall aesthetic quality. ImageReward has been shown to outperform traditional metrics like Inception Score (IS)~\citep{barratt2018noteinceptionscore} and Fréchet Inception Distance (FID) in correlating with human judgments, making it a promising automatic evaluation metric for text-to-image synthesis.

\paragraph{CLIPScore.}CLIPScore~\citep{hessel2022clipscorereferencefreeevaluationmetric} leverages the capabilities of the CLIP model, which aligns images and text in a shared embedding space. By calculating the cosine similarity between the image and text embeddings, CLIPScore provides a mearsure of how well a synthesized image corresponds to its textual description. While CLIPScore is effective in assessing text-image alignment, it may not fully capture the nuances of human preferences, particularly in terms of aesthetic quality and detail\footnote{we follow \url{https://github.com/jmhessel/clipscore}}.

\paragraph{MPS.} Multi-dimensional Preference Score~(MPS)~\citep{MPS}, the first multi-dimensional preference scoring model for the evaluation of text-to-image models. The MPS introduces the preference condition module upon CLIP model to learn these diverse preferences. It is trained based on the Multi-dimensional Human Preference~(MHP) Dataset, which comprises 918,315 human preference choices across four dimensions, including aesthetics, semantic alignment, detail quality and overall assessment on 607,541 images, providing a more comprehensive evaluation of synthesized images. MPS calculates the preference scores between two images, and the sum of the two preference scores equals 1.

\subsection{T2I Diffusion Models}
\label{appendix:sec:model}
In the main paper, we totally use 3 T2I diffusion models, including StableDiffusion-xl~(SDXL)~\citep{SDXL}, DreamShaper-xl-v2-turbo~(DreamShaper), and Hunyuan-DiT~(DiT)~\citep{li2024hunyuanditpowerfulmultiresolutiondiffusion}.

\paragraph{StableDiffusion-xl.}StableDiffusion-xl~(SDXL) is an advanced generative model, building upon the original Stable Diffusion architecture. This model leverages a three times larger UNet backbone, and utilizes a refinement model, which is used to improve the visual fidelity of samples synthesized by SDXL using a post-hoc image-to-image technique. SDXL is designed to synthesize high-resolution images from text prompts, demonstrating significant improvements in detail, coherence, and the ability to represent complex scenes compared to its predecessors.

\paragraph{DreamShaper-xl-v2-turbo.} DreamShaper-xl-v2-turbo, a fine-tuned version on SDXL, is a text-to-image model designed for high-quality image synthesis, focusing on faster inference time and enhanced image synthesis capabilities. DreamShaper-xl-v2-turbo maintains the high-quality image output characteristic of its predecessor, while its turbo enhancement allows for quicker synthesis cycles. The overall style of the synthesized images leans towards fantasy, while it achieves a high level of authenticity when realism is required.

\paragraph{Hunyuan-DiT.} Hunyuan-DiT is a text-to-image diffusion transformer with fine-grained understanding of both English and Chinese. With careful design of the model architecture, it can perform multi-turn multimodal dialogue with users to synthesized high-fidelity images, under the refinement of the Multimodal Large Language Model.

\begin{figure*}[!t]
\centering
\includegraphics[width=.9\textwidth,trim={0cm 0cm 0cm 0cm},clip]{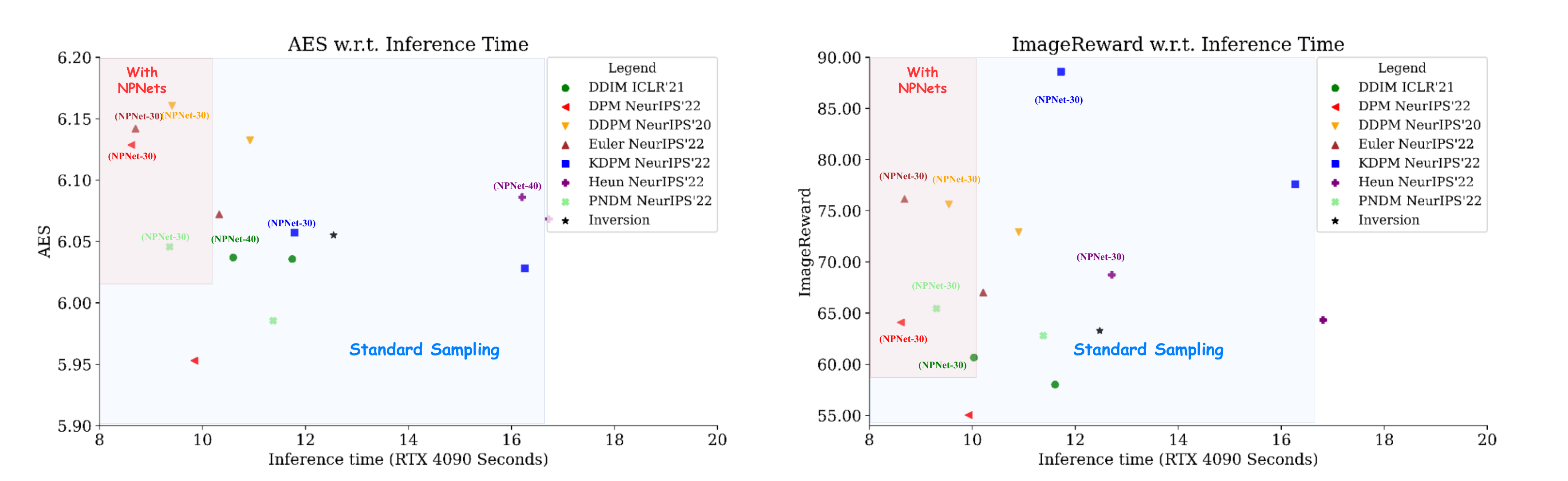}
\vspace{-0.1in}
\caption{We evaluate our NPNet with 7 samplers on SDXL in Pick-a-Pic dataset, including both the deterministic sampler and stochastic sampler~(with a default inference step 50). ``NPNet-30'' means the inference step is 30 with NPNet. The red area in the top left corner of the image represents the results of efficient high-performance methods, while the experimental results of NPNet are nearly in that same region. It highlights that NPNet is capable of synthesizing higher-quality images with fewer steps and consuming less time. Moreover, the results demonstrate the generalization ability of our NPNet across different samplers.}
\label{figure:sampler_inference_aes_ir}
\end{figure*}

\subsection{Model Architecture}
Our NPNet consists of two branches, one is \textit{singular value prediction}, and another is \textit{residual prediction}.

For the \textit{singular value prediction} branch, we processes the decomposed SVD components through three parallel MLP branches: one for the orthogonal matrix U, one for V, and another for singular values s. Each branch contains two linear layers with ReLU activation. We combine them with a self-attention module to model global relationships, followed by a residual MLP that adjusts the singular values while retaining their original characteristics through skip connections. The refined components are reconstructed via matrix operations to produce the output. The detail of the self-attention is illustrated in Fig.~\ref{figure:attention}.

\begin{figure}[!ht]
\centering
\includegraphics[width=0.5\textwidth,trim={0cm 0cm 0cm 0cm},clip]{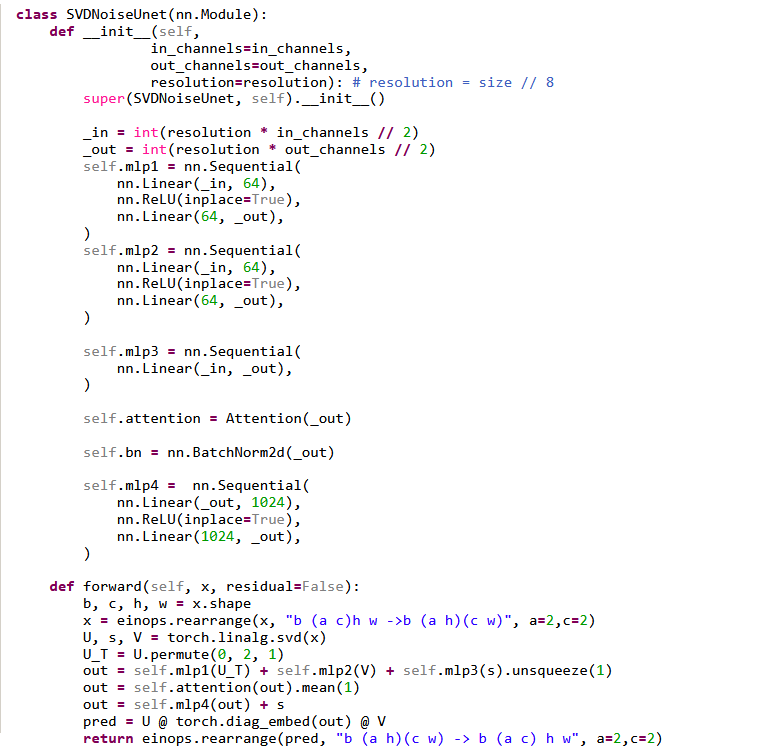}
\vspace{-0.2in}
\caption{\small The implementation details of the self-attention module in \textit{singular value prediction} branch.}
\label{figure:attention}
\end{figure}

For the \textit{residual prediction} branch, we employs a hybrid architecture combining a pretrained Swin-Tiny transformer with learnable up/down-sampling layers. The input is first upsampled to 224×224 resolution, projected to 3 channels via 1×1 convolution, then processed by the transformer's feature extractor. Features are downsampled to the original resolution and projected back through another 1×1 convolution. An optional residual connection adds the input to the refined output.

\subsection{Hyper-parameter Settings}
\label{appendix:sec:hyperparameters}
Our method is straightforward and intuitive, and the parameter settings for the entire experiment are also very simple, with epoch 30, and batch size 64 for all experiments. During training, we surprise the obverse that randomly assigning one prompt to the whole samples in one batch can yield a better model, where we speculate that such an operation could be considered as a strong regularizer as in \citet{chen2024slightcorruptionpretrainingdata}. We conduct experiments on three T2I diffusion models, including SDXL, DreamShaper-xl-v2-turbo and Hunyuan-DiT, with CFG $\omega_l$ 5.5, 3.5, and 5.0 respectively. The inverse CFG $\omega_w$ is 1.0 for all three models. To collect training data, the inference steps are 10, 4, and 10 for SDXL with DDIM inverse scheduler, Dreamshaper-xl-v2-turbo with DPMSolver inverse scheduler, and Hunyuan-DiT with DDIM inverse scheduler, respectively. The human preference model we use to filter the data is the HPSv2, and the filtering threshold $k$ equals 0. Unless otherwise specified, all quantitative experiments and synthesized images in this paper are conducted and synthesize with inference steps 50, respectively. All experiments are conducted using 1$\times$ RTX 4090 GPUs, and all these noise pairs are collected with inference step 10 to construct NPDs.

% \section{Ablation Studies}
% \label{appendix:sec:ablation_studies}

\begin{table*}[!ht]
\begin{center}

\caption{Generalization on different diffusion models. We train our NPNet with NPD collected from SDXL. We apply it directly to DreamShaper-xl-v2-turbo on Pick-a-Pic dataset. Our results show promising performance, highlighting the model's capability for cross-model generalization.}

\renewcommand\arraystretch{1.0}
\setlength{\tabcolsep}{10pt}

\resizebox{0.9\linewidth}{!}{%
\begin{tabular}{cccccc}
\hline
Inference Steps &  & PickScore (↑) & HPSv2 (↑)& AES (↑) & ImageReward (↑)\\ \hline
 & Standard & 21.57& 29.02 & \CC5.9172 & 53.12 \\
\multirow{-2}{*}{4} & NPNet~(ours) & \CC21.62 & \CC29.20 & 5.9159 & \CC58.46 \\ \cline{2-6} 
 & Standard & 22.39 & 32.16 & 6.0296 & 96.67 \\
\multirow{-2}{*}{10} & NPNet~(ours) & \CC22.41 & \CC32.24 & \CC6.0320 & \CC97.92 \\ \cline{2-6} 
 & Standard & 22.42 & 32.33 & \CC6.0116 & 98.97 \\
\multirow{-2}{*}{30} & NPNet~(ours) & \CC22.44 & \CC32.48 & 6.0054 & \CC100.18 \\ \cline{2-6} 
 & Standard & 22.41 & 32.12 & \CC6.0161 & 98.09 \\
\multirow{-2}{*}{50} & NPNet~(ours) & \CC22.47 & \CC32.25 & 6.0033 & \CC99.86 \\ 
\bottomrule
\end{tabular}
}
\label{table:finetune_generalization}
\end{center}
\end{table*}

% ----------
% \vspace{-25pt}
\section{Related Work}
\label{appendix:related_work}
Synthesizing images that are precisely aligned with given text prompts remains a significant challenge for Text-to-Image (T2I) diffusion models. To deal with this problem, several works explore training-free improvement strategies, by optimizing the noises during the diffusion reverse process.

\citet{lugmayr2022repaintinpaintingusingdenoising} utilizes a pre-trained unconditional diffusion model as a generative prior and alters the reverse diffusion iterations based on the unmasked regions of the input image. \citet{hu2023stepversatileplugandplaymodule} proposes to denoise one more step before the standard denoising process to eliminate the SNR noise discrepancy between training and inference. \citet{meng2023distillationguideddiffusionmodels} observe that denoising the noise with inversion steps can generate better images compared with the original denoising process. Based on that, \citet{qi2024noisescreatedequallydiffusionnoise} aims to reduce the truncate errors during the denoising process, by increase the cosine similarity between the initial noise and the inversed noise in an end-to-end way. It introduces significant time costs, and the synthesized images may be over-rendered, making it difficult to use in practical scenarios.

Another research direction introduces extra modules to help optimize the noises during the reverse process.
\citet{chefer2023attendandexciteattentionbasedsemanticguidance} introduce the concept of Generative Semantic Nursing~(GSN), and slightly shifts the noisy image at each timestep of the denoising process, where the semantic information from the text prompt is better considered. InitNO~\citep{guo2024initnoboostingtexttoimagediffusion} consists of the initial latent space partioning and the noise optimization pipeline, responsible for defining valid regions and steering noise navigation, respectively. Such methods are not universally applicable, we discuss this in Appendix~\ref{appendix:discussion_initno}

\begin{table}[!ht]
\begin{center}

\caption{Comparison results with InitNO on StableDiffusion-v1-4~\citep{Rombach_2022_CVPR} on Pick-a-Pic dataset. We directly apply NPNet trained for SDXL, and remove the embedding $\mathbf{e}$ to StableDiffusion-v1-4.}

\renewcommand\arraystretch{1.2}
\setlength{\tabcolsep}{10pt}

\resizebox{1.0\linewidth}{!}{%
\begin{tabular}{ccccc}
\hline
 & PickScore (↑)& HPSv2 (↑)& AES (↑)& ImageReward (↑)\\ \hline
Standard & 19.17 & 19.49 & 5.4575 & -122.73 \\
InitNO & 16.50 & 14.47 & 5.3116 & -205.66 \\
NPNet~(ours) & \CC19.25 & \CC19.54 & \CC5.5151 & \CC-95.89 \\
\bottomrule
\end{tabular}
}
\label{table:initno}
\end{center}
\end{table}

\begin{figure}[!ht]
\centering
\vspace{-0.2in}
\includegraphics[width=1.0\linewidth]{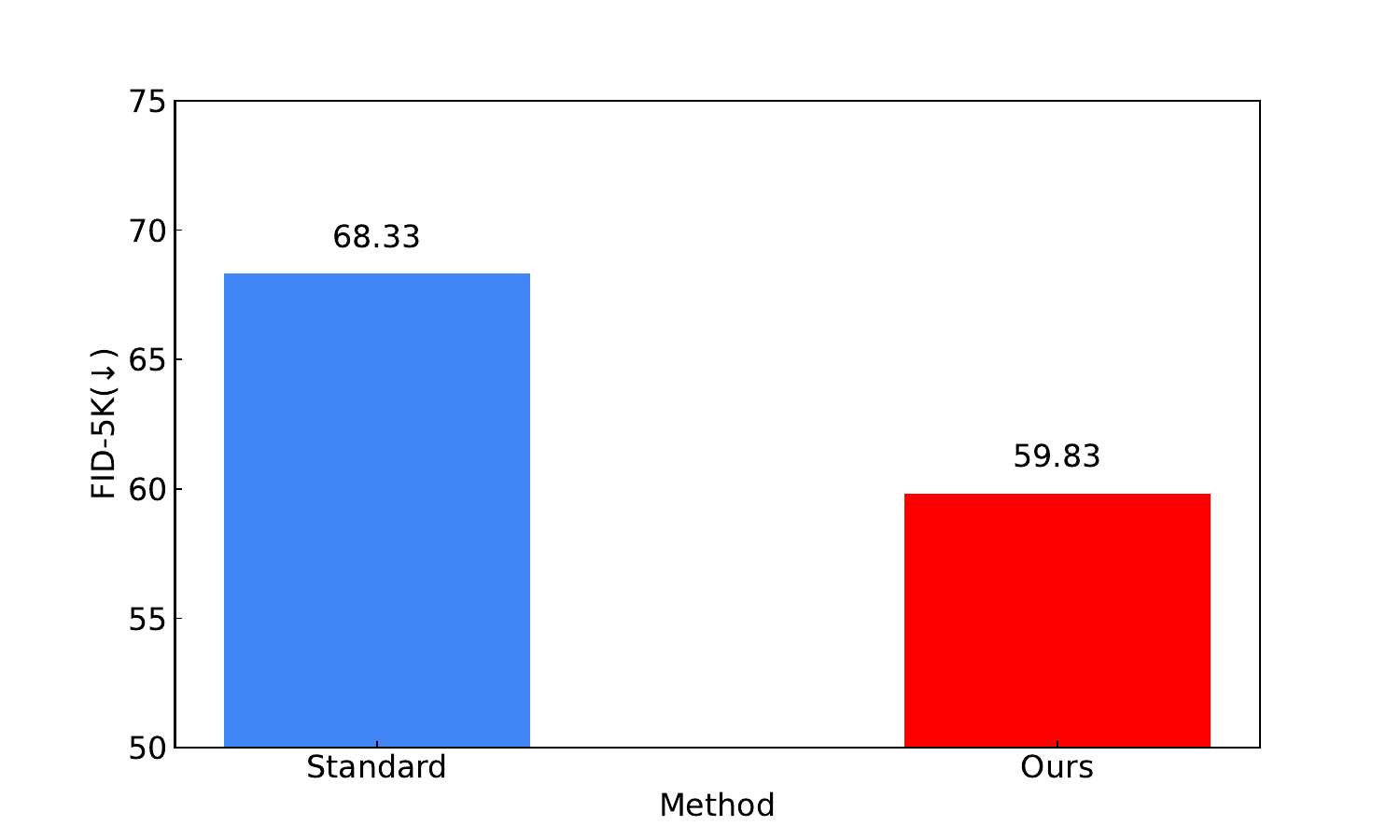}
\caption{The FID comparison with 5000 images in class-conditional ImageNet with the resolution 512 $\times$ 512. The results validate the effectiveness of our NPNet on improving the conventional image quality metric.}
\label{figure:fid}
\end{figure}

Unlike previous approaches, we are the first to reframe this task as a learning problem. we directly learn to prompt the initial noise into the winning ticket noise to address this issue, by training a universal Noise prompt network~(NPNet) with our noise prompt dataset~(NPD). Our NPNet operates as a plug-and-play module, with very limited memory cost and negligible inference time cost, produce images with higher preference scores and better alignment with the input text prompts effectively.

% -----------

\section{Discussion with the Previous Works}
\label{appendix:discussion_initno}

We previously mentioned that these methods~\citep{chefer2023attendandexciteattentionbasedsemanticguidance, guo2024initnoboostingtexttoimagediffusion, hong2024smoothed, hong2024smoothedenergyguidanceguiding, si2023freeufreelunchdiffusion}, optimize the noise during the reverse process by incorporating additional modules. These methods have shown promising results in tasks involving compositional generalization. However, these methods often struggle to transfer to other datasets and models, making them not universally applicable. These approaches require the manual interest subject tokens, necessitating extensive test to identify the optimal tokens for a given sentence, which complicates their application across different datasets. Furthermore, modifying the model pipeline usually requires in-depth code changes, making it difficult to achieve straightforward plug-and-play integration with other models. Moreover, these methods demand multiple rounds of noise optimization during the reverse process, resulting in significant time consumption.

In contrast, our approach addresses these challenges from multiple perspectives, offering a more flexible and universal solution. It is capable of cross-model and cross-dataset applications, provides plug-and-play functionality, and incurs minimal time overhead. We first follow the code in~\citet{guo2024initnoboostingtexttoimagediffusion}, and manually provide the subject tokens following~\citet{chefer2023attendandexciteattentionbasedsemanticguidance}. We conduct the experiments on StableDiffusion-v1-4~\citep{Rombach_2022_CVPR} on Pick-a-Pic dataset. Note that The authors did not provide the dataset or the necessary subject tokens for the algorithm, so we are limited to using our own dataset. Moreover, we had to manually select and trim the objects in the prompts. Additionally, the author's code requires extensive modifications to the pipeline, making it difficult to adapt for use with other diffusion models. We directly apply the NPNet trained for SDXL to StableDiffusion-v1-4. The experimental results are shown in Table.~\ref{table:initno}, demonstrating the superiority of our NPNet. In addition, to prove that our method can be directly used in conjunction with other noise optimization methods, we directly use the NPNet trained on SDXL on these mainstream noise optimization methods, and the experimental results in Table.~\ref{table:optim_baseline} and Fig.~\ref{figure:baseline_winningrate} prove that our NPNet can further improve the performance of other methods, which validates the generalizability of our NPNet.

\begin{table*}[!ht]
\begin{center}

\renewcommand\arraystretch{1}
\setlength{\tabcolsep}{10pt}

\caption{\small{We evaluate the effectiveness of our NPNet on T2I-CompBench benchmark. The results validate the effectiveness of our method.}} 
\resizebox{1.\linewidth}{!}{\begin{tabular}{ccccccccc}
\hline
                         & \multicolumn{3}{c}{Attribute Binding}                                                                                       & \multicolumn{4}{c}{Object Relationship}                                                                                                                               &                                         \\ \cline{2-8}
\multirow{-2}{*}{Method} & Color~(↑)                                   & Shape~(↑)                                   & Texture~(↑)                                 & 2D-Spatial~(↑)                              & 3D-Spatial~(↑)                              & Non-Spatial~(↑)                             & numeracy~(↑)                                & \multirow{-2}{*}{Complex~(↑)}               \\ \hline
Standard                 & 0.5850                                  & 0.5028                                  & 0.5083                                  & 0.2097                                  & 0.3667                                  & 0.3113                                  & 0.5013                                  & 0.3130                                  \\
Ours                     & \CC0.5940 & \CC0.5094 & \CC0.5207 & \CC0.2146 & \CC0.3713 & \CC0.3151 & \CC0.5201 & \CC0.3174 \\ \hline
\end{tabular}
}
\label{tab:t2i}
\end{center}
\end{table*}

\section{Additional Experiment Results}
\label{appendix:additional_experiment_results}

\subsection{Motivation}
\label{motivation}
To the best of our knowledge, we are the first to propose the \textit{noise prompt learning} framework, as illustrated in Fig.~\ref{figure:noise_prompt}. In order to transform the random Gaussian noise into the golden noise, we utilize the \textit{re-denoise sampling}, a straightforward method to boost the semantic faithfulness in the synthesized images, illustrated in Fig.~\ref{figure:inversion_vis}, to obtain the noise pairs to construct our \textit{noise prompt dataset}. Notably, in Fig.~\ref{figure:inv_cluster}, we validate that re-denoising sampling can enhance the semantic information in the initial noise.  When designing the architecture of \textit{noise prompt network}, we discover the singular vectors between the source noise $\mathbf{x}_T$ and target noise $\mathbf{x}^\prime_{T}$ exhibit remarkable similarity, albeit possibly in opposite directions, as shown in Fig.~\ref{figure:eigenvector}. Building upon this, we design the \textit{singular value prediction} branch.

\subsection{Exploration of Data Selection Strategies}
\label{appendix:selection_strategies}
Since the target noise collected through \textit{re-denoise sampling} is not always of high quality, it is crucial to choose an appropriate method for data filtering. Effective selection ensures that only high-quality noise pairs are used, which is essential for training the NPNet, affecting the model's performance and reliability. For this reason, we conduct experiments on the choice of human preference model to filter our data, shown in Appendix Table ~\ref{table:filter_metrics}, here the filtering threshold $m = 0$. The results demonstrate that using HPSv2 ensures data diversity, allowing the filtered data to enhance the model's performance effectively. This approach helps maintain a rich variety of training samples, which contributes to the model's generalization ability and overall effectiveness.
\begin{table*}[!ht]
\begin{center}

\caption{To collect valuable samples, we explore the data selection with different human preference models on SDXL with inference steps 10 on Pick-a-Pic dataset.``Standard$^*$'' here means no human preference model is applied.} 

\renewcommand\arraystretch{1.0}
\setlength{\tabcolsep}{10pt}

\resizebox{0.9\linewidth}{!}{%
\begin{tabular}{cccccc}
\hline
Method & Filter Rate & PickScore (↑)& HPSv2 (↑)& AES (↑)& ImageReward (↑)\\ \hline
Standard$^*$ & - & 21.21 & 25.95 & 5.9608 & 40.47 \\
PickScore & 34.28\% & 21.23 & 25.90 & 5.9750 & 32.56 \\
HPSv2 & 23.88\% & \CC21.24 & \CC26.01 & 5.9675 & 42.47 \\
AES & 47.62\% & 21.22 & 25.83 & 5.9636 & \CC43.23 \\
ImageReward & 35.02\% & 21.2139 & 25.70 & 5.9580 & 32.39 \\
PickScore+HPSv2 & 41.93\% & 21.23 & 25.75 & 5.9514 & 38.59 \\
PickScore+ImageReward & 52.57\% & 21.14 & 25.97 & \CC5.9936 & 42.19 \\
All & 74.41\% & 21.21 & 25.95 & 5.9608 & 40.47 \\
\bottomrule
\end{tabular}
}
\label{table:filter_metrics}
\end{center}
\end{table*}

We also explore the influence under difference filtering thresholds $m$, the results are shown in Appendix Table~\ref{table:filter_gap}. Our findings reveal that while increasing the filtering threshold 
$m$ can improve the quality of the training data, it also results in the exclusion of a substantial amount of data, ultimately diminishing the synthesizing diversity of the final NPNet.
\begin{table*}[!ht]

\begin{center}
\caption{Experiments on the HPSv2 filtering threshold. We conducted experiments on SDXL on Pick-a-Pic dataset to investigate the impact of adding a threshold during the filtering process, like $s_0 + m < s^\prime_0$, where $s_0$ and $s^\prime_0$ are the human preference scores of denoising images $\mathbf{x}_0$ and $\mathbf{x}^\prime_0$.} 

\renewcommand\arraystretch{1.0}
\setlength{\tabcolsep}{10pt}

\resizebox{0.9\linewidth}{!}{%
\begin{tabular}{cccccc}
\hline
Threshold & Filter Rate & PickScore (↑)& HPSv2 (↑)& AES (↑)& ImageReward (↑)\\ \hline
$m=0$ & 23.88\% & \CC21.86 & 28.68 & 6.0540 & \CC66.21 \\
$m=0.005$ & 41.21\% & 21.78 & \CC28.79 & 6.0703 & 60.30 \\
$m=0.01$ & 50.19\% & 21.72 & 28.64 & \CC6.0766 & 61.60 \\
$m=0.02$ & 83.47\% & 21.82 & 28.78 & 6.0447 & 65.46 \\
\bottomrule
\end{tabular}
}
\label{table:filter_gap}
\end{center}
\end{table*}

\subsection{Evaluate the Quality of Synthesized Images}
To validate the quality of the synthesized images with our NPNet, we calculate the FID\footnote{We follow the code in \url{https://github.com/GaParmar/clean-fid}} of 5000 images in class-conditional ImageNet with the resolution 512 $\times$ 512 on SDXL, shown in Appendix Fig.~\ref{figure:fid}. Note that we just synthesize the ``fish'' class in the ImageNet dataset, whose directory ids are [n01440764, n01443537, n01484850, n01491361, n01494475, n01496331, n01498041]. The main fish class contains sub-class labels, including ``tench'', ``Tinca tinca'', ``goldfish'', ``Carassius auratus'', ``great white shark'', ``white shark'', ``man-eater'', ``man-eating shark'', ``Carcharodon carcharias'', ``tiger shark'', ``Galeocerdo cuvieri'', ``hammerhead'', ``hammerhead shark'', ``electric ray'', ``crampfish'', ``numbfish'', ``torpedo'', ``stingray''. Each time, we randomly choose one prompt with postfix ``a class in ImageNet'', in order to synthesize ImageNet-like images. The results reveal that with our NPNet, the T2I diffusion models can synthesize images with higher quality than the standard ones.

\subsection{Evaluate the probability density of synthesized images with NPNet}
We visualize image distributions before and af-
ter NPNet by generating 50 images under identical conditions,
using ImageNet images (1 class) as ground truth and text-to-
image SDXL for generation in Fig.~\ref{figure:kde}. The experimental results show the use of NPNet, which enables the generated images to more closely resemble the real distribution, and more inclined to high-density reigons.

% \subsection{Evaluate NPNet with More Metrics}
% In the main paper, due to the space constraints, we mainly present the evaluation results on 4 metrics, including PickScore, HPSv2, AES and ImageReward. Here we conduct more experiments on SDXL, DreamShaper-xl-v2-turbo, and Hunyuan-DiT with 2 extra metrics CLIPScore, and MPS across 3 datasets. The results are shown in Table.~\ref{table:clip_mps}. It is noted that MPS calculates the preference scores between two images, where the image that aligns more closely with human preferences receives a higher score. The total score for the two images sums to 1, meaning that if one image scores higher, the other will score correspondingly lower. We calculate the MPS scores between ``Standard'' and ``Inversion'', and between ``Standard'' and 'NPNet~(ours)', respectively. So we mark the preference score of ``Standard'' as $-$. The results validate the effectiveness of our NPNet, which has promising results on not only on these two metrics, but also on other 4 metrics mentioned in the main paper. Besides, the noticeable improvements on both different diffusion models and different datasets reveal the strong generalization ability of our NPNet.

% \input{tables/appendix/clip_mps}

\begin{figure}[!ht]
\centering
\vspace{-0.1in}
\includegraphics[width=0.9\linewidth]{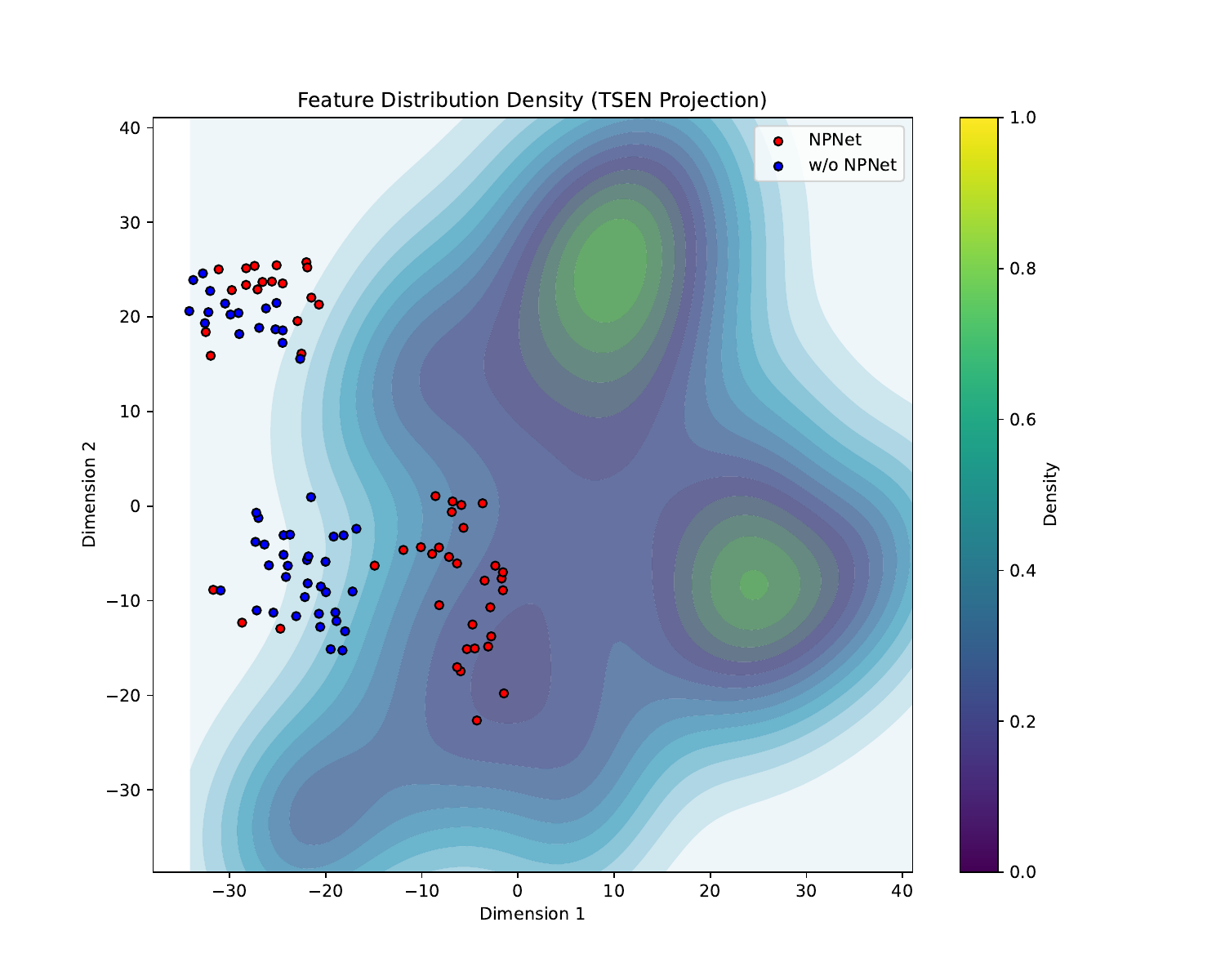}
\caption{We visualize image distributions before and after NPNet by generating 50 images under identical conditions, using ImageNet images~(1 class) as ground truth and text-to-image SDXL for generation. While text-to-image (v.s. class-conditioned) generation may induce distribution shifts, \textbf{NPNet shifts generation toward high-density regions.}}
\label{figure:kde}
\end{figure}

\subsection{Generalization and Robustness}
In this subsection, we provide more experiments to validate the generalization ability and robustness of our NPNet.

\paragraph{Generalization to Models, Datasets and Inference Steps. } In Appendix Table~\ref{table:finetune_generalization}, we directly apply the NPNet for SDXL to DreamShaper-xl-v2-turbo without fine-tuning on the corresponding data samples. Even so, our NPNet achieves nearly the best performance across arbitrary inference steps, demonstrating the strong generalization capability of our model. Besides, we also present the winning rate of DreamShaper-xl-v2-turbo and Hunyuan-DiT across 3 different datasets, as presented in Appendix Fig.~\ref{figure:winning_rate_step_50}. These experimental results indicate that our method has a high success rate in transforming random Gaussian noise into winning noise, highlighting the effectiveness of our approach.

\begin{figure*}[!ht]
\centering
\includegraphics[width=0.95\textwidth,trim={0cm 0cm 1cm 1cm},clip]{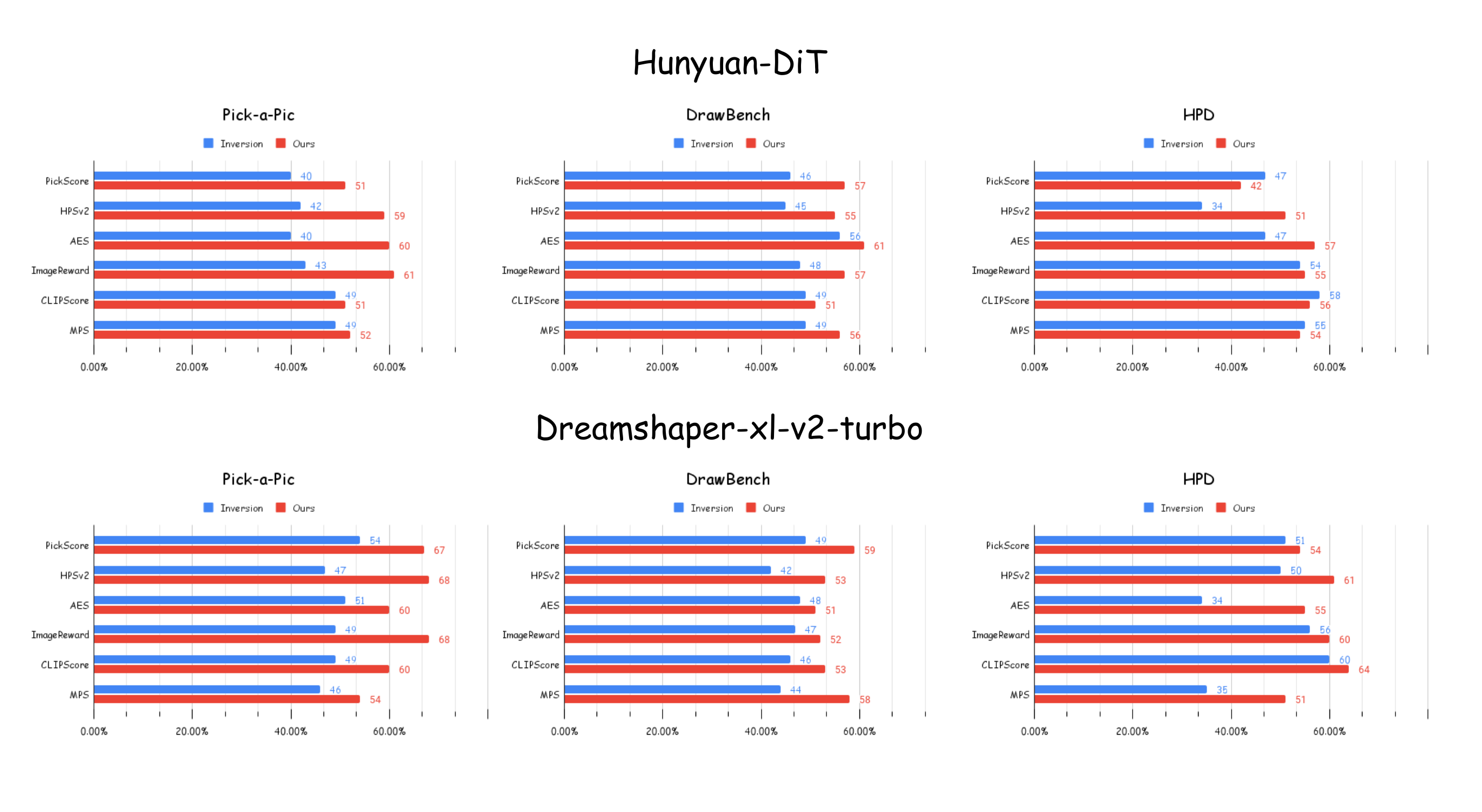}
\caption{The winning rate comparison on DreamShaper-xl-v2-turbo and Hunyun-DiT across 3 datasets, including Pick-a-Pic, DrawBench and HPD v2~(HPD) with inference steps 50. The results demonstrate the superiority of our NPNet.}
\label{figure:winning_rate_step_50}
\end{figure*}

\paragraph{Generalization to Random Seeds.} \label{sec:random_seed} As we mentioned in Sec.~\ref{sec:training}, the random seed range for the training set is $[0, 1024]$, while the random seed range for the test set is $[0, 100]$. This discrepancy may lead to our NPNet potentially overfitting on specific random seeds. To evaluate the performance of our NPNet under arbitrary random seeds, we artificially modified the seeds in the test set. The experimental results on the Pick-a-Pic dataset are presented in Appendix Table~\ref{table:seed_generalization}, demonstrating that our NPNet maintains strong performance across a variety of random seed conditions, making it suitable for diverse scenarios in real-world applications.  The results demonstrate that our NPNet exhibits strong generalization capabilities across the out-of-distribution random seed ranges.

\paragraph{Robustness to Inference Steps and Hyper-parameters. } In Appendix Fig.~\ref{figure:ds_dit_inference_step}, we conduct the experiments on DreamShaper-xl-v2-turbo and Hunyuan-DiT under various inference steps. The curve representing our method consistently remains at the top, demonstrating that our model achieves the best performance across various inference steps, further validating the robustness of our approach. To further support our claims, we present the winning rate of SDXL, DreamShaper-xl-v2-turbo and Hunyuan-DiT under various inference in two different datasets, shown in Appendix Fig.~\ref{figure:winning_rate}. These promising results validate the effectiveness of our NPNet.

\paragraph{Robustness to Hyper-parameters.}
We also conduct the experiments on different hyper-parameter settings, including the CFG value, batch size and training epochs, shown in Appendix Table~\ref{table:ablation_params}. It reveals that the optimal setting of these parameters are CFG 5.5, batch size 64, and training epoch 30. For all the experiments in the paper, we all use this setting. Moveover, we explore the influence of the number of training samples, shown in Appendix Table~\ref{table:scaling_exp}, we believe that a large dataset can ensure data diversity and improve the model's robustness and generalization ability.

\begin{table*}[!thp]
\begin{center}

\caption{Experiments on different samplers $w.r.t.$ inference time cost on SDXL. The NPNet trained on noise samples produced by the deterministic sampler DDIM, demonstrates impressive generalization to non-deterministic samplers, incurring only minimal additional time costs. }
\vspace{-0.1in}
\renewcommand\arraystretch{1.2}
\setlength{\tabcolsep}{10pt}

\resizebox{1.\linewidth}{!}{%
\begin{tabular}{ccccccc}
\hline
Methods &  & PickScore (↑)& HPSv2 (↑)& AES (↑)& ImageReward (↑)& Time Cost(second per image) \\ \hline
 & Standard & 21.69 & 28.48 & 6.0373 & 58.01 & 11.69 \\
\multirow{-2}{*}{DDIMScheduler~\citep{ddim}} & NPNet~(ours) & \CC21.86 & \CC28.68 & \CC6.0540 & \CC65.01 & 12.10 \\ \cline{2-7} 
 & Standard & 21.66 & 28.41 & 5.9513 & 55.01 & 9.84 \\
\multirow{-2}{*}{DPMSolverMultistepScheduler~\citep{dpm_solver}} & NPNet~(ours) & \CC21.72 & \CC28.81 & \CC5.9744 & \CC67.30 & 10.43 \\ \cline{2-7} 
 & Standard & 21.78 & 28.72 & 6.1353 & 72.91 & 10.86 \\
\multirow{-2}{*}{DDPMScheduler~\citep{ddpm_begin}} & NPNet~(ours) & \CC21.91 & \CC29.24 & \CC6.1505 & \CC78.50 & 11.43 \\ \cline{2-7} 
 & Standard & 21.72 & 28.66 & 6.0740 & 67.01 & 10.28 \\
\multirow{-2}{*}{EulerAncestralDiscreteScheduler~\citep{Karras2022ElucidatingTD}} & NPNet~(ours) & \CC21.84 & \CC28.96 & \CC6.0886 & \CC85.05 & 10.86 \\ \cline{2-7} 
 & Standard & 21.78 & 29.35 & 5.9809 & 62.81 & 11.40 \\
\multirow{-2}{*}{PNDMScheduler~\citep{liu2022pseudonumericalmethodsdiffusion}} & NPNet~(ours) & \CC21.81 & \CC29.74 & \CC6.0256 & \CC67.58 & 11.82 \\ \cline{2-7} 
 & Standard & 21.81 & 29.22 & 6.0382 & 77.59 & 16.25 \\
\multirow{-2}{*}{KDPM2AncestralDiscreteScheduler~\citep{Karras2022ElucidatingTD}} & NPNet~(ours) & \CC21.93 & \CC29.62 & \CC6.0951 & \CC84.78 & 16.73 \\ \cline{2-7} 
 & Standard & 21.83 & 28.71 & 6.0705 & 64.33 & 16.74 \\
\multirow{-2}{*}{HeunDiscreteScheduler~\citep{Karras2022ElucidatingTD}} & NPNet~(ours) & \CC21.86 & \CC28.98 & \CC6.0892 & \CC73.31 & 17.04 \\ \cline{2-7} 
%Repaint~\citep{lugmayr2022repaintinpaintingusingdenoising} & - & 21.7799 & 0.2963 & 5.9875 & 0.6494 & 12.31 \\
\bottomrule
\end{tabular}
}
\label{table:time_cost}
\end{center}
\end{table*}

\begin{table*}[!ht]

\begin{center}

\caption{Ablation studies of the hyper-parameters on SDXL on Pick-a-Pic dataset.}

\renewcommand\arraystretch{1.2}
\setlength{\tabcolsep}{10pt}

\resizebox{0.8\linewidth}{!}{%
\begin{tabular}{cccccc}
\hline
Hyper-parameters &  & PickScore (↑)& HPSv2 (↑)& AES (↑)& ImageReward (↑)\\ \hline
 & 5 & 21.77 & 28.62 & \CC6.0688 & \CC65.84 \\
 & 10 & 21.76 & 28.67 & 6.0629 & 60.59 \\
 & 15 & 21.69 & 28.62 & 6.0721 & 58.74 \\
\multirow{-4}{*}{Epochs} & 30 & \CC21.86 & \CC28.68 & 6.054 & 65.01 \\ \hline
 & $\omega_1 = 1$ & 20.11 & 21.80 & 6.0601 & -51.30 \\
 & $\omega_1 = 3$ & 21.53 & 27.28 & \CC6.0880 & 44.49 \\
 & $\omega_1 = 5.5$ & \CC21.86 & 28.68 & 6.0540 & 65.01 \\
\multirow{-4}{*}{Guidance Scale} & $\omega_1 = 7$ & 21.81 & \CC29.12 & 6.0529 & \CC70.31 \\ \hline
 & $bs=16$ & 21.76 & 28.74 & 6.0677 & 60.80 \\
 & $bs=32$  & 21.68 & 28.68 & 6.0483 & \CC65.47 \\
\multirow{-3}{*}{Batch Size} & $bs=64$  & \CC21.86 & \CC28.68 & \CC6.0540 & 65.01 \\
\bottomrule
\end{tabular}
}
\label{table:ablation_params}
\end{center}
\end{table*}

\subsection{Efficiency Analysis and Ablation Studies}
\paragraph{Efficiency Analysis. } As a plug-and-play module, it raises concerns about potential increases in inference latency and memory consumption, which can significantly impact its practical value. In addition to Fig.~\ref{figure:sampler_inference} presented in the main paper, we also measure the time required to synthesize each image under the same inference step conditions, shown in Appendix Table~\ref{table:time_cost}. Our model achieves a significant improvement in image quality with only a 0.4-second inference delay. Additionally, as shown in Appendix Fig.~\ref{figure:memory}, our model requires just 500 MB of extra memory. These factors highlight the lightweight and efficient nature of our model, underscoring its broad application potential.

\paragraph{Ablation Studies. } We explore the influence of the text embedding term $\mathbf{e}$. Although in Appendix Table~\ref{table:alpha_beta}, the value of $\alpha$ is very small, the results in Appendix Table~\ref{table:ablation_text_embedding} still demonstrate the importance of this term. It can facilitate a refined adjustment of how much semantic information influences the model's predictions, enabling the semantic relevance between the text prompt and synthesized images, and the diversity of the synthesized images.

\subsection{Experiments on Large-scale Dataset}
\label{appendix:geneval}
To evaluate the effectiveness of our NPNet, we conduct the experiments on a large-scale dataset, GenEval, across different T2I models. The results are shown in Appendix Table~\ref{table:geneval} and Table~\ref{tab:geneval_full}. The all improved metrics reveal the superiority of our NPNet, suggesting that our NPNet can improve the compositional image properties of the synthesized images.

\begin{table*}[!ht]
\begin{center}

\caption{We evaluate the effectiveness of our NPNet on larger T2I benchmark, GenEval. The results validate the superiority of our method.} 

\renewcommand\arraystretch{1.2}
\setlength{\tabcolsep}{10pt}

\resizebox{0.9\linewidth}{!}{%
\begin{tabular}{cccccc}
\hline
Model &  & PickScore~(↑) & HPSv2~(↑) & AES~(↑) & ImageReward~(↑) \\ \hline
 & Standard & 22.58 & 28.00 & 5.4291 & 55.44 \\
 & Inversion & 22.64 & 28.24 & 5.4266 & 58.37 \\
\multirow{-3}{*}{SDXL} & NPNet(ours) & \CC22.73 & \CC28.41 & \CC5.4506 & \CC65.56 \\ \cline{2-6} 
 & Standard & 23.51 & 31.20 & 5.5234 & 97.52 \\
 & Inversion & 23.47 & 30.85 & 5.5157 & 95.91 \\
\multirow{-3}{*}{DreamShaper-xl-v2-turbo} & NPNet(ours) & \CC23.59 & \CC31.24 & \CC5.5577 & \CC100.06 \\ \cline{2-6} 
 & Standard & 23.15 & 30.46 & 5.6233 & 111.73 \\
 & Inversion & \CC23.16 & 30.63 & 5.6210 & 111.98 \\
\multirow{-3}{*}{Hunyuan-DiT} & NPNet(ours) & 23.15 & \CC30.69 & \CC5.7040 & \CC115.57 \\
\bottomrule
\end{tabular}
}
\label{table:geneval}
\end{center}
\end{table*}

\begin{table*}[!ht]
\begin{center}

\renewcommand\arraystretch{1.2}
\setlength{\tabcolsep}{10pt}

\caption{\small{We evaluate the effectiveness of our NPNet on Genval benchmark.}} 
% \vspace{-0.35in}
\resizebox{1.\linewidth}{!}{\begin{tabular}{cccccccc}
\hline
             & Single~(↑) & Two~(↑)  & Counting~(↑) & Colors~(↑) & Positions~(↑) & Color Attribution~(↑) & Overall~(↑) \\ \hline
Standard     & 97.50 & 63.64 & 36.25   & \CC85.11 & 8.00    & 19.00            & 51.58  \\
NPNet~(ours) & \CC98.75 & \CC67.68 & \CC37.50   & 82.98 & \CC13.00    & \CC21.00            & \CC53.49  \\  \hline
\end{tabular}
}
\vspace{-0.3in}
\label{tab:geneval_full}
\end{center}
\end{table*}

\section{Theoretical Understanding of Re-denoise Sampling}
\label{appendix:sec:math}
In main paper Sec.~\ref{sec:collect_noise}, we utilize \textit{re-denoise sampling} to produce noise pairs. we propose to utilize $\DDIMInv(\cdot)$ to obtain the noise from the previous step. Specifically, the joint action of DDIM-Inversion and CFG can induce the initial noise to attach semantic information. The mechanism behind this method is that $\DDIMInv$($\cdot$) injects semantic information by leveraging the guidance scale in classifier-free guidance~(CFG) inconsistency:

\begin{theorem}
\label{theroem:foward_inversion}
 Given the initial Gaussian noise $\mathbf{x}_T \sim \mathcal{N}(0,\mathbf{I})$ and the operators $\DDIMInv$($\cdot$) and $\DDIM$($\cdot$). Using \textit{re-denoise sampling}, we can obtain that:
\begin{equation}
\fontsize{7pt}{10pt}
    \begin{split}
        \mathbf{x}^\prime_T = \mathbf{x}_T+\frac{\alpha_T\sigma_{T\!-\!k}-\alpha_{T\!-\!k}\sigma_T}{\alpha_{T\!-\!k}}&\Big[(\omega_l-\omega_w)(\epsilon_\theta(\mathbf{x}_{T\!-\!\frac{k}{2}}, T\!-\!\frac{k}{2} |\mathbf{c}) \\
        &- \epsilon_\theta(\mathbf{x}_{T\!-\!\frac{k}{2}}, T\!-\!\frac{k}{2}|\varnothing))\Big].\\
    \end{split}
    \label{eq:forward_inversion_main_paper}
\end{equation}
where $k$ stands for the DDIM sampling step, $\mathbf{c}$ is the text prompt, and $\omega_l$ and $\omega_w$ are CFG at the timestep $T$ and CFG at timestep $T\!-\!k$, respectively.
\end{theorem}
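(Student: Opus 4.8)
The plan is to compute $\mathbf{x}^\prime_T = \DDIMInv(\DDIM(\mathbf{x}_T))$ directly by composing the two update rules, tracking only the terms that survive. I would take a single DDIM step of size $k$ (so that the intermediate "model evaluation" timestep is $T-\tfrac{k}{2}$, consistent with the midpoint-style notation in the statement), landing at $\mathbf{x}_{T-k}$, and then a single DDIM-Inversion step of the same size $k$ back up to $\mathbf{x}^\prime_T$. Concretely, from Eq.~\eqref{equ:cfg_inversion} the forward step gives
\begin{equation}
\fontsize{8pt}{10pt}
\mathbf{x}_{T-k} = \alpha_{T-k}\Big(\frac{\mathbf{x}_T - \sigma_T\,\epsilon_{pred}^{(l)}}{\alpha_T}\Big) + \sigma_{T-k}\,\epsilon_{pred}^{(l)},
\end{equation}
where $\epsilon_{pred}^{(l)}$ is the CFG-predicted noise with scale $\omega_l$; then applying Eq.~\eqref{equ:ddim_inversion} with scale $\omega_w$ gives $\mathbf{x}^\prime_T = \tfrac{\alpha_T}{\alpha_{T-k}}\mathbf{x}_{T-k} + \big(\sigma_T - \tfrac{\alpha_T}{\alpha_{T-k}}\sigma_{T-k}\big)\epsilon_{pred}^{(w)}$.

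The key simplification is the standard ``approximate inversion'' assumption: the model's noise prediction is treated as essentially unchanged between the down-step and the up-step, i.e. both $\epsilon_\theta(\mathbf{x}_{T-k/2}, T-k/2\mid\mathbf{c})$ and $\epsilon_\theta(\cdots\mid\varnothing)$ are evaluated at the same intermediate point. First I would substitute the forward expression into the inversion expression. The $\tfrac{\alpha_T}{\alpha_{T-k}}\cdot\alpha_{T-k}\tfrac{\mathbf{x}_T}{\alpha_T} = \mathbf{x}_T$ term reproduces the source noise. Collecting the remaining coefficients of $\epsilon_{pred}^{(l)}$ and $\epsilon_{pred}^{(w)}$: the forward step contributes $\big(\sigma_{T-k} - \tfrac{\alpha_{T-k}}{\alpha_T}\sigma_T\big)\cdot\tfrac{\alpha_T}{\alpha_{T-k}} = \tfrac{\alpha_T\sigma_{T-k} - \alpha_{T-k}\sigma_T}{\alpha_{T-k}}$ multiplying $\epsilon_{pred}^{(l)}$, while the inversion step contributes $\sigma_T - \tfrac{\alpha_T}{\alpha_{T-k}}\sigma_{T-k} = -\tfrac{\alpha_T\sigma_{T-k} - \alpha_{T-k}\sigma_T}{\alpha_{T-k}}$ multiplying $\epsilon_{pred}^{(w)}$. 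So the net perturbation is $\tfrac{\alpha_T\sigma_{T-k} - \alpha_{T-k}\sigma_T}{\alpha_{T-k}}\big(\epsilon_{pred}^{(l)} - \epsilon_{pred}^{(w)}\big)$.

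Finally I would expand $\epsilon_{pred}^{(l)} - \epsilon_{pred}^{(w)}$ using Eq.~\eqref{equ:classifer-free-guidance}: with a common evaluation point, $\epsilon_{pred}^{(\omega)} = (\omega+1)\epsilon_\theta(\cdot\mid\mathbf{c}) - \omega\,\epsilon_\theta(\cdot\mid\varnothing)$, so the difference is $(\omega_l - \omega_w)\big(\epsilon_\theta(\cdot\mid\mathbf{c}) - \epsilon_\theta(\cdot\mid\varnothing)\big)$ — the unconditional-versus-conditional gap scaled by the CFG discrepancy. Plugging this in yields exactly Eq.~\eqref{eq:forward_inversion_main_paper}. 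The main obstacle is justifying the single honest approximation — that the network outputs at the down-step and the up-step coincide (and that one DDIM step of size $k$ is a legitimate stand-in for the full multi-step re-denoise sampling). I would argue this is the usual first-order DDIM-inversion consistency: over one step the score estimate varies by $O(k)$, so the leading-order identity holds, and since $\omega_l > \omega_w$ the sign and magnitude of the induced perturbation are controlled. Everything else is bookkeeping of the $\alpha,\sigma$ coefficients, which telescopes cleanly because the forward and inverse steps use the same schedule ratios.
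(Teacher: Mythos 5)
Your proposal is correct and follows essentially the same route as the paper: compose the single DDIM step with the single DDIM-Inversion step, telescope the $\alpha,\sigma$ coefficients to isolate the perturbation $\tfrac{\alpha_T\sigma_{T-k}-\alpha_{T-k}\sigma_T}{\alpha_{T-k}}\bigl(\epsilon_{pred}^{(l)}-\epsilon_{pred}^{(w)}\bigr)$, and collapse the difference of CFG-combined predictions to $(\omega_l-\omega_w)\bigl(\epsilon_\theta(\cdot\mid\mathbf{c})-\epsilon_\theta(\cdot\mid\varnothing)\bigr)$ by evaluating both at a common point. The paper makes your ``single honest approximation'' precise in exactly the way you anticipate, via first-order Taylor expansions of both network evaluations around the midpoint $(\mathbf{x}_{T-\frac{k}{2}},\,T-\tfrac{k}{2})$ under a Lipschitz-type bound $\Vert\mathbf{x}_T-\mathbf{x}_{T-k}\Vert/k\le L$.
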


\begin{proof}
One step \textit{re-denoise sampling} represents one additional step forward sampling and one step reverse sampling against the initial Gaussian noise, which can be denoted as

\begin{equation}
\fontsize{7pt}{10pt}
\mathbf{x}^\prime_T = \DDIMInv(\DDIM(\mathbf{x}_T)),  \\
\label{eq:apd:forward_backward}
\end{equation}
where $\DDIMInv(\cdot)$ refers to the sampling algorithm in Eqn.~\ref{equ:ddim_reverse} when $\mathbf{x}_t$ and $\mathbf{x}_{t-1}$ are interchanged. We can rewrite it in forms of linear transformation:
\begin{equation}
\fontsize{7pt}{10pt}
\begin{split}
    \mathbf{x}^\prime_T &= \alpha_{T}\left(\frac{\mathbf{x}_{T\!-\!k}-\sigma_{T\!-\!k}\epsilon_\theta(\mathbf{x}_{T\!-\!k},T\!-\!k)}{\alpha_{T\!-\!k}}\right)+\sigma_{T} \epsilon_\theta(\mathbf{x}_{T\!-\!k},T\!-\!k)\\
    \mathbf{x}^\prime_T &= \alpha_{T}\\&\left(\frac{\alpha_{T\!-\!k}\left(\frac{\mathbf{x}_{T}-\sigma_{T}\epsilon_\theta(\mathbf{x}_{T},T)}{\alpha_{T}}\right)+\sigma_{T\!-\!k} \epsilon_\theta(\mathbf{x}_{T},T)-\sigma_{T\!-\!k}\epsilon_\theta(\mathbf{x}_{T\!-\!k},T\!-\!k)}{\alpha_{T\!-\!k}}\right)\\
    &+\sigma_{T} \epsilon_\theta(\mathbf{x}_{T\!-\!k},T\!-\!k) \\
  \mathbf{x}^\prime_T &= \mathbf{x}_T - \sigma_T \epsilon_\theta(\mathbf{x}_T,T) +\frac{\alpha_T\sigma_{T\!-\!k}}{\alpha_{T\!-\!k}}\epsilon_\theta(\mathbf{x}_T,T) - \frac{\alpha_T\sigma_{T\!-\!k}}{\alpha_{T\!-\!k}}\epsilon_\theta(\mathbf{x}_{T\!-\!k},T\!-\!k) \\
  &+ \sigma_T\epsilon_\theta(\mathbf{x}_{T\!-\!k},T\!-\!k) \\
  \mathbf{x}^\prime_T &= \mathbf{x}_T+\frac{\alpha_T\sigma_{T\!-\!k}-\alpha_{T\!-\!k}\sigma_T}{\alpha_{T\!-\!k}}\left[\epsilon_\theta(\mathbf{x}_T,T)-\epsilon_\theta(\mathbf{x}_{T\!-\!k},T\!-\!k)\right],\\
\end{split}
\label{eq:derivation_1}
\end{equation}
where $k$ stands for the DDIM sampling step. Substitute $\epsilon_\theta(\mathbf{x}_{t},t)=(\omega + 1)\epsilon_\theta(\mathbf{x}_t, t|\mathbf{c}) - \omega\epsilon_\theta(\mathbf{x}_t, t|\varnothing)$ into Eq.~\ref{eq:derivation_1}, we can obtain
\begin{equation}
\fontsize{7pt}{10pt}
\begin{split}
  \mathbf{x}^\prime_T &= \mathbf{x}_T+\frac{\alpha_T\sigma_{T\!-\!k}-\alpha_{T\!-\!k}\sigma_T}{\alpha_{T\!-\!k}}\Big[(\omega_l + 1)\epsilon_\theta(\mathbf{x}_T, T|\mathbf{c}) - \omega_l\epsilon_\theta(\mathbf{x}_T, T|\varnothing)) \\
  &- (\omega_w + 1)\epsilon_\theta(\mathbf{x}_{T\!-\!k},{T\!-\!k}|\mathbf{c}) +  \omega_w\epsilon_\theta(\mathbf{x}_{T\!-\!k}, T\!-\!k|\varnothing))\Big].\\
\end{split}
\label{eq:derivation_2}
\end{equation}
Where $\omega_l$ and $\omega_w$ refer to the classifier-free guidance scale at the timestep $T$ and the classifier-free guidance scale at timestep $T-k$, respectively. $\mathbf{c}$ stands for the text prompt~(\textit{i.e.}, condition).
Consider the first-order Taylor expansion $\epsilon_\theta(\mathbf{x}_{T\!-\!k}, {T\!-\!k}| \mathbf{c})=\epsilon_\theta(\mathbf{x}_{T\!-\!\frac{k}{2}}, {T\!-\!\frac{k}{2}}|\mathbf{c}) + \frac{\mathbf{x}_{T\!-\!k}-\mathbf{x}_{T\!-\!\frac{k}{2}}}{2}\frac{\partial \epsilon_\theta(\mathbf{x}_{T\!-\!\frac{k}{2}}, {T\!-\!\frac{k}{2}}|\mathbf{c})}{\partial \mathbf{x}_{T\!-\!\frac{k}{2}}} + \frac{k}{2}\frac{\partial \epsilon_\theta(\mathbf{x}_{T\!-\!\frac{k}{2}}, {T\!-\!\frac{k}{2}}|\mathbf{c})}{\partial {T\!-\!\frac{k}{2}}} + \mathcal{O}(\left(\frac{k}{2}\right)^2)$ and $\epsilon_\theta(\mathbf{x}_{T}, {T}|\mathbf{c})=\epsilon_\theta(\mathbf{x}_{T\!-\!\frac{k}{2}}, {T\!-\!\frac{k}{2}}|\mathbf{c}) + \frac{\mathbf{x}_{T}-\mathbf{x}_{T\!-\!\frac{k}{2}}}{2}\frac{\partial \epsilon_\theta(\mathbf{x}_{T\!-\!\frac{k}{2}}, {T\!-\!\frac{k}{2}}|\mathbf{c})}{\partial \mathbf{x}_{T\!-\!\frac{k}{2}}} - \frac{k}{2}\frac{\partial \epsilon_\theta(\mathbf{x}_{T\!-\!\frac{k}{2}}, {T\!-\!\frac{k}{2}}|\mathbf{c})}{\partial {T\!-\!\frac{k}{2}}} + \mathcal{O}(\left(\frac{k}{2}\right)^2)$, when $\mathbf{x}_T$ satisfies the condition $\left\Vert \frac{\Vert\Vert\mathbf{x}_{T} - \mathbf{x}_{T-k}\Vert\Vert}{k} \right\Vert \leq L$, where $L< +\infty$, Eq.~\ref{eq:derivation_2} can be transformed into:
\begin{equation}
\fontsize{7pt}{10pt}
\begin{split}
  \mathbf{x}^\prime_T &= \mathbf{x}_T+\frac{\alpha_T\sigma_{T\!-\!k}-\alpha_{T\!-\!k}\sigma_T}{\alpha_{T\!-\!k}}\Big[(\omega_l-\omega_w)(\epsilon_\theta(\mathbf{x}_{T\!-\!\frac{k}{2}}, T\!-\!\frac{k}{2}|\mathbf{c}) \\
  &- \epsilon_\theta(\mathbf{x}_{T\!-\!\frac{k}{2}}, T\!-\!\frac{k}{2}|\varnothing))\Big].\\
\end{split}
\label{eq:derivation_3}
\end{equation}

The proof is complete.
\end{proof}

By using Eqn.~\ref{eq:derivation_3}, when there is a gap between $\omega_l$ and $\omega_w$, \textit{re-denoise sampline} can be considered as a technique to inject semantic information under the guidance of future timestep~($t = T - \frac{k}{2}$) CFG into the initial Gaussian noise.

% ---------------- Discussion ----------------
%\vspace{-0.2in}
\section{Discussion}
\label{appendix:discussion}

\paragraph{Limitations. }Although our experimental results have demonstrated the superiority of our method, the limitations still exist. As a machine learning framework, our method also faces classic challenges from training data quality and model architecture design. First, noise prompt data quality sets the performance limit of our method. The data quality is heavily constrained by \textit{re-denoise sampling} and data selection, but lack comprehensive understanding. For example, there exists the potential risk that the proposed data collection pipeline could introduce extra bias due to the AI-feedback-based selection. Second, the design of NPNet is still somewhat rudimentary. While ablation studies support each component of NPNet, it is highly possible that more elegant and efficient architectures may exist and work well for the novel noise prompt learning task. Optimizing model architectures for this task still lacks principled understanding and remain to be a challenge. %Third, training data and architecture design critically matters to generalization of NPNet.

%\vspace{-0.1in}
\paragraph{Future Directions. } Our work has various interesting future directions. First, it will be highly interesting to investigate improved data collection methods in terms of both performance and trustworthiness. %such as ensembling noises through the addition of random perturbations or various timesteps to reduce the truncate errors during the diffusion reverse process. 
Second, we will design more streamlined structures rather than relying on a parallel approach with higher performance or higher efficiency. For example, we may directly utilize a pre-trained diffusion model to synthesize golden noise more precisely. 
Third, we will further analyze and improve the generalization of our method, particularly in the presence of out-of-distribution prompts or even beyond the scope of T2I tasks.

\begin{figure}[!ht]
\centering
\includegraphics[width=0.5\textwidth,trim={0cm 0cm 0cm 0cm},clip]{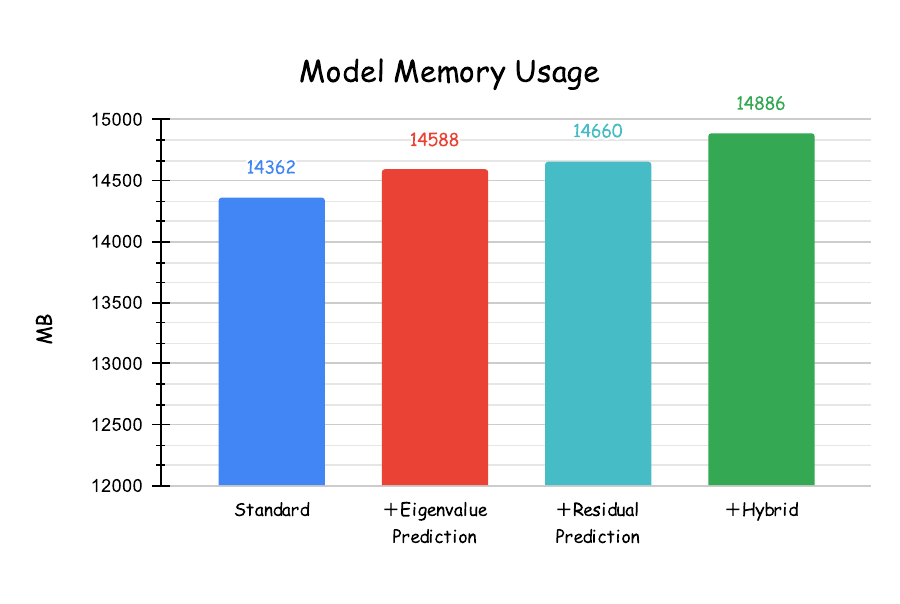}
\vspace{-0.2in}
\caption{\small Our NPNet requires only about 500MB, illustrating the light-weight and efficiency of our model.}
\label{figure:memory}
\end{figure}

\begin{figure*}[!ht]
\centering
\includegraphics[width=1.\textwidth,trim={0cm 0cm 0cm 0cm},clip]{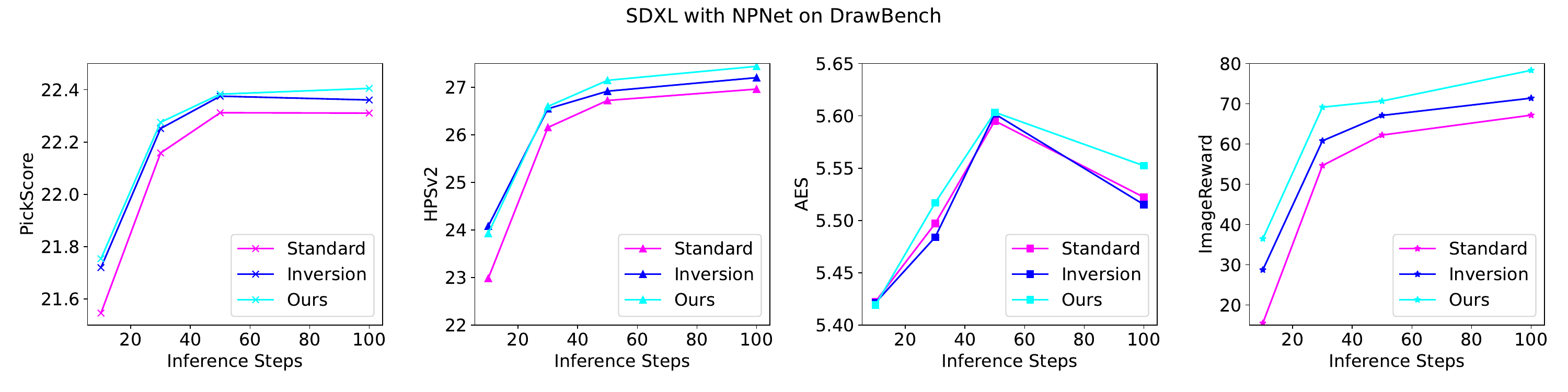}
\includegraphics[width=1.\textwidth,trim={0cm 0cm 0cm 0cm},clip]{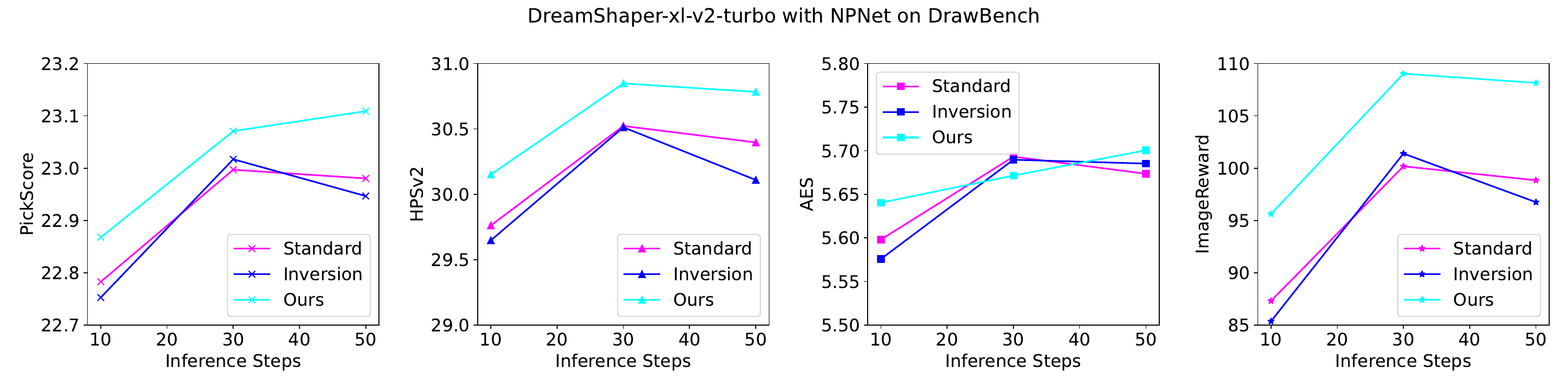}
\includegraphics[width=1.\textwidth,trim={0cm 0cm 0cm 0cm},clip]{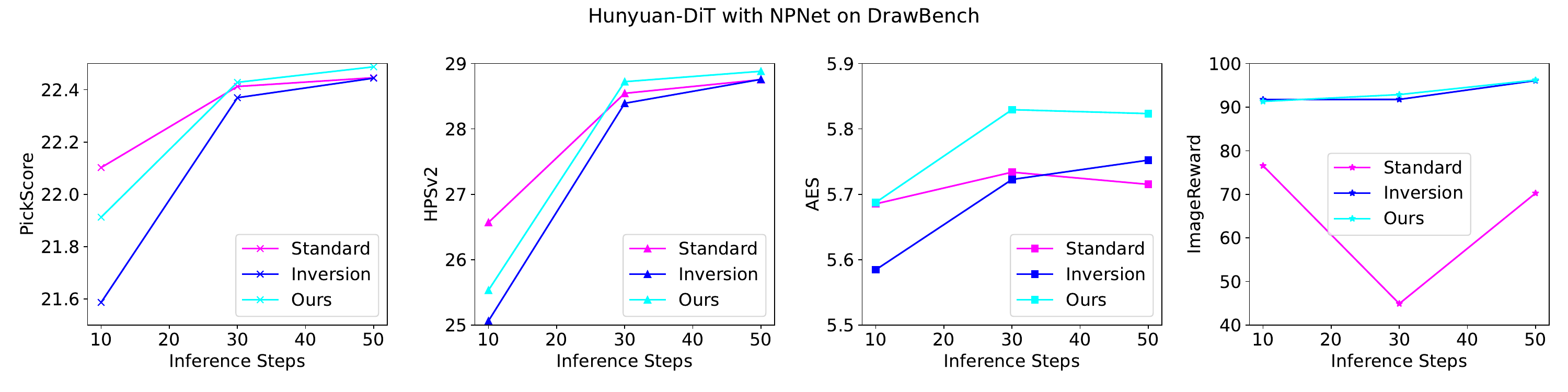}
\includegraphics[width=1.\textwidth,trim={0cm 0cm 0cm 0cm},clip]{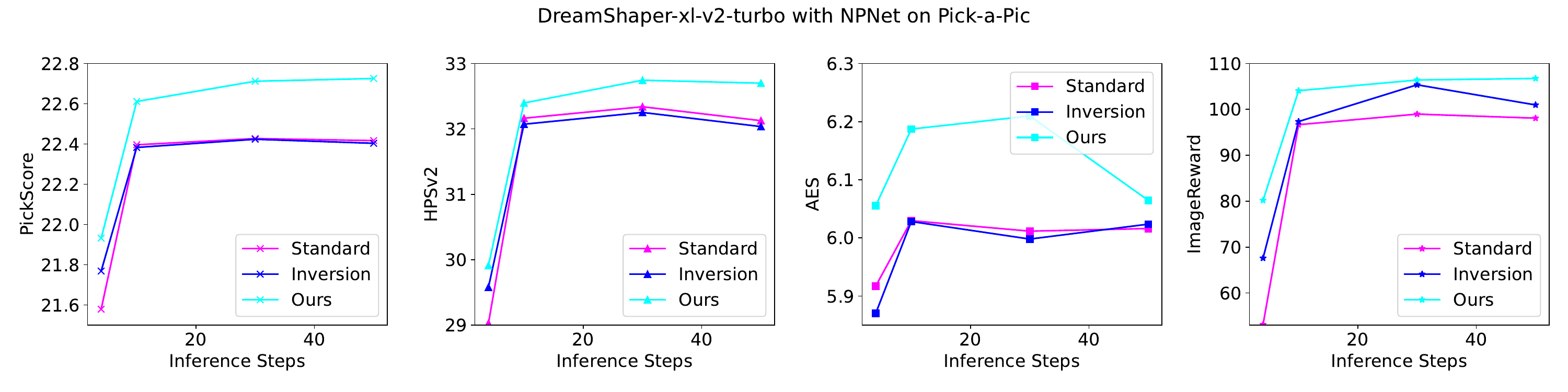}
\includegraphics[width=1.\textwidth,trim={0cm 0cm 0cm 0cm},clip]{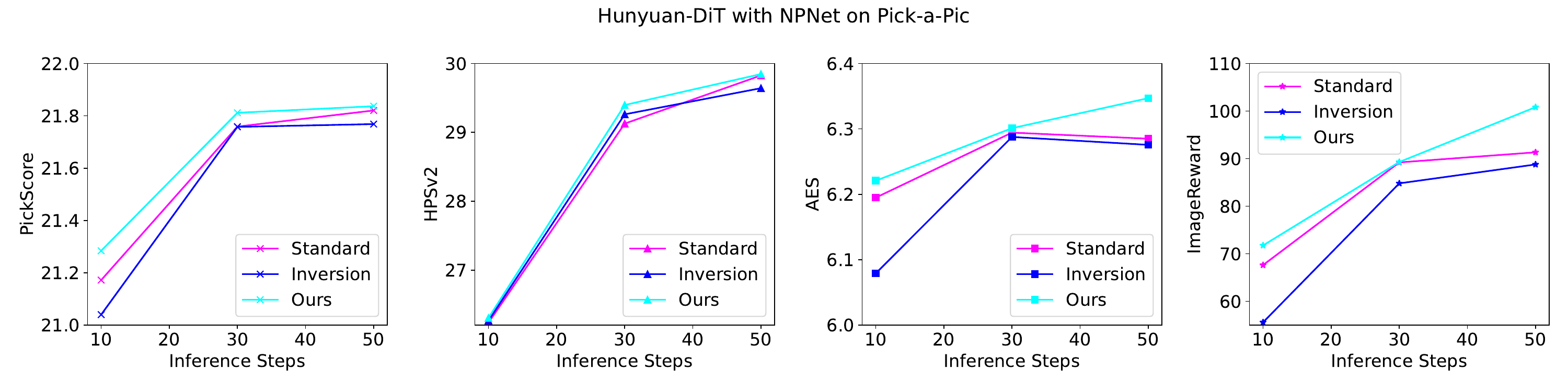}
\caption{\small Visualization of performance w.r.t inference steps on SDXL, DreamShaper-xl-v2-turbo and Hunyuan-DiT on Pick-a-Pic dataset and DrawBench dataset. The results demonstrate the strong generalization ability of our NPNet.}
\label{figure:ds_dit_inference_step}
\end{figure*}
% \begin{figure}[t]
% \begin{minipage}{0.45\linewidth}
%         \centering
% 		\includegraphics[width=0.8\textwidth,trim={0cm 0cm 0cm 0cm},clip]{figures/appendix/model_memory.pdf}
%         \vspace{-0.2in}
%         \caption{\small Our NPNet requires only about 500MB, illustrating the light-weight and efficiency of our model.}
%         \label{figure:memory}
% \end{minipage}
% \begin{minipage}{0.45\linewidth}
% \centering
% \captionof{table}{\small The values of the two trainable parameters $\alpha$ and $\beta$.}
% \renewcommand\arraystretch{1.2}
% \setlength{\tabcolsep}{10pt}
% \resizebox{0.8\linewidth}{!}{%
% \begin{tabular}{ccc}
% \hline
% Model & alpha & beta \\ \hline
% SDXL & 0.0001 & -0.0189 \\
% DreamShaper-xl-v2-turbo & 7.00E-05 & 0.0432 \\
% Hunyuan-DiT & 0.0002 & 0.0018 \\
% \bottomrule
% \end{tabular}
% }
% \label{table:alpha_beta}
% \end{minipage}  
% \end{figure}

\begin{table}[ht]
\begin{center}

\caption{The values of the two trainable parameters $\alpha$ and $\beta$.}

\renewcommand\arraystretch{1.2}
\setlength{\tabcolsep}{10pt}
\resizebox{.9\linewidth}{!}{%
\begin{tabular}{ccc}
\hline
Model & $\alpha$ & $\beta$ \\ \hline
SDXL & 1.00E-04 & -0.0189 \\
DreamShaper-xl-v2-turbo & 7.00E-05 & 0.0432 \\
Hunyuan-DiT & 2.00E-04 & 0.0018 \\
\bottomrule
\end{tabular}
}
\label{table:alpha_beta}
\end{center}
\end{table}

\begin{table}[!ht]
\begin{center}
\caption{We explore the influence of text embedding $\mathbf{e}$. The results reveal that text embedding $\mathbf{e}$ is crucial in \textit{noise prompt learning}, which aims to inject the semantic information into the noise.}

\renewcommand\arraystretch{1.2}
\setlength{\tabcolsep}{10pt}

\resizebox{1.0\linewidth}{!}{%
\begin{tabular}{ccccc}
\hline
Method & PickScore (↑)& HPSv2 (↑)& AES (↑)& ImageReward (↑)\\ \hline
NPNet \textit{w/o text embedding $\mathbf{e}$} & 21.72 & \CC28.70 & 6.0513 & 62.14 \\
NPNet & \CC21.86 & 28.68 & \CC6.0540 & \CC65.01 \\
\bottomrule
\end{tabular}
}
\label{table:ablation_text_embedding}
\end{center}
\end{table}

\begin{table*}[!ht]
\begin{center}

\caption{In order to explore the scaling law~\citep{kaplan2020scalinglawsneurallanguage} in NPNet, we train our NPNet with different numbers of training samples on SDXL on Pick-a-Pic dataset.}

\renewcommand\arraystretch{1.2}
\setlength{\tabcolsep}{10pt}

\resizebox{1\linewidth}{!}{%
\begin{tabular}{c|c|cccccc}
\hline
Numbers of Training Samples & Method & PickScore~(↑)& HPSv2~(↑)& AES~(↑)& ImageReward~(↑)& CLIPScore(\%)~(↑)& MPS(\%)~(↑)\\ \hline
 & Standard & 21.69 & 28.48 & 6.0373 & 58.01 & 82.04 & - \\
3W & NPNet~(ours) & 21.82 & \CC28.78 & \CC6.0750 & \CC67.26 & 82.17 & 50.63 \\
6W & NPNet~(ours) & 21.75 & 28.65 & 6.0392 & 64.56 & 81.98 & 51.60 \\
10W & NPNet~(ours) & \CC21.86 & 28.68 & 6.0540 & 65.01 & \CC84.08 & \CC52.15 \\
\bottomrule
\end{tabular}
}
\label{table:scaling_exp}
\end{center}
\end{table*}

\begin{table*}[h]
\begin{center}

\caption{We evaluate NPNet on few steps T2I diffusion models, like LCM~\cite{luo2023latent}, PCM~\cite{wang2024phased} and SDXL-Lightning~\cite{lin2024sdxllightning} on GenEval dataset. Here we use the NPNet from SDXL, and the results demonstrate NPNet can generalize well to different kinds of T2I diffusion models, boosting their performance directly.}

\renewcommand\arraystretch{1.2}
\setlength{\tabcolsep}{10pt}

\resizebox{0.9\linewidth}{!}{%
\begin{tabular}{ccccccc}
\hline
Model                                                                                &             & PickScore~(↑)                     & HPSv2~(↑)                         & AES~(↑)                            & ImageReward~(↑)                   & CLIPScore~(↑)                      \\ \hline
                                                                                     & Standard    & 22.85                         & 29.12                         & 5.6521                         & 59.02                         & 0.8093                         \\
\multirow{-2}{*}{\begin{tabular}[c]{@{}c@{}}SDXL-Lightning\\ (4 steps)\end{tabular}} & NPNet(ours) & \cellcolor[HTML]{C0C0C0}23.03 & \cellcolor[HTML]{C0C0C0}29.71 & \cellcolor[HTML]{C0C0C0}5.7178 & \cellcolor[HTML]{C0C0C0}72.67 & \cellcolor[HTML]{C0C0C0}0.8150 \\ \hline
                                                                                     & Standard    & 22.30                         & 26.52                         & 5.4932                         & 33.21                         & 0.8050                         \\
\multirow{-2}{*}{\begin{tabular}[c]{@{}c@{}}LCM\\ (4 steps)\end{tabular}}            & NPNet(ours) & \cellcolor[HTML]{C0C0C0}22.38 & \cellcolor[HTML]{C0C0C0}26.83 & \cellcolor[HTML]{C0C0C0}5.5598 & \cellcolor[HTML]{C0C0C0}37.08 & \cellcolor[HTML]{C0C0C0}0.8123 \\ \hline
                                                                                     & Standard    & 22.05                         & 26.98                         & 5.5245                         & 23.28                         & 0.8031                         \\
\multirow{-2}{*}{\begin{tabular}[c]{@{}c@{}}PCM\\ (8 steps)\end{tabular}}            & NPNet(ours) & \cellcolor[HTML]{C0C0C0}22.22 & \cellcolor[HTML]{C0C0C0}27.59 & \cellcolor[HTML]{C0C0C0}5.5667 & \cellcolor[HTML]{C0C0C0}35.01 & \cellcolor[HTML]{C0C0C0}0.8175 \\
\bottomrule
\end{tabular}
}
\label{table:few_step_model}
\end{center}
\end{table*}

\begin{table*}[!ht]

\begin{center}

\caption{Random seed generalization experiments on SDXL with difference inference steps on Pick-a-Pic dataset. The random seeds of our training set range from $[0, 1024]$, containing the random seeds of our test set. To explore the generalization ability of NPNet on out-of-distribution random seeds, we manually adjust the random seed range of the test set.}

\renewcommand\arraystretch{1.2}
\setlength{\tabcolsep}{10pt}
\resizebox{0.9\linewidth}{!}{%
\begin{tabular}{clclccccc}
\hline
\multicolumn{2}{c}{Inference Steps} & \multicolumn{2}{c}{Random Seed Range} &  & PickScore (↑)& HPSv2 (↑)& AES (↑)& ImageReward (↑)\\ \hline
\multicolumn{2}{c}{} & \multicolumn{2}{c}{} & Standard & 21.69 & 28.48 & 6.0373 & 58.01 \\
\multicolumn{2}{c}{} & \multicolumn{2}{c}{} & Inversion & 21.71 & 28.57 & 6.0503 & 63.27 \\
\multicolumn{2}{c}{} & \multicolumn{2}{c}{\multirow{-3}{*}{\begin{tabular}[c]{@{}c@{}}{[}0, 1024{]}~(Original)\\\end{tabular}}} & NPNet~(ours) & \CC21.86 & \CC28.68 & \CC6.0540 & \CC65.01 \\ \cline{3-9} 
\multicolumn{2}{c}{} & \multicolumn{2}{c}{} & Standard & 21.63 & 28.57 & 5.9748 & 67.09 \\
\multicolumn{2}{c}{} & \multicolumn{2}{c}{} & Inversion & 21.71 & 28.75 & 5.9875 & 70.92 \\
\multicolumn{2}{c}{} & \multicolumn{2}{c}{\multirow{-3}{*}{{[}2500, 3524{]}}} & NPNet~(ours) & \CC21.81 & \CC29.02 & \CC5.9917 & \CC80.83 \\ \cline{3-9} 
\multicolumn{2}{c}{} & \multicolumn{2}{c}{} & Standard & 21.74 & 28.82 & 6.0534 & 78.02 \\
\multicolumn{2}{c}{} & \multicolumn{2}{c}{} & Inversion & 21.78 & 29.04 & 6.0418 & 76.05 \\
\multicolumn{2}{c}{} & \multicolumn{2}{c}{\multirow{-3}{*}{{[}5000, 6024{]}}} & NPNet~(ours) & \CC21.83 & \CC29.09 & \CC6.4220 & \CC79.84 \\ \cline{3-9} 
\multicolumn{2}{c}{} & \multicolumn{2}{c}{} & Standard & 21.70 & 29.12 & 6.0251 & 78.71 \\
\multicolumn{2}{c}{} & \multicolumn{2}{c}{} & Inversion & 21.78 & 29.18 & 6.0541 & 82.69 \\
\multicolumn{2}{c}{\multirow{-12}{*}{50}} & \multicolumn{2}{c}{\multirow{-3}{*}{{[}7500, 7524{]}}} & NPNet~(ours) & \CC21.81 & \CC29.02 & \CC6.0641 & \CC89.53 \\ \hline
\multicolumn{2}{c}{} & \multicolumn{2}{c}{} & Standard & 21.71 & 28.70 & 6.0041 & 61.76 \\
\multicolumn{2}{c}{} & \multicolumn{2}{c}{} & Inversion & 21.72 & 28.70 & 6.0061 & 61.73 \\
\multicolumn{2}{c}{} & \multicolumn{2}{c}{\multirow{-3}{*}{{[}0, 1024{]}~(Original)}} & NPNet~(ours) & \CC21.86 & \CC29.10 & \CC6.0761 & \CC74.57 \\ \cline{3-9} 
\multicolumn{2}{c}{} & \multicolumn{2}{c}{} & Standard & 21.69 & 28.73 & 5.9946 & 69.22 \\
\multicolumn{2}{c}{} & \multicolumn{2}{c}{} & Inversion & 21.74 & 28.92 & 5.9863 & 71.86 \\
\multicolumn{2}{c}{} & \multicolumn{2}{c}{\multirow{-3}{*}{{[}2500, 3524{]}}} & NPNet~(ours) & \CC21.81 & \CC29.04 & \CC5.9977 & \CC78.75 \\ \cline{3-9} 
\multicolumn{2}{c}{} & \multicolumn{2}{c}{} & Standard & 21.81 & 28.40 & 6.0489 & 79.57 \\
\multicolumn{2}{c}{} & \multicolumn{2}{c}{} & Inversion & 21.85 & 28.66 & 6.0374 & 79.94 \\
\multicolumn{2}{c}{} & \multicolumn{2}{c}{\multirow{-3}{*}{{[}5000, 6024{]}}} & NPNet~(ours) & \CC21.88 & \CC29.16 & \CC6.0576 & \CC83.87 \\ \cline{3-9} 
\multicolumn{2}{c}{} & \multicolumn{2}{c}{} & Standard & 21.73 & 28.67 & 6.0347 & 77.66 \\
\multicolumn{2}{c}{} & \multicolumn{2}{c}{} & Inversion & 21.80 & 28.75 & \CC6.0600 & 82.33 \\
\multicolumn{2}{c}{\multirow{-12}{*}{100}} & \multicolumn{2}{c}{\multirow{-3}{*}{{[}7500, 7524{]}}} & NPNet~(ours) & \CC21.86 & \CC29.12 & 6.0502 & \CC89.79 \\
\bottomrule
\end{tabular}
}
\label{table:seed_generalization}
\end{center}
\end{table*}

\begin{algorithm}[!ht]
    \caption{Noise Prompt Dataset Collection}
    \small
    \begin{algorithmic}[1]
        \STATE {\bfseries Input:} Timestep $t \in \left[0, \cdots, T\right]$, random Gaussian noise $\mathbf{x}_T$,  text prompt $\mathbf{c}$, DDIM operateor \textbf{DDIM}$(\cdot)$, DDIM inversion operator \textbf{DDIM-Inversion}$(\cdot)$, human preference model $\Phi$ and filtering threshold $m$.
        \STATE {\bfseries Output:} Source noise $\mathbf{x}_T$, target noise $\mathbf{x}^\prime_T$ and text prompt $\mathbf{c}$.
        \STATE Sample Gaussian noise $x_{T}$
        \STATE
        \# \textit{re-denoise sampling, see Sec.~\ref{sec:collect_noise} in the main paper}\\
        \STATE $\mathbf{x}_{T-1}$ = \textbf{DDIM}$(\mathbf{x}_T)$
        \STATE $\mathbf{x}^\prime_T$ = \textbf{DDIM-Inversion}$(\mathbf{x}_{T-1})$
        \STATE
        \# \textit{standard diffusion reverse process}\\
        \STATE $\mathbf{x}_{0}$ = \textbf{DDIM}$(
        \mathbf{x}_T)$
        \STATE $\mathbf{x}^\prime_{0}$ = \textbf{DDIM}$(\mathbf{x}^\prime_T)$
        \STATE
        \# \textit{data filtering via the human preference model, see Sec.~\ref{sec:filter_data}} \\
        \IF{$\Phi$($\mathbf{x}_0$, $\mathbf{c}$) + $m$ < $\Phi$($\mathbf{x}^\prime_0$, $\mathbf{c}$)}
            \STATE store~($\mathbf{x}_T$, $\mathbf{x}^\prime_T$, $\mathbf{c}$)
        \ENDIF
    \end{algorithmic}
    \label{algo:collection}
\end{algorithm}

% ---------- test
\begin{algorithm}[!h]
    \caption{Noise Prompt Network Training}
    \small
    \begin{algorithmic}[1]
        \STATE {\bfseries Input:} \textit{Noise prompt dataset} $\mathcal{D} := \{\mathbf{x}_{T_i}, \mathbf{x}^\prime_{T_i}, \mathbf{c}_{i}\}_{i=1}^{\lvert \mathcal{D}\rvert}$, \textit{noise prompt model} $\phi$ parameterized by \textit{singular value predictor} $f$($\cdot$) and \textit{residual predictor} $g$($\cdot$, $\cdot$), the frozen pre-trained text encoder $\mathcal{E}$($\cdot$) from diffusion model, normalization layer $\sigma$($\cdot, \cdot$), $\mathrm{MSE}$ loss function $\ell$, and two trainable parameters $\alpha$ and $\beta$.
        \STATE {\bfseries Output:} The optimal \textit{noise prompt model} $\phi^*$ trained on the training set $\mathcal{D}$. 
        \STATE
        \# \textit{singular value prediction}, see~\eqref{equ:singular value_prediction}\\
        $\mathbf{\tilde{x}}^\prime_{T_i}$ = $f$($\mathbf{x}_{T_i}$)
        \STATE
        \# \textit{residual prediction}, see~\eqref{equ:residual_prediction}\\
        $\mathbf{\hat{x}}_{T_i}$ = $g$($\mathbf{x}_{T_i}$, $\mathbf{c}_i$)
        \STATE
        \# see~\eqref{equ:training_new}\\
        $\mathbf{x}^\prime_{T_{pred_i}}$ = $\alpha\sigma(\mathbf{x}_{T_i}, \mathcal{E}(\mathbf{c}_i))$ + $\mathbf{\tilde{x}}^\prime_{T_i}$ + $\beta\mathbf{\hat{x}}_{T_i}$
        \STATE $\mathcal{L}_i$ = $\ell(\mathbf{x}^\prime_{T_{pred_i}}, \mathbf{x}^\prime_{T_i})$
        \STATE update $\phi$
        \RETURN $\phi^*$
    \end{algorithmic}
    \label{algo:training}
\end{algorithm}

% ---------- test

\begin{algorithm}[!t]
    \caption{Inference with Noise Prompt Network}
    \small
    \begin{algorithmic}[1]
        \STATE {\bfseries Input:} Text prompt $\mathbf{c}$, the trained \textit{noise prompt network} $\phi^*$($\cdot$, $\cdot$) and the diffusion model $f$($\cdot$, $\cdot$).
        \STATE {\bfseries Output:} The golden clean image $\mathbf{x}^\prime_0$.
        \STATE Sample Gaussian noise $x_{T}$
        \STATE
        \# get the golden noise \\
        $\mathbf{x}^\prime_{T_{pred}} = \phi^*(\mathbf{x}_T, \mathbf{c})$
        \STATE
        \# standard inference pipeline \\
        $\mathbf{x}^\prime_0 = f(\mathbf{x}^\prime_{T_{pred}}, \mathbf{c})$
        \RETURN $\mathbf{x}^\prime_0$
    \end{algorithmic}
    \label{algo:inference}
\end{algorithm}
% --------- test
% --------- test
% --------- test

\begin{figure*}[!ht]
\centering
\includegraphics[width=1.\textwidth,trim={0.5cm 0cm 0.5cm 0cm},clip]{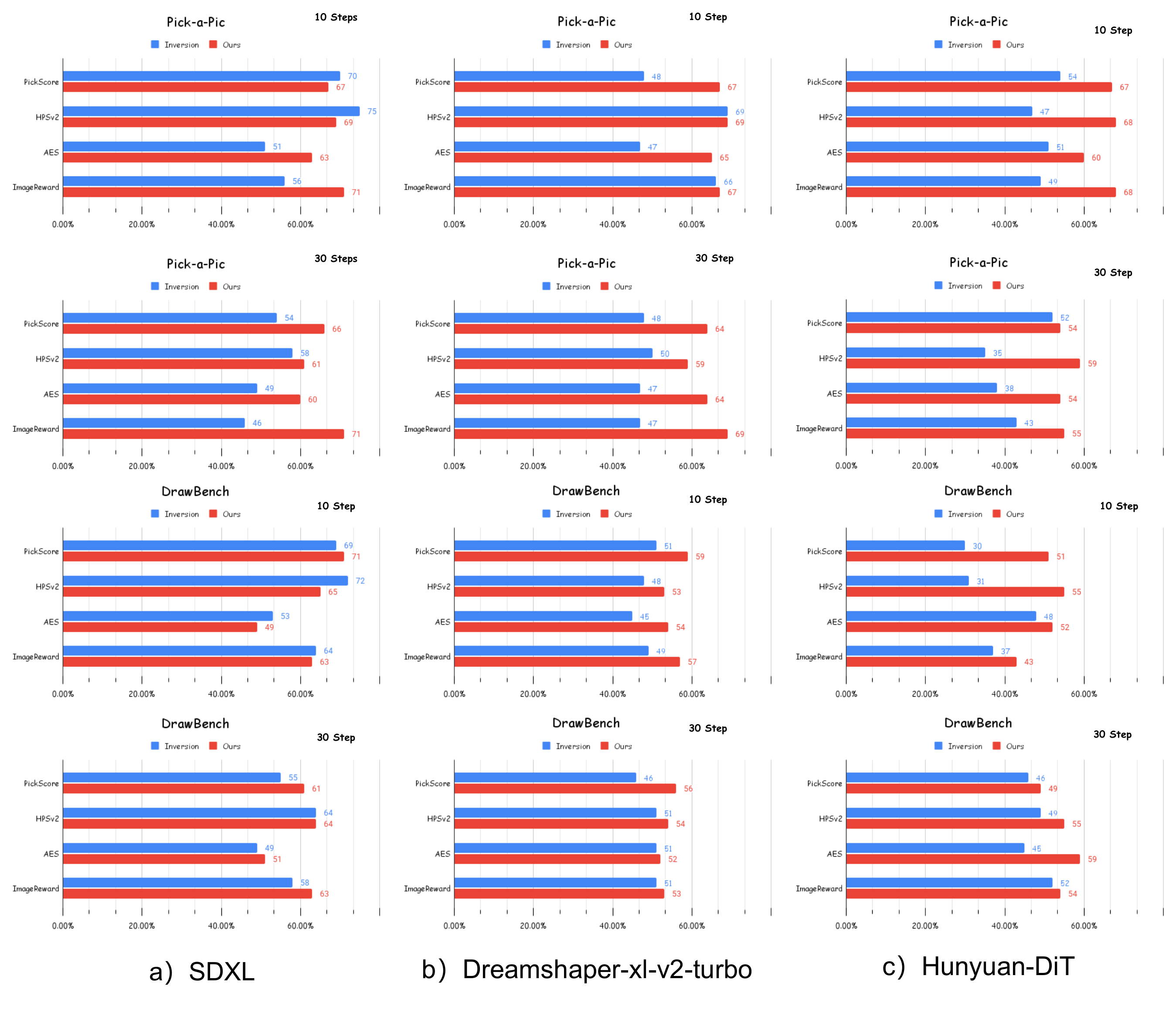}
\caption{The winning rate comparison on SDXL, DreamShaper-xl-v2-turbo and Hunyuan-DiT across \textbf{2} datasets, including DrawBench and HPD v2~(HPD). The results reveal that our NPNet is more effective in transforming random Gaussian noise into golden noises in different inference steps across different datasets.}
\label{figure:winning_rate}
\end{figure*}

\begin{figure*}[!ht]
\centering
\includegraphics[width=1.\textwidth,trim={0cm 0cm 0cm 0cm},clip]{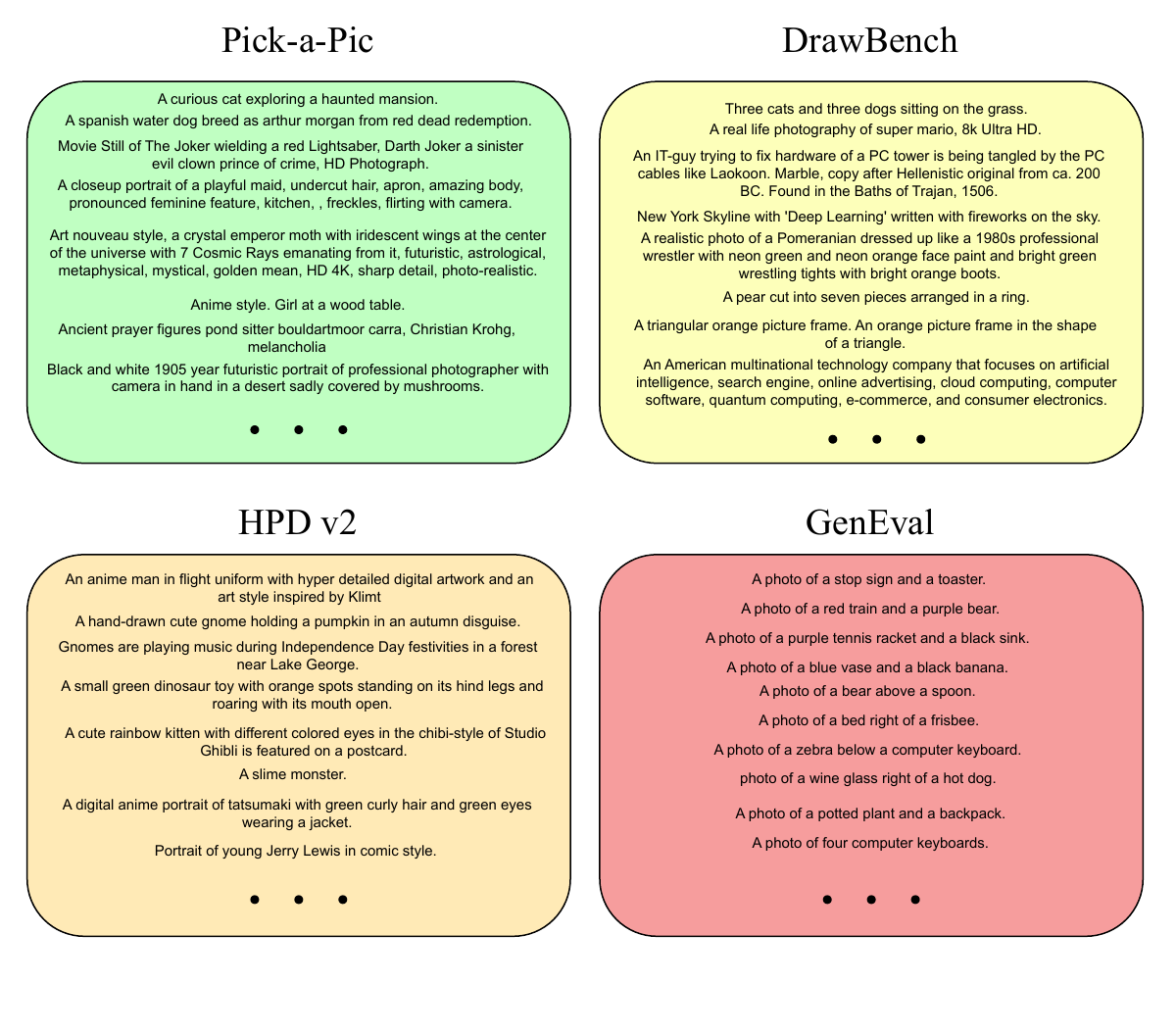}
\caption{\small Part of our test datasets. All of the training and test datasets will be released.}
\label{figure:dataset}
\end{figure*}

\begin{figure*}[!ht]
\centering
\includegraphics[width=.95\textwidth,trim={0cm 0cm 0cm 0cm},clip]{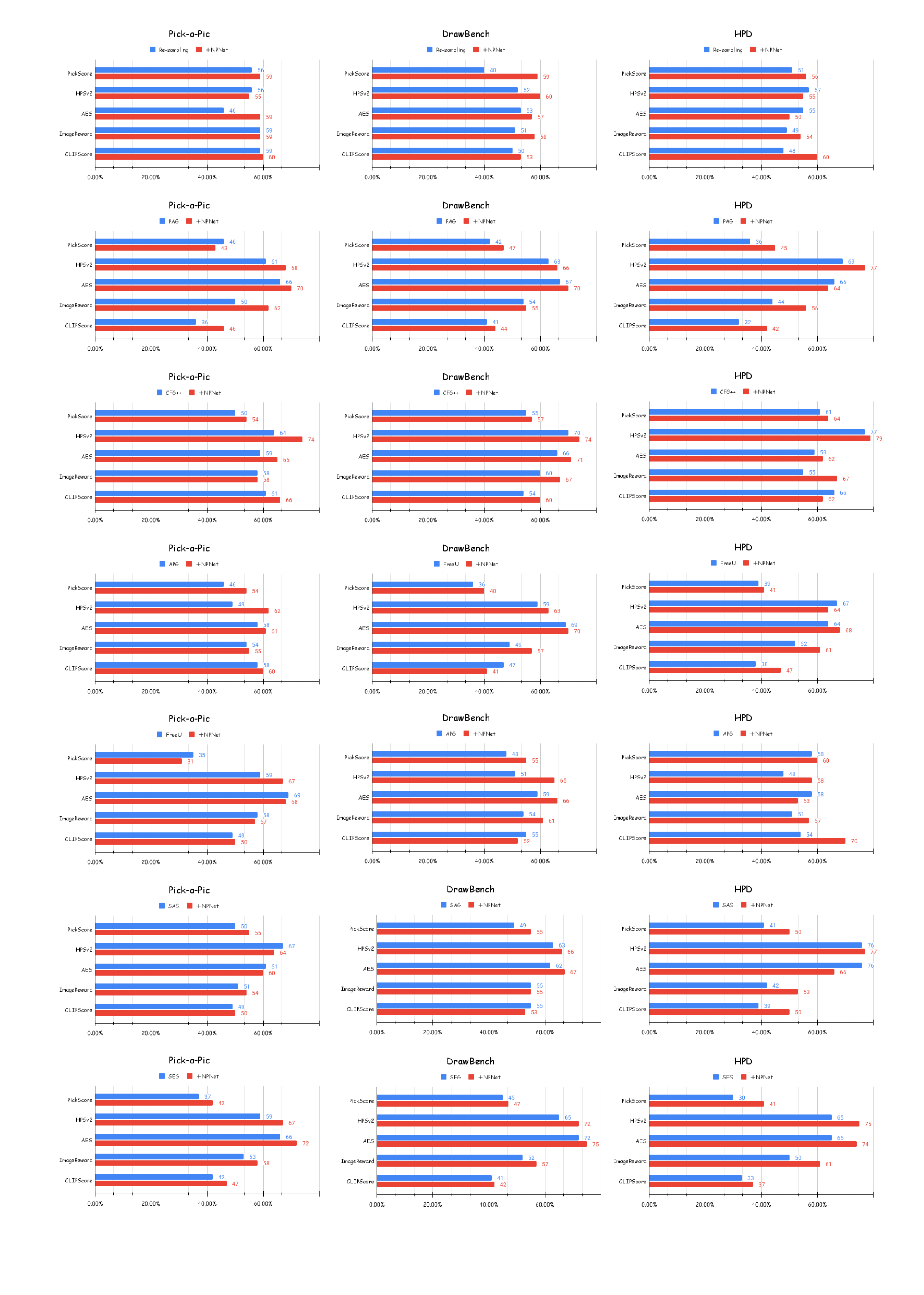}
\vspace{-0.6in}
\caption{\small Winning rate of noise optimization methods on three different datasets, using SDXL. The results demonstrate that when NPNet is used in conjunction with other noise optimization methods, it can enhance the winning rate of existing approaches to some extent.}
\label{figure:baseline_winningrate}
\end{figure*}

\begin{table*}[!ht]
\begin{center}

\caption{We combine other popular noise optimization methods with our NPNet, evaluating on three different datasets on SDXL. The experimental results indicate that when NPNet is used alongside other methods, it significantly enhances their performance, further validating the effectiveness and generalizability of our approach.}

\renewcommand\arraystretch{1.2}
\setlength{\tabcolsep}{10pt}

\resizebox{1\linewidth}{!}{%
\begin{tabular}{ccccccc}
\hline
{ Dataset}                       & { Method}      & { PickScore (↑)}                     & { HPSv2 (↑)}                         & { AES (↑)}                            & { ImageReward (↑)}                    & { CLIPscore (↑)}                      \\ \hline
{ }                              & { Standard}    & { 21.69}                         & { 28.48}                         & { 6.0373}                         & { 58.01}                          & { 0.8204}                         \\ \cline{2-7} 
{ }                              & { Re-sampling~\cite{lugmayr2022repaintinpaintingusingdenoising}} & { 21.77}                         & { 28.63}                         & { 5.9875}                         & { 64.94}                          & { 0.8327}                         \\
{ }                              & { + NPNet (ours)}       & \cellcolor[HTML]{C0C0C0}{ 21.90} & \cellcolor[HTML]{C0C0C0}{ 29.29} & \cellcolor[HTML]{C0C0C0}{ 6.1491} & \cellcolor[HTML]{C0C0C0}{ 71.09}  & \cellcolor[HTML]{C0C0C0}{ 0.8386} \\ \cline{2-7} 
{ }                              & { PAG~\cite{ahn2024selfrectifyingdiffusionsamplingperturbedattention}}         & { 21.64}                         & { 29.45}                         & { 6.2246}                         & { 55.91}                          & { 0.7966}                         \\
{ }                              & { + NPNet (ours)}       & \cellcolor[HTML]{C0C0C0}{ 21.70} & \cellcolor[HTML]{C0C0C0}{ 29.80} & \cellcolor[HTML]{C0C0C0}{ 6.2411} & \cellcolor[HTML]{C0C0C0}{ 62.03}  & \cellcolor[HTML]{C0C0C0}{ 0.8079} \\ \cline{2-7} 
{ }                              & { CFG++~\cite{chung2024cfgmanifoldconstrainedclassifierfree}}       & { 21.67}                         & { 29.54}                         & { 6.1239}                         & { 75.16}                          & { 0.8322}                         \\
{ }                              & { + NPNet (ours)}       & \cellcolor[HTML]{C0C0C0}{ 21.82} & \cellcolor[HTML]{C0C0C0}{ 29.84} & \cellcolor[HTML]{C0C0C0}{ 6.1703} & \cellcolor[HTML]{C0C0C0}{ 81.60}  & \cellcolor[HTML]{C0C0C0}{ 0.8374} \\ \cline{2-7} 
{ }                              & { APG~\cite{sadat2024eliminatingoversaturationartifactshigh}}         & { 21.69}                         & { 28.48}                         & { 6.1472}                         & { 65.86}                          & { 0.8295}                         \\
{ }                              & { + NPNet (ours)}       & \cellcolor[HTML]{C0C0C0}{ 21.86} & \cellcolor[HTML]{C0C0C0}{ 29.13} & \cellcolor[HTML]{C0C0C0}{ 6.1629} & \cellcolor[HTML]{C0C0C0}{ 76.50}  & \cellcolor[HTML]{C0C0C0}{ 0.8322} \\ \cline{2-7} 
{ }                              & { FreeU~\cite{si2023freeufreelunchdiffusion}}       & { 21.39}                         & { 29.12}                         & { 6.2134}                         & \cellcolor[HTML]{C0C0C0}{ 79.74}  & \cellcolor[HTML]{C0C0C0}{ 0.8094} \\
{ }                              & { + NPNet (ours)}       & \cellcolor[HTML]{C0C0C0}{ 21.42} & \cellcolor[HTML]{C0C0C0}{ 29.44} & \cellcolor[HTML]{C0C0C0}{ 6.2194} & { 77.42}                          & { 0.8059}                         \\ \cline{2-7} 
{ }                              & { SAG~\cite{hong2023improving}}         & { 21.70}                         & { 29.42}                         & { 6.1507}                         & { 59.61}                          & { 0.8162}                         \\
{ }                              & { + NPNet (ours)}       & \cellcolor[HTML]{C0C0C0}{ 21.79} & \cellcolor[HTML]{C0C0C0}{ 29.63} & \cellcolor[HTML]{C0C0C0}{ 6.1535} & \cellcolor[HTML]{C0C0C0}{ 65.75}  & \cellcolor[HTML]{C0C0C0}{ 0.8193} \\ \cline{2-7} 
{ }                              & { SEG~\cite{hong2024smoothed}}         & { 21.47}                         & { 29.23}                         & { 6.2118}                         & { 61.60}                          & { 0.8060}                         \\
\multirow{-15}{*}{{ Pick-a-Pic}} & { + NPNet (ours)}       & \cellcolor[HTML]{C0C0C0}{ 21.60} & \cellcolor[HTML]{C0C0C0}{ 29.72} & \cellcolor[HTML]{C0C0C0}{ 6.2253} & \cellcolor[HTML]{C0C0C0}{ 72.51}  & \cellcolor[HTML]{C0C0C0}{ 0.8077} \\ \hline
{ }                              & { Standard}    & { 22.31}                         & { 26.72}                         & { 5.5952}                         & { 62.21}                          & { 0.8077}                         \\ \cline{2-7} 
{ }                              & { Re-sampling} & { 22.30}                         & { 26.96}                         & { 5.5104}                         & { 64.07}                          & { 0.8106}                         \\
{ }                              & { + NPNet (ours)}       & \cellcolor[HTML]{C0C0C0}{ 22.47} & \cellcolor[HTML]{C0C0C0}{ 27.55} & \cellcolor[HTML]{C0C0C0}{ 5.6487} & \cellcolor[HTML]{C0C0C0}{ 75.36}  & \cellcolor[HTML]{C0C0C0}{ 0.8136} \\ \cline{2-7} 
{ }                              & { PAG}         & { 22.25}                         & { 27.81}                         & { 5.7082}                         & { 67.25}                          & { 0.7965}                         \\
{ }                              & { + NPNet (ours)}       & \cellcolor[HTML]{C0C0C0}{ 22.34} & \cellcolor[HTML]{C0C0C0}{ 28.17} & \cellcolor[HTML]{C0C0C0}{ 5.7646} & \cellcolor[HTML]{C0C0C0}{ 72.10}  & \cellcolor[HTML]{C0C0C0}{ 0.7989} \\ \cline{2-7} 
{ }                              & { CFG++}       & { 22.35}                         & { 28.28}                         & { 5.6720}                         & { 81.69}                          & { 0.8230}                         \\
{ }                              & { + NPNet (ours)}       & \cellcolor[HTML]{C0C0C0}{ 22.45} & \cellcolor[HTML]{C0C0C0}{ 28.67} & \cellcolor[HTML]{C0C0C0}{ 5.7035} & \cellcolor[HTML]{C0C0C0}{ 86.40}  & \cellcolor[HTML]{C0C0C0}{ 0.8263} \\ \cline{2-7} 
{ }                              & { APG}         & { 22.29}                         & { 26.94}                         & { 5.6180}                         & { 70.15}                          & \cellcolor[HTML]{C0C0C0}{ 0.8194}                         \\
{ }                              & { + NPNet (ours)}       & \cellcolor[HTML]{C0C0C0}{ 22.46} & \cellcolor[HTML]{C0C0C0}{ 27.73} & \cellcolor[HTML]{C0C0C0}{ 5.6445} & \cellcolor[HTML]{C0C0C0}{ 81.30}  & { 0.8182} \\ \cline{2-7} 
{ }                              & { FreeU}       & { 21.97}                         & { 27.31}                         & { 5.7379}                         & { 63.18}                          & { 0.7973}                         \\
{ }                              & { + NPNet (ours)}       & \cellcolor[HTML]{C0C0C0}{ 22.04} & \cellcolor[HTML]{C0C0C0}{ 27.87} & \cellcolor[HTML]{C0C0C0}{ 5.7464} & \cellcolor[HTML]{C0C0C0}{ 77.53}  & \cellcolor[HTML]{C0C0C0}{ 0.7998} \\ \cline{2-7} 
{ }                              & { SAG}         & { 22.30}                         & { 27.64}                         & { 5.6651}                         & { 64.56}                          & \cellcolor[HTML]{C0C0C0}{ 0.8144} \\
{ }                              & { + NPNet (ours)}       & \cellcolor[HTML]{C0C0C0}{ 22.37} & \cellcolor[HTML]{C0C0C0}{ 27.98} & \cellcolor[HTML]{C0C0C0}{ 5.6998} & \cellcolor[HTML]{C0C0C0}{ 72.18}  & { 0.8104}                         \\ \cline{2-7} 
{ }                              & { SEG}         & { 22.16}                         & { 28.08}                         & { 5.7867}                         & { 64.37}                          & { 0.7979}                         \\
\multirow{-15}{*}{{ DrawBench}}  & { + NPNet (ours)}       & \cellcolor[HTML]{C0C0C0}{ 22.30} & \cellcolor[HTML]{C0C0C0}{ 28.51} & \cellcolor[HTML]{C0C0C0}{ 5.8093} & \cellcolor[HTML]{C0C0C0}{ 74.12}  & \cellcolor[HTML]{C0C0C0}{ 0.7985} \\ \hline
{ }                              & { Standard}    & { 22.88}                         & { 29.71}                         & { 5.9985}                         & { 96.63}                          & { 0.8734}                         \\ \cline{2-7} 
{ }                              & { Re-sampling} & { 22.91}                         & { 29.78}                         & { 5.9948}                         & { 97.39}                          & { 0.8775}                         \\
{ }                              & { + NPNet (ours)}       & \cellcolor[HTML]{C0C0C0}{ 22.96} & \cellcolor[HTML]{C0C0C0}{ 30.16} & \cellcolor[HTML]{C0C0C0}{ 6.0098} & \cellcolor[HTML]{C0C0C0}{ 98.34}  & \cellcolor[HTML]{C0C0C0}{ 0.8787} \\ \cline{2-7} 
{ }                              & { PAG}         & { 22.80}                         & { 30.54}                         & \cellcolor[HTML]{C0C0C0}{ 6.1180}                         & { 90.94}                          & { 0.8536}                         \\
{ }                              & { + NPNet (ours)}       & \cellcolor[HTML]{C0C0C0}{ 22.92} & \cellcolor[HTML]{C0C0C0}{ 30.97} & { 6.1091} & \cellcolor[HTML]{C0C0C0}{ 106.24} & \cellcolor[HTML]{C0C0C0}{ 0.8584} \\ \cline{2-7} 
{ }                              & { CFG++}       & { 23.03}                         & { 30.94}                         &\cellcolor[HTML]{C0C0C0} { 6.0269}                         & { 107.70}                         & { 0.8875}                         \\
{ }                              & { + NPNet (ours)}       & \cellcolor[HTML]{C0C0C0}{ 23.08} & \cellcolor[HTML]{C0C0C0}{ 31.07} & \cellcolor[HTML]{C0C0C0}{ 6.0533} & \cellcolor[HTML]{C0C0C0}{ 109.06} & \cellcolor[HTML]{C0C0C0}{ 0.8920} \\ \cline{2-7} 
{ }                              & { APG}         & { 22.94}                         & { 29.64}                         & { 6.0572}                         & { 97.90}                          & { 0.8836}                         \\
{ }                              & { + NPNet (ours)}       & \cellcolor[HTML]{C0C0C0}{ 23.06} & \cellcolor[HTML]{C0C0C0}{ 30.40} & { 6.0232} & \cellcolor[HTML]{C0C0C0}{ 111.55} & \cellcolor[HTML]{C0C0C0}{ 0.8945} \\ \cline{2-7} 
{ }                              & { FreeU}       & { 22.72}                         & { 30.52}                         & { 6.0907}                         & { 97.03}                          & { 0.8580}                         \\
{ }                              & { + NPNet (ours)}       & \cellcolor[HTML]{C0C0C0}{ 22.73} & \cellcolor[HTML]{C0C0C0}{ 30.83} & \cellcolor[HTML]{C0C0C0}{ 6.1148} & \cellcolor[HTML]{C0C0C0}{ 102.77} & \cellcolor[HTML]{C0C0C0}{ 0.8631} \\ \cline{2-7} 
{ }                              & { SAG}         & { 22.86}                         & { 30.37}                         & \cellcolor[HTML]{C0C0C0}{ 6.0680}                         & { 93.62}                          & { 0.8670}                         \\
{ }                              & { + NPNet (ours)}       & \cellcolor[HTML]{C0C0C0}{ 22.92} & \cellcolor[HTML]{C0C0C0}{ 30.71} &{ 6.0177} & \cellcolor[HTML]{C0C0C0}{ 97.20}  & \cellcolor[HTML]{C0C0C0}{ 0.8748} \\ \cline{2-7} 
{ }                              & { SEG}         & { 22.74}                         & { 30.47}                         & { 6.0841}                         & { 91.86}                          & { 0.8569}                         \\
\multirow{-15}{*}{{ HPD}}        & { + NPNet (ours)}       & \cellcolor[HTML]{C0C0C0}{ 22.80} & \cellcolor[HTML]{C0C0C0}{ 30.74} & \cellcolor[HTML]{C0C0C0}{ 6.1160} & \cellcolor[HTML]{C0C0C0}{ 104.16} & \cellcolor[HTML]{C0C0C0}{ 0.8591} \\
\bottomrule
\end{tabular}
}
\label{table:optim_baseline}
\end{center}
\end{table*}

% \begin{figure*}[!ht]
% \centering
% \includegraphics[width=1.\textwidth,trim={0cm 0cm 0cm 0cm},clip]{}
% \caption{Visualization about the similarities between the singular vectors of $x_T$ and $x^\prime_T$.}
% \label{figure:eigenvector}
% \end{figure*}

% \section{Visualizations of Generated Images}
% \label{appendix:sec:visualization}

\vspace{-0.1in}
\begin{figure*}[!ht]
\centering
\includegraphics[width=1.0\textwidth,trim={0cm 0cm 0cm 0cm},clip]{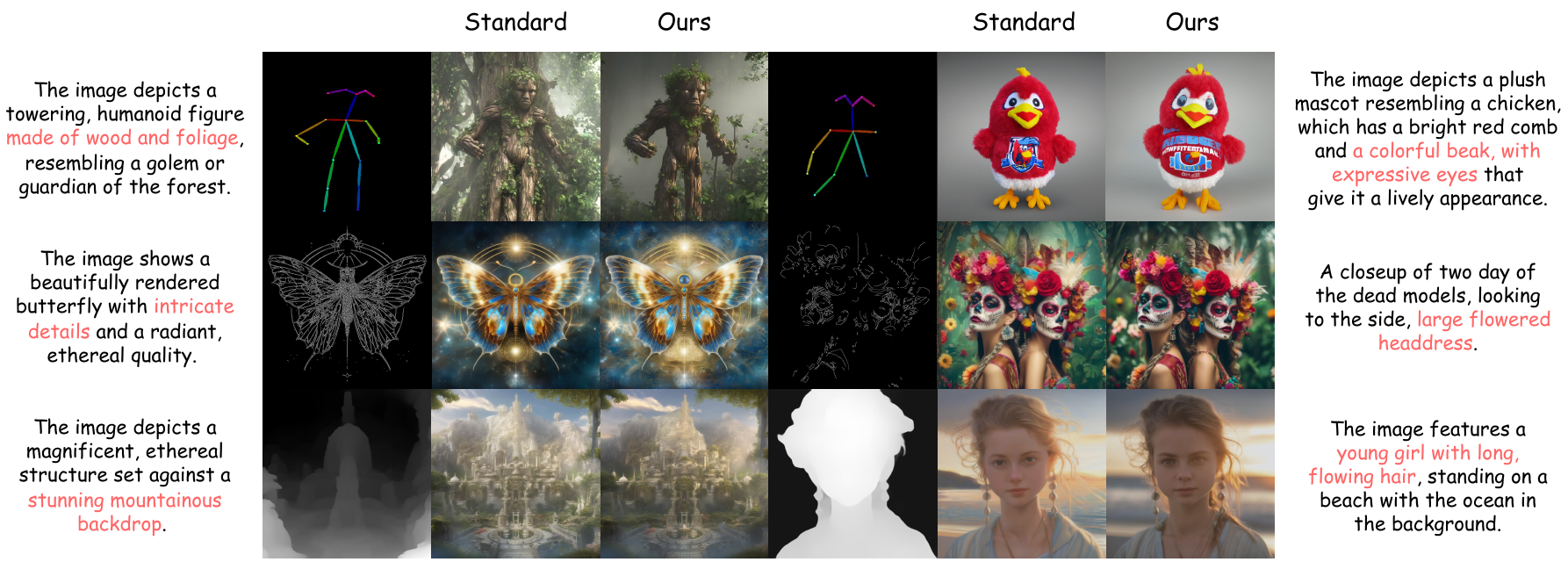}
\caption{ControlNet visualization with our NPNet on SDXL, including conditions like openpose, canny and depth. \textit{Middle} is the standard method, and \textit{right} is our result. Our NPNet can be directly applied to the corresponding downstream tasks without requiring any modifications to the pipeline of the T2I diffusion model.}
\label{figure:controlnet}
\end{figure*}

\begin{figure*}[!ht]
\centering
\includegraphics[width=.9\textwidth]{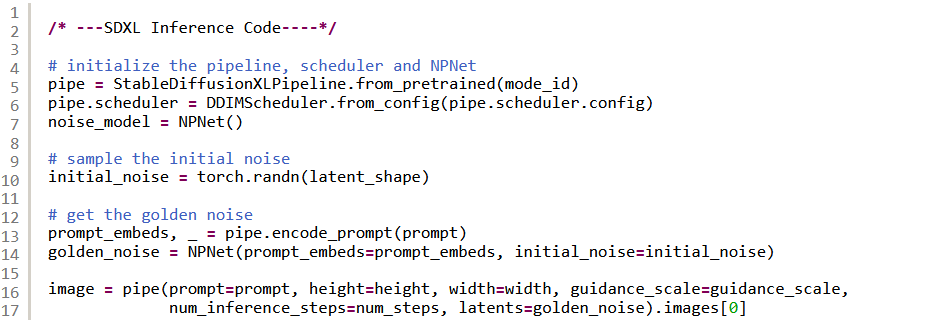}
\caption{Example inference code with NPNet on SDXL. Our NPNet operates as a plug-and-play module, which can be easily implemented.}
\label{figure:pesudo_code}
\end{figure*}

\section{More Visualization Results}
We present more visualization results on different dataset, different diffusion models and with different noise optimization methods on Fig.~\ref{figure:opening_new}, ~\ref{figure:baseline_comparsion}, ~\ref{figure:lcm_vis}, ~\ref{figure:pcm_vis}, ~\ref{figure:sdxl-lighting_vis}, ~\ref{figure:re-sampling}, ~\ref{figure:pag}, ~\ref{figure:cfgpp}, ~\ref{figure:apg}, ~\ref{figure:freeu}, ~\ref{figure:sag} and ~\ref{figure:seg}.

\begin{figure*}[!ht]
\centering
\includegraphics[width=.9\textwidth]{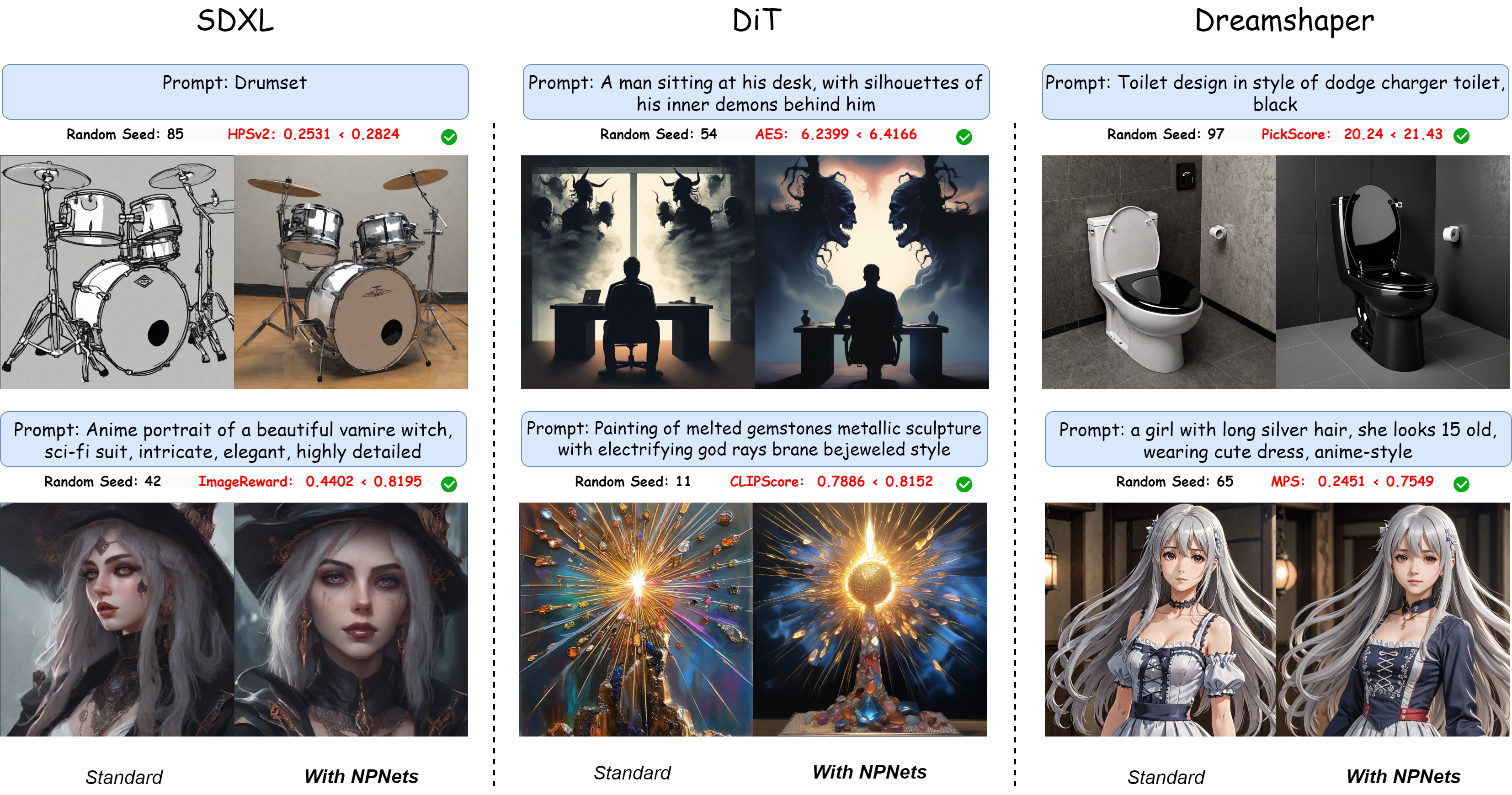}
\vspace{-0.1in}
\caption{We visualized images synthesized by 3 different diffusion models and evaluated them using 6 human preference metrics. Images for each prompt are synthesized using the same random seed. These images with NPNet demonstrated a noticeable improvement in overall quality, aesthetic style, and semantic faithfulness, along with numerical improvements across all six metrics. More importantly, our NPNet is applicable to various diffusion models, showcasing strong generalization performance with broad application potential.}
\label{figure:opening_new}
\end{figure*}

\begin{figure*}[!ht]
\centering
\includegraphics[width=.9\textwidth,trim={0cm 0cm 0cm 0cm},clip]{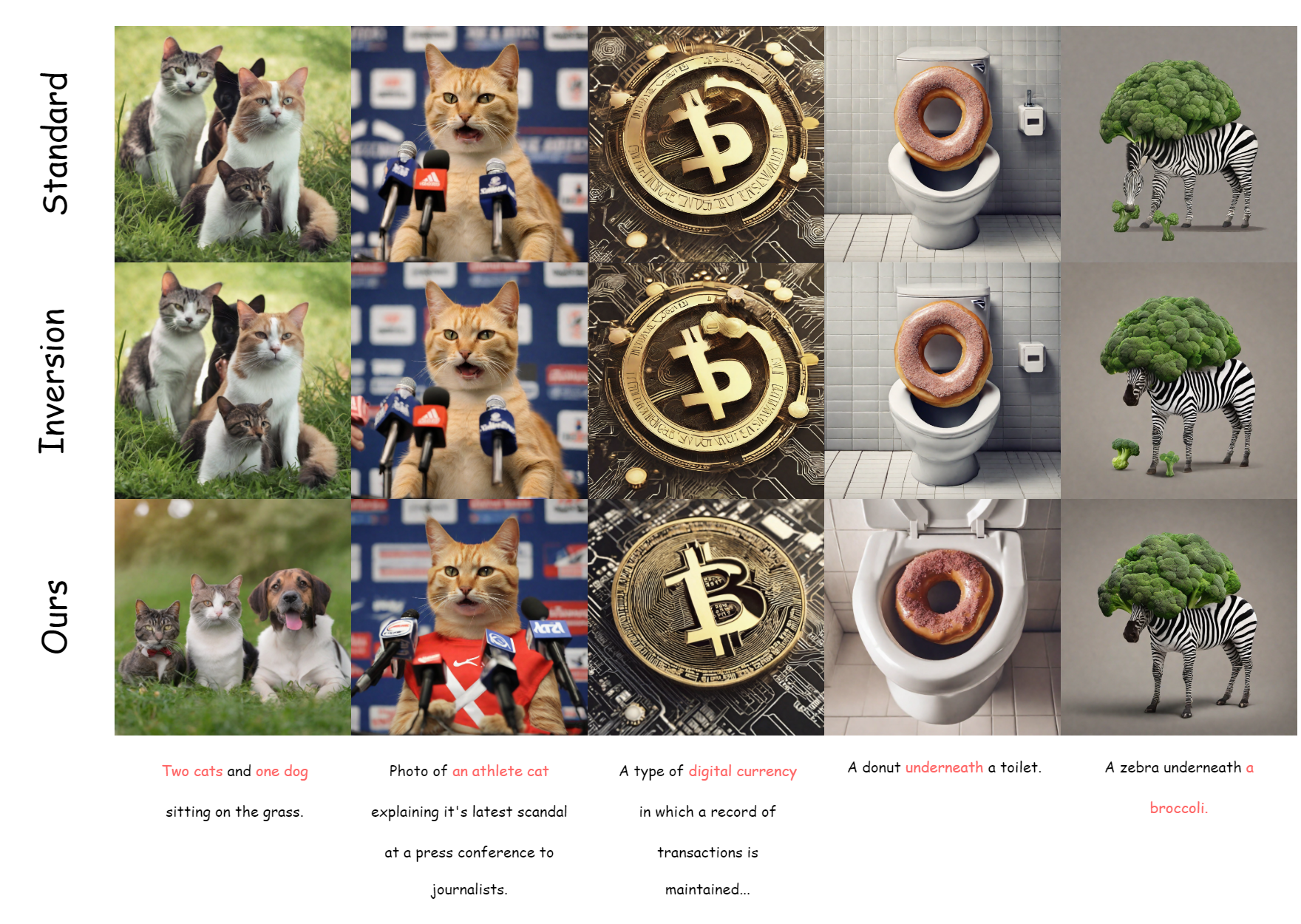}
\caption{Visualization results about different methods on SDXL.}
\label{figure:baseline_comparsion}
\end{figure*}

\begin{figure*}[!ht]
\centering
\includegraphics[width=1.\textwidth,trim={0cm 0cm 0cm 0cm},clip]{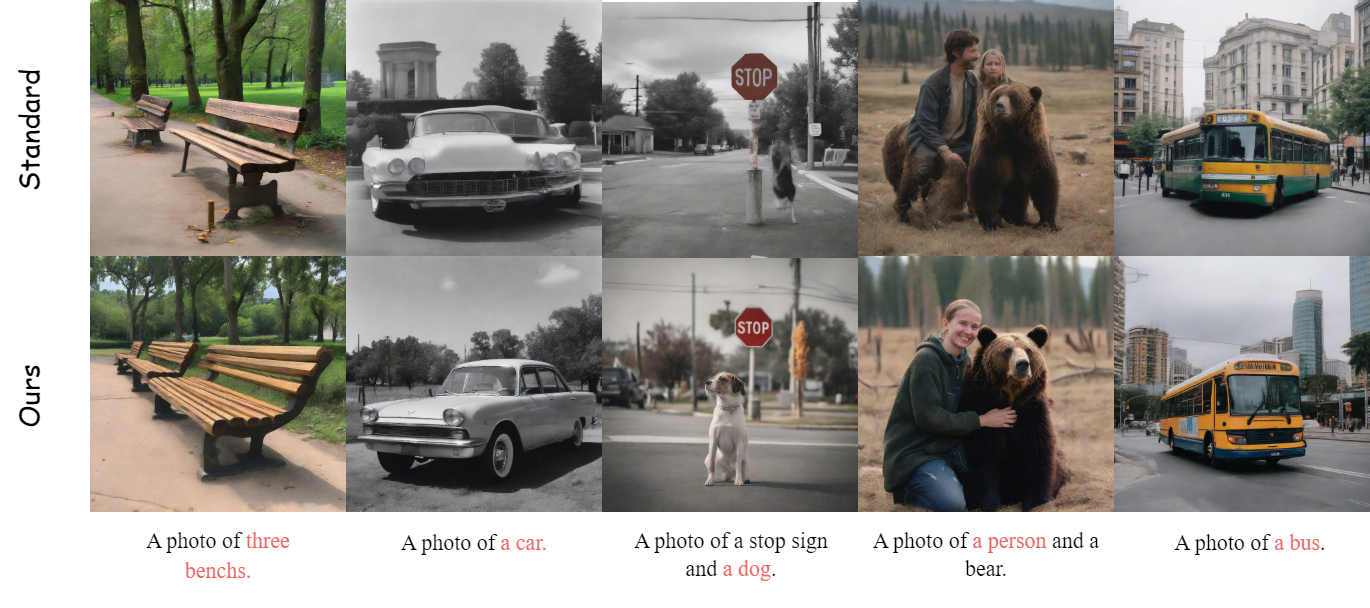}
\caption{\small Qualitative comparison on LCM.}
\label{figure:lcm_vis}
\end{figure*}

\begin{figure*}[!ht]
\centering
\includegraphics[width=1.\textwidth,trim={0cm 0cm 0cm 0cm},clip]{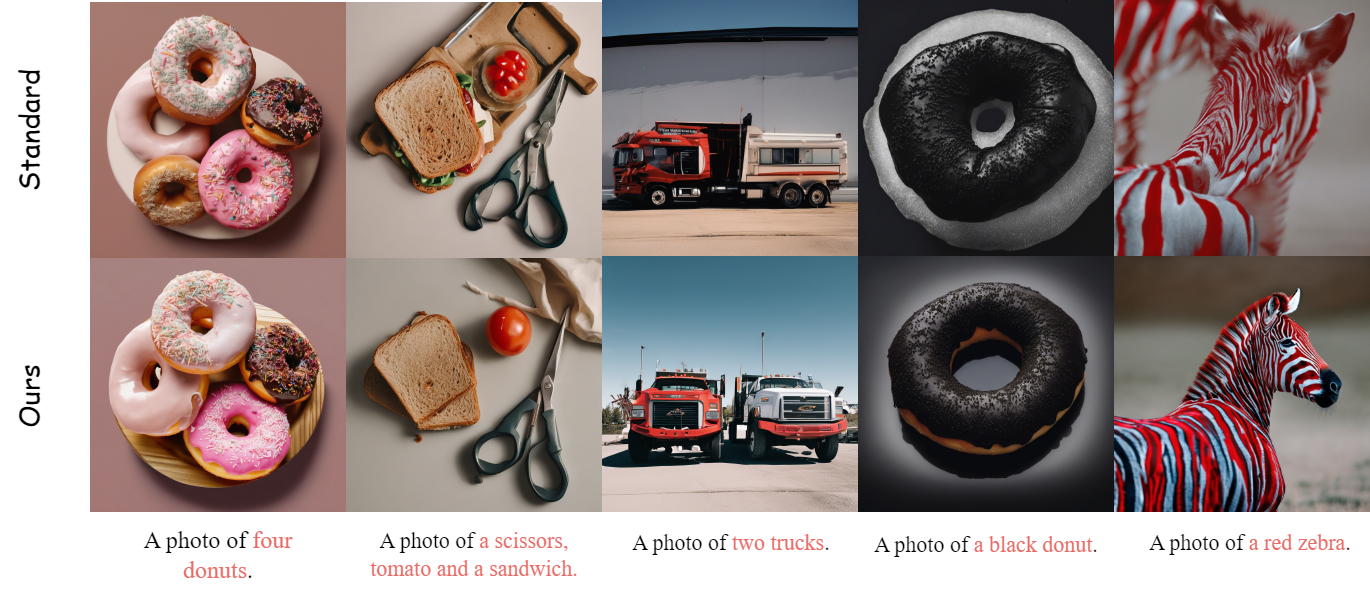}
\caption{\small Qualitative comparison on PCM.}
\label{figure:pcm_vis}
\end{figure*}

\begin{figure*}[!ht]
\centering
\includegraphics[width=1.\textwidth,trim={0cm 0cm 0cm 0cm},clip]{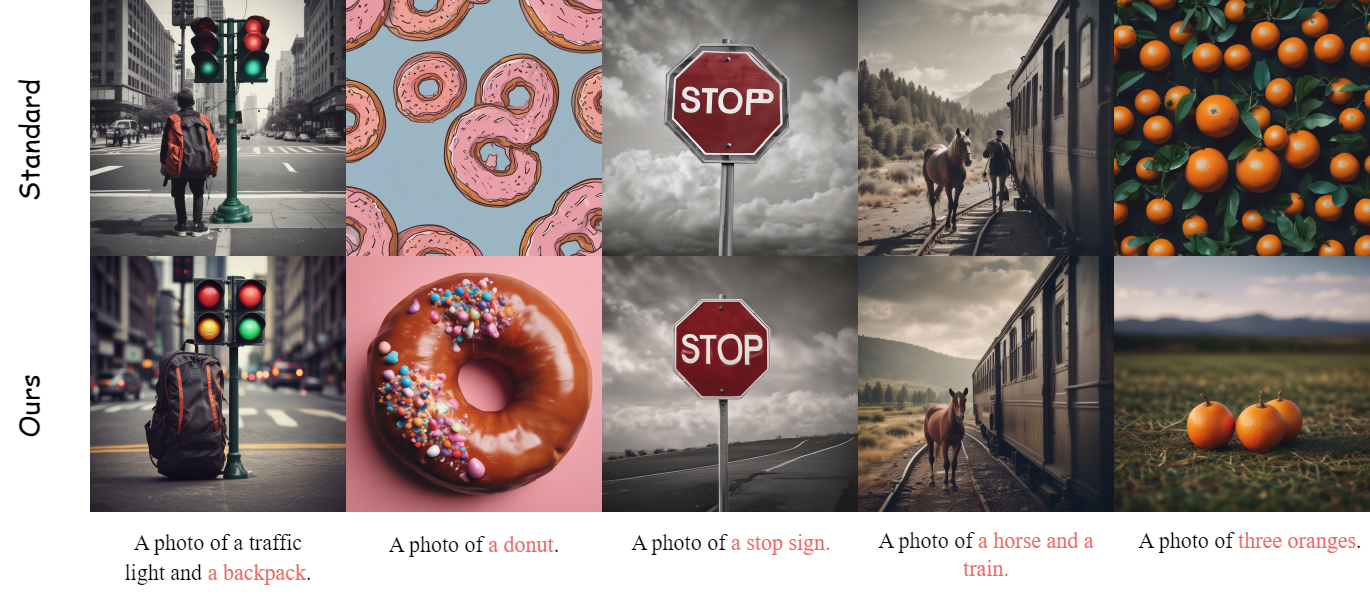}
\caption{\small Qualitative comparison on SDXL-Lightning.}
\label{figure:sdxl-lighting_vis}
\end{figure*}

\begin{figure*}[!ht]
\centering
\includegraphics[width=1.\textwidth,trim={0cm 0cm 0cm 0cm},clip]{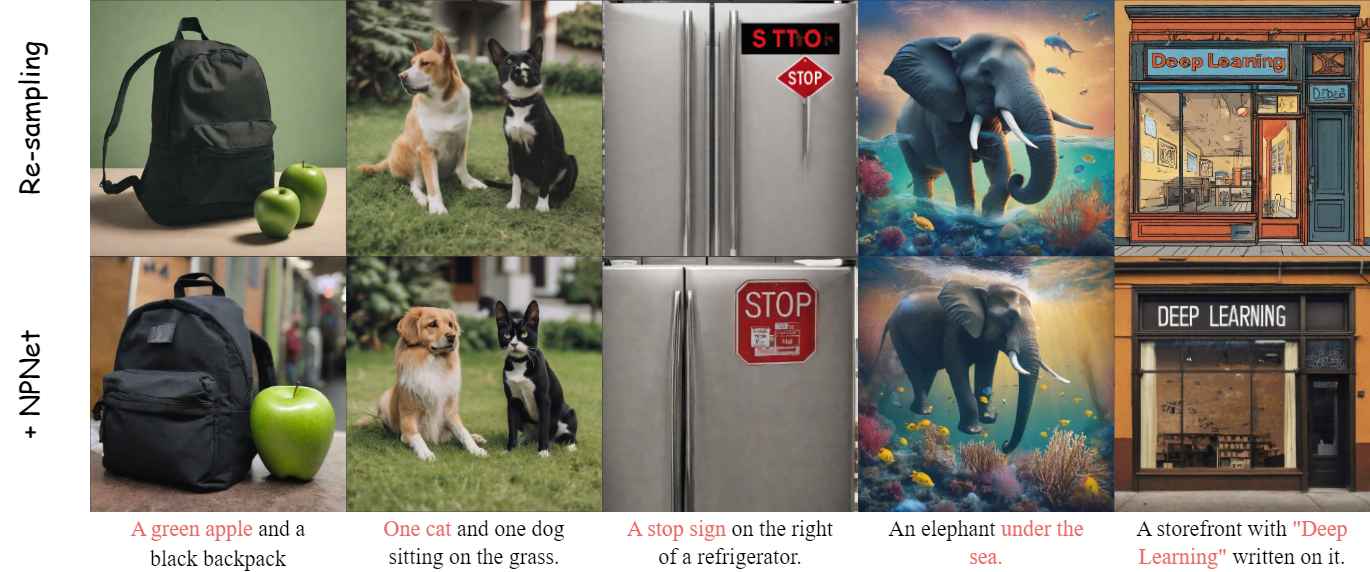}
\caption{\small Qualitative comparison on Re-sampling.}
\label{figure:re-sampling}
\end{figure*}

\begin{figure*}[!ht]
\centering
\includegraphics[width=1.\textwidth,trim={0cm 0cm 0cm 0cm},clip]{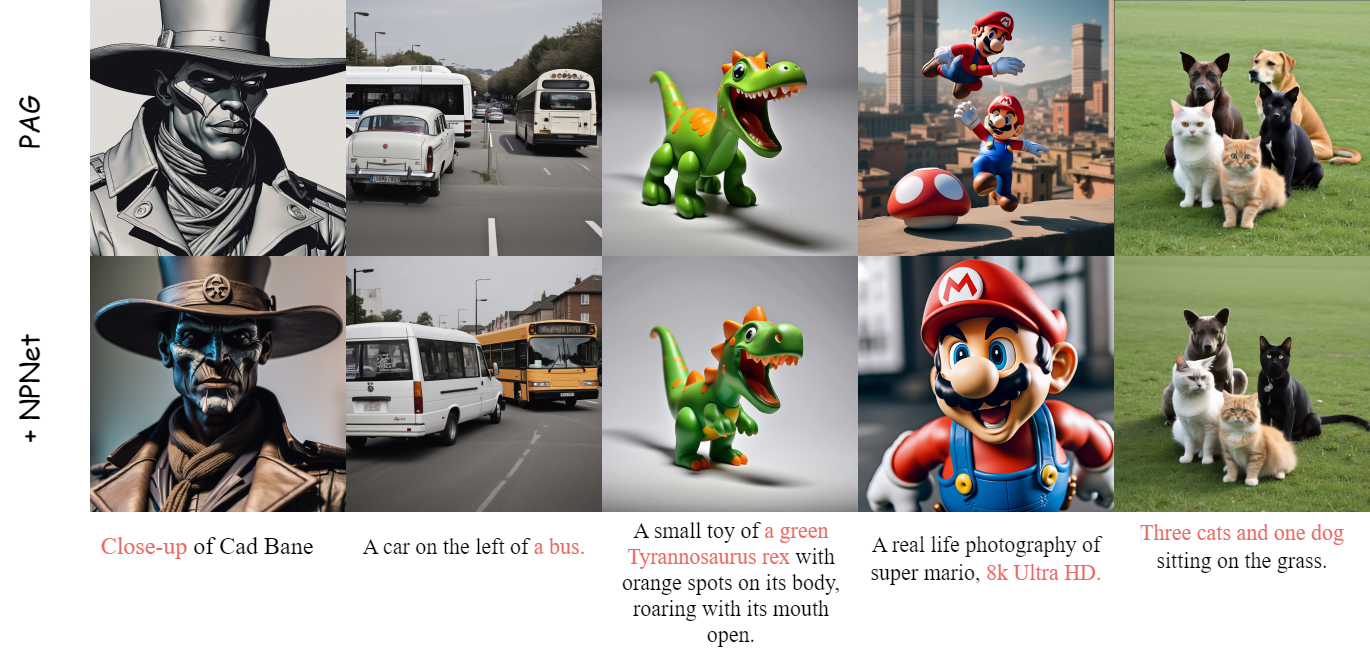}
\caption{\small Qualitative comparison on PAG.}
\label{figure:pag}
\end{figure*}

\begin{figure*}[!ht]
\centering
\includegraphics[width=1.\textwidth,trim={0cm 0cm 0cm 0cm},clip]{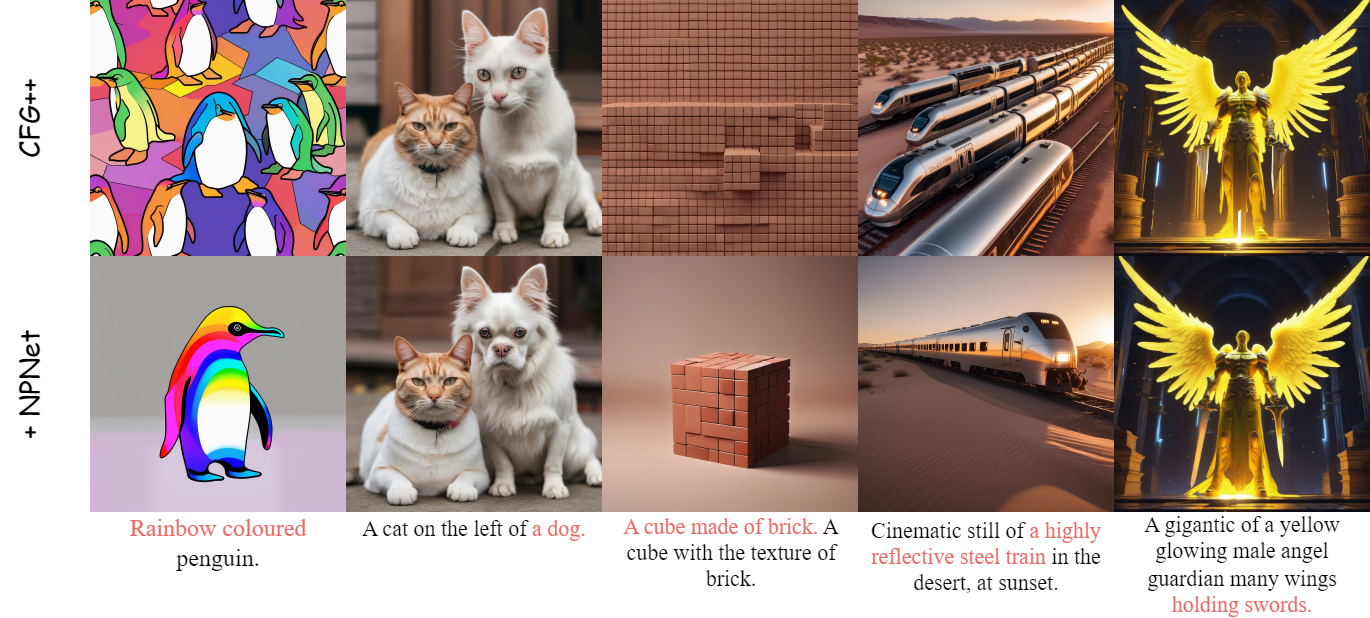}
\caption{\small Qualitative comparison on CFG++.}
\label{figure:cfgpp}
\end{figure*}

\begin{figure*}[!ht]
\centering
\includegraphics[width=1.\textwidth,trim={0cm 0cm 0cm 0cm},clip]{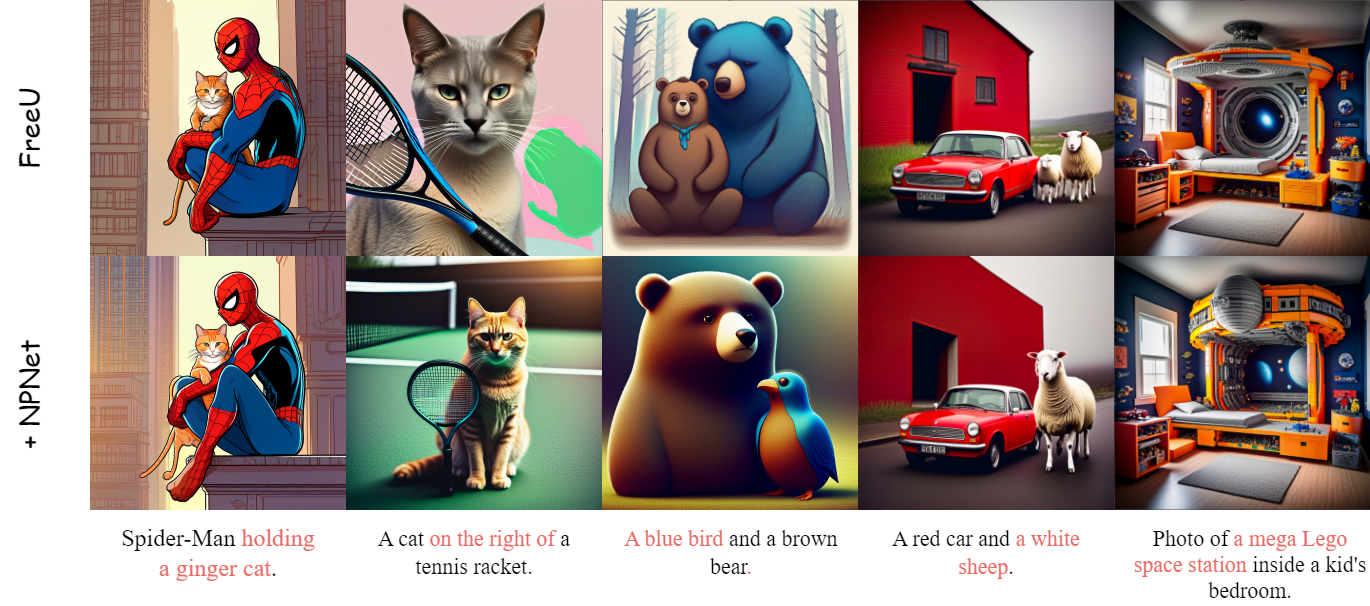}
\caption{\small Qualitative comparison on FreeU.}
\label{figure:freeu}
\end{figure*}

\begin{figure*}[!ht]
\centering
\includegraphics[width=1.\textwidth,trim={0cm 0cm 0cm 0cm},clip]{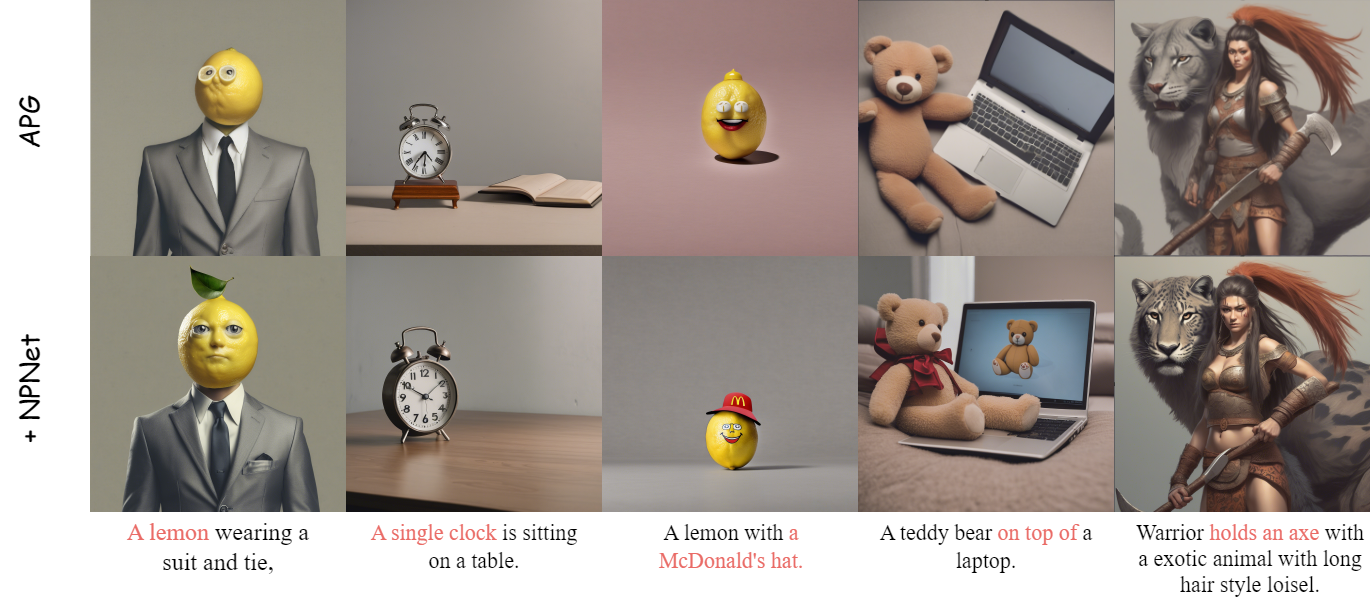}
\caption{\small Qualitative comparison on APG.}
\label{figure:apg}
\end{figure*}

\begin{figure*}[!ht]
\centering
\includegraphics[width=1.\textwidth,trim={0cm 0cm 0cm 0cm},clip]{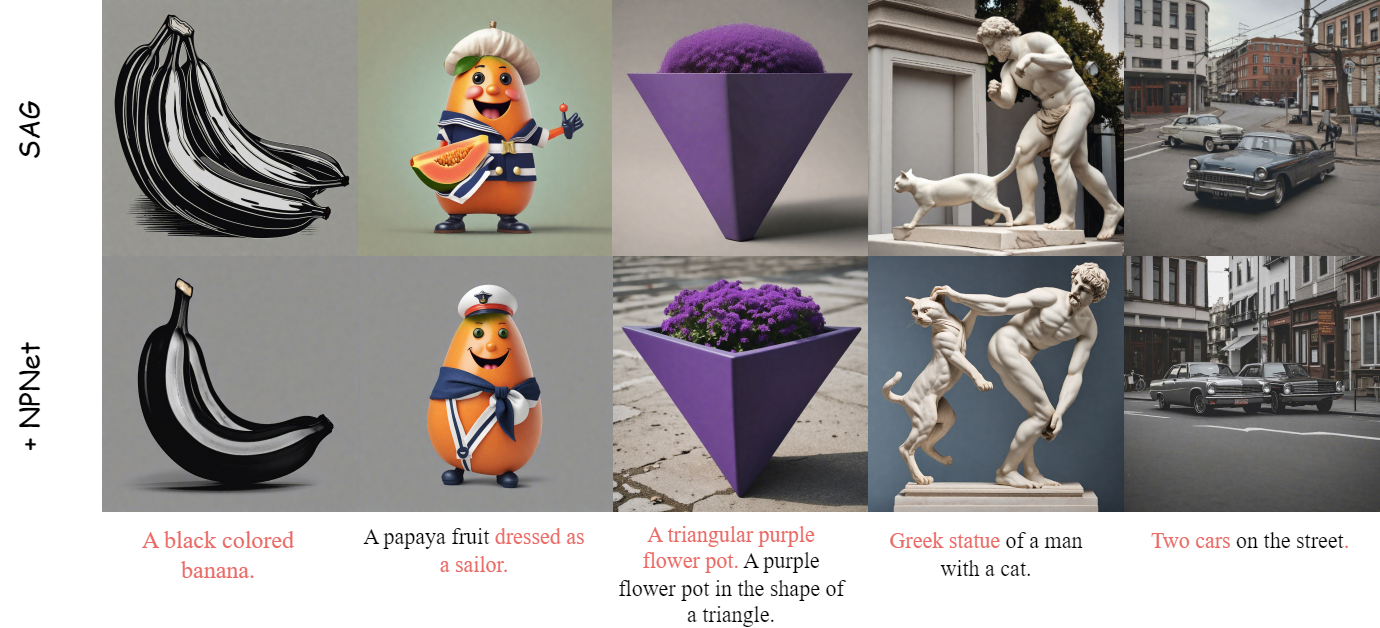}
\caption{\small Qualitative comparison on SAG.}
\label{figure:sag}
\end{figure*}

\begin{figure*}[!ht]
\centering
\includegraphics[width=1.\textwidth,trim={0cm 0cm 0cm 0cm},clip]{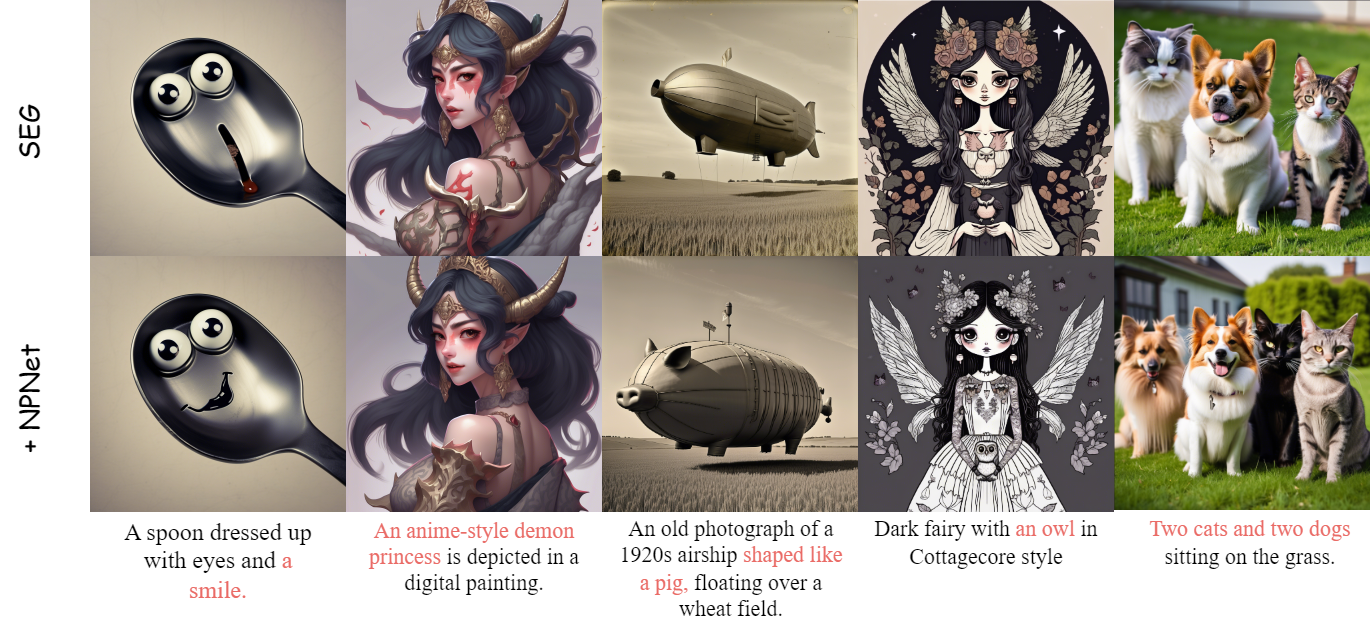}
\caption{\small Qualitative comparison on SEG.}
\label{figure:seg}
\end{figure*}

\end{document}